\DeclareMathOperator*{\argmax}{arg\,max}
\DeclareMathOperator*{\argmin}{arg\,min}
\def \cU {\mathcal{U}}
\def \cD {\mathcal{D}}
\def \cN {\mathcal{N}}
\def \bU {\mathbf{U}}
\def \cH {\mathcal{H}}
\def \cF {\mathcal{F}}
\def \bA {\mathbf{A}}
\def \bx {\mathbf{x}}
\def \bX {\mathbf{X}}
\def \by {\mathbf{y}}
\def \cA {\mathcal{A}}
\def \bb {\mathbf{b}}
\def \bI {\mathbf{I}}
\def \bR {\mathbb{R}}
\def \bE {\mathbb{E}}
\def \bbE {\mathbf{E}}
\def \cB {\mathcal{B}}
\def \cS {\mathcal{S}}
\def \bS {\mathbf{S}}
\def \bK {\mathbf{K}}
\def \bZ {\mathbf{Z}}
\def \bz {\mathbf{z}}
\def \bP {\mathbf{P}}
\def \bPhi {\mathbf{\Phi}}
\def \btheta {\mathbf{\theta}}
\newcommand{\algone}{{DisKernelUCB}}
\newcommand{\algtwo}{{Approx-DisKernelUCB}}
\begin{document}

\title{Communication Efficient Distributed Learning for \\Kernelized Contextual Bandits}

\author{\name Chuanhao Li \email cl5ev@virginia.edu \\
       \addr Department of Computer Science\\
       University of Virginia\\
      Charlottesville, VA 22903, USA\\
        \AND
       \name Huazheng Wang \email huazheng.wang@oregonstate.edu \\
       \addr School of Electrical Engineering and Computer Science\\
       Oregon State University\\
      Princeton, NJ 08544, USA\\
        \AND
       \name Mengdi Wang \email mengdiw@princeton.edu \\
       \addr Department of Electrical and Computer Engineering\\
       Princeton University\\
      Princeton, NJ 08544, USA\\
       \AND
       \name Hongning Wang \email hw5x@virginia.edu \\
       \addr Department of Computer Science\\
       University of Virginia\\
      Charlottesville, VA 22903, USA
       }


\maketitle

\begin{abstract}
We tackle the communication efficiency challenge of learning kernelized contextual bandits in a distributed setting. Despite the recent advances in communication-efficient distributed bandit learning, existing solutions are restricted to simple models like multi-armed bandits and linear bandits, which hamper their practical utility. 
In this paper, instead of assuming the existence of a linear reward mapping from the features to the expected rewards, we consider non-linear reward mappings, by letting agents collaboratively search in a reproducing kernel Hilbert space (RKHS). This introduces significant challenges in communication efficiency as distributed kernel learning requires the transfer of raw data, leading to a communication cost that grows linearly w.r.t. time horizon $T$. We addresses this issue by equipping all agents to communicate via a common Nystr\"{o}m embedding that gets updated adaptively as more data points are collected. We rigorously proved that our algorithm can attain sub-linear rate in both regret and communication cost.
\end{abstract}

\begin{keywords}
  kernelized contextual bandit, distributed learning, communication efficiency
\end{keywords}

\section{Introduction} \label{sec:intro}
Contextual bandit algorithms have been widely used for a variety of real-world applications, including recommender systems \citep{li2010contextual}, display advertisement \citep{li2010exploitation} and clinical trials \citep{durand2018contextual}. While most existing bandit solutions assume a centralized setting (i.e., all the data reside in and all the actions are taken by a central server), 
there is 
increasing research effort on distributed bandit learning lately \citep{wang2019distributed,dubey2020differentially,shi2021federated,huang2021federated,li2022asynchronous},
where $N$ clients, under the coordination of a central server, collaborate 
to minimize the overall cumulative regret incurred over a finite time horizon $T$. In many distributed application scenarios, communication is the main bottleneck, e.g., communication in a network of mobile devices can be slower than local computation by several orders of magnitude \citep{huang2013depth}.
Therefore, it is vital for distributed bandit learning algorithms to attain sub-linear rate (w.r.t. time horizon $T$) in both cumulative regret and communication cost.

However, prior works in this line of research are restricted to linear models \citep{wang2019distributed}, which could oversimplify the problem and thus leads to inferior performance in practice. In centralized setting, kernelized bandit algorithms, e.g., KernelUCB \citep{valko2013finite} and IGP-UCB \citep{chowdhury2017kernelized}, are proposed to address this issue by modeling the unknown reward mapping as a non-parametric function lying in a reproducing kernel Hilbert space (RKHS), i.e., the expected reward is linear w.r.t. an action feature map of possibly infinite dimensions.
Despite the strong modeling capability of kernel method, collaborative exploration in the RKHS gives rise to additional challenges in designing a communication efficient bandit algorithm.
Specifically, unlike distributed linear bandit where the clients can simply communicate the $d \times d$ sufficient statistics \citep{wang2019distributed}, where $d$ is the dimension of the input feature vector, the \emph{joint kernelized estimation} of the unknown reward function requires communicating either 1) the $p \times p$ sufficient statistics in the RKHS, where $p$ is the dimension of the RKHS that is possibly infinite, or 2) the set of input feature vectors that grows linearly w.r.t. $T$. Neither of them is practical due to the huge communication cost. 

In this paper, we propose the first communication efficient algorithm for distributed kernel bandits, which tackles the aforementioned challenge via a low-rank approximation of the empirical kernel matrix. 
In particular, we extended the Nystr\"{o}m method \citep{nystrom1930praktische} to distributed learning for kernelized contextual bandits. In this solution, all clients first project their local data to a finite RKHS spanned by a common dictionary, i.e., a small subset of the original dataset, and then they only need to communicate the embedded statistics for collaborative exploration. To ensure effective regret reduction after each communication round, as well as ensuring the dictionary remains representative for the entire distributed dataset throughout the learning process, the frequency of dictionary update and synchronization of embedded statistics is controlled by measuring the amount of new information each client has gained since last communication.
We rigorously prove that the proposed algorithm incurs an $O(N^{2} \gamma_{NT}^{3})$ communication cost, where $\gamma_{NT}$ is the maximum information gain that is known to be $O\bigl(\log (NT) \bigr)$ for kernels with exponentially decaying eigenvalues, which includes the most commonly used Gaussian kernel, while attaining the optimal $O(\sqrt{NT}\gamma_{NT})$ cumulative regret.
\section{Related Works} \label{sec:related_works}
To balance exploration and exploitation in stochastic linear contextual bandits, LinUCB algorithm \citep{li2010contextual,abbasi2011improved} is commonly used, which selects arm optimistically w.r.t. a constructed confidence set on the unknown linear reward function.
By using kernels and Gaussian processes, studies in \cite{srinivas2009gaussian,valko2013finite,chowdhury2017kernelized} further extend UCB algorithms to non-parametric reward functions in RKHS, i.e., the feature map associated with each arm is possibly infinite.

Recent years have witnessed increasing research efforts in distributed bandit learning, i.e., multiple agents collaborate in pure exploration \cite{hillel2013distributed,tao2019collaborative,du2021collaborative}, or regret minimization \cite{shi2021federated,wang2019distributed,li2022asynchronous}.
They mainly differ in the relations of learning problems solved by the agents (i.e., homogeneous vs., heterogeneous) and the type of communication network (i.e., peer-to-peer (P2P) vs., star-shaped).
Most of these works assume linear reward functions, and the clients communicate by transferring the $O(d^{2})$ sufficient statistics.
Korda et al. \cite{korda2016distributed} considered a peer-to-peer (P2P) communication network and assumed that the clients form clusters, i.e., each cluster is associated with a unique bandit problem. 
Huang et al. \cite{huang2021federated} considered a star-shaped communication network as in our paper, but their proposed phase-based elimination algorithm only works in the fixed arm set setting.
The closest works to ours are \citep{wang2019distributed,dubey2020differentially,li2022asynchronous}, which proposed event-triggered communication protocols to obtain sub-linear communication cost over time for distributed linear bandits with a time-varying arm set.
In comparison, distributed kernelized contextual bandits still remain under-explored. The only existing work in this direction \citep{dubey2020kernel} considered heterogeneous agents, where each agent is associated with an additional feature describing the task similarity between agents. However, they assumed a local communication setting, where the agent immediately shares the new raw data point to its neighbors after each interaction, and thus the communication cost is still linear over time. 

Another closely related line of works is kernelized bandits with approximation, where Nystr\"{o}m method is adopted to improve computation efficiency in a centralized setting. Calandriello et al. \cite{calandriello2019gaussian} proposed an algorithm named BKB, which uses Ridge Leverage Score sampling (RLS) to re-sample a new dictionary from the updated dataset after each interaction with the environment. A recent work by Zenati et al. \cite{zenati2022efficient} further improved the computation efficiency of BKB by adopting an online sampling method to update the dictionary. However, both of them updated the dictionary at each time step to ensure the dictionary remains representative w.r.t. the growing dataset, and therefore are not applicable to our problem. This is because the dataset is stored cross clients in a distributed manner, and projecting the dataset to the space spanned by the new dictionary requires communication with all clients, which is prohibitively expensive in terms of communication. Calandriello et al. \cite{calandriello2020near} also proposed a variant of BKB, named BBKB, for batched Gaussian process optimization. BBKB only needs to update the dictionary occasionally according to an adaptive schedule, and thus partially addresses the issue mentioned above. 
However, as BBKB works in a centralized setting, their adaptive schedule can be computed based on the whole batch of data, while in our decentralized setting, each client can only make the update decision according to the data that is locally available. Moreover, in BBKB, all the interactions are based on a fixed model estimation over the whole batch, which is mentioned in their Appendix A.4 as a result of an inherent technical difficulty. In comparison, our proposed method effectively addresses this difficulty with improved analysis, and thus allows each client to utilize newly collected data to update its model estimation on the fly.

\section{Preliminaries}
In this section, we first formulate the problem of distributed kernelized contextual bandits.
Then, as a starting point,
we propose and analyze a naive UCB-type algorithm for distributed kernelized contextual bandit problem, named \algone{}. This demonstrates the challenges in designing a communication efficient algorithm for this problem, and also lays down the foundation for further improvement on communication efficiency in Section \ref{sec:method}.

\subsection{Distributed Kernelized Contextual Bandit Problem} \label{subsec:problem_formulation}
Consider a learning system with 1) $N$ clients that are responsible for taking actions and receiving feedback from the environment, and 2) a central server that coordinates the communication among the clients. The clients cannot directly communicate with each other, but only with the central server, i.e., a star-shaped communication network.
Following prior works \citep{wang2019distributed,dubey2020differentially}, we assume the $N$ clients interact with the environment in a round-robin manner for a total number of $T$ rounds. 

Specifically, at round $l \in [T]$, each client $i \in [N]$ chooses an arm $\bx_{t}$ from a candidate set $\cA_{t}$, and then receives the corresponding reward feedback $y_{t}=f(\bx_{t}) + \eta_{t} \in \bR$, where the subscript $t:=N (l-1) + i$ indicates this is the $t$-th interaction between the learning system and the environment, and we refer to it as time step $t$ \footnote{The meaning of index $t$ is slightly different from prior works, e.g. DisLinUCB in \citep{wang2019distributed}, but this is only to simplify the use of notation and does not affect the theoretical results}.
Note that $\cA_{t}$ is a time-varying subset of $\cA \subseteq \bR^{d}$ that is possibly infinite, $f$ denotes the unknown reward function shared by all the clients, and $\eta_{t}$ denotes the noise.

Denote the sequence of indices corresponding to the interactions between client $i$ and the environment up to time $t$ as $\cN_{t}(i)=\left\{1 \leq s \leq t: i_{s}=i \right\}$ (if $s \;\mathrm{mod}\; N =0$, then $i_{s}=N$; otherwise $i_{s}=s \;\mathrm{mod}\; N$) for $t=1,2,\dots,NT$.
By definition, $|\cN_{Nl}(i)|=l,\forall l\in[T]$, i.e., the clients have equal number of interactions at the end of each round $l$.

\paragraph{Kernelized Reward Function}
We consider an unknown reward function $f$ that lies in a RKHS, denoted as $\cH$, such that the reward can be equivalently written as $$y_{t} = \btheta_{\star}^{\top} \phi(\bx_{t}) + \eta_{t},$$ where $\btheta_{\star} \in \cH$ is an unknown parameter, and $\phi: \bR^{d} \rightarrow \cH$ is a known feature map associated with $\cH$. 
We assume $\eta_{t}$ is zero-mean $R$-sub-Gaussian conditioned on $\sigma\bigl( (\bx_{s},\eta_{s})_{s \in \cN_{t-1}(i_{t})}, \bx_{t} \bigr), \forall t$, which denotes the $\sigma$-algebra generated by client $i_{t}$'s previously pulled arms and the corresponding noise.
In addition, there exists a positive definite kernel $k(\cdot,\cdot)$ associated with $\cH$, 
and we assume $\forall \bx \in \cA$ that, $\lVert \bx \rVert_{k} \leq L$ and $\lVert f \rVert_{k} \leq S$ for some $L,S >0$.

\paragraph{Regret and Communication Cost}
The goal of the learning system is to minimize the cumulative (pseudo) regret for all $N$ clients, i.e., $R_{NT}=\sum_{t=1}^{NT} r_{t}$, where $r_{t}=\max_{\bx \in \cA_{t}} \phi(\bx)^{\top} \btheta_{\star}- \phi(\bx_{t})^{\top} \btheta_{\star}$.
Meanwhile, the learning system also wants to keep the communication cost $C_{NT}$ low, which is measured by 
the total number of scalars being transferred across the system up to time step $NT$.

\subsection{Distributed Kernel UCB} \label{subsec:diskernel_ucb}
As a starting point to studying the communication efficient algorithm in Section \ref{sec:method} and demonstrate the challenges in designing a communication efficient distributed kernelized contextual bandit algorithm, here we first introduce and analyze a naive algorithm where the $N$ clients collaborate on learning the exact parameters of kernel bandit, i.e., the mean and variance of estimated reward.
We name this algorithm Distributed Kernel UCB, or \algone{} for short, and its description is given in Algorithm \ref{alg:diskernelucb}.

\begin{algorithm}[t] 
    \caption{Distributed Kernel UCB (\algone{})} 
    \label{alg:diskernelucb}
  \begin{algorithmic}[1]
    \STATE \textbf{Input} threshold $D>0$
    \STATE \textbf{Initialize} $t_\text{last}=0$, $\cD_{0}(i)=\Delta \cD_{0}(i)=\emptyset, \forall i \in [N]$
    \FOR{ round $l=1,2,...,T$}
        \FOR{ client $i = 1,2,...,N$}
            \STATE Client $i$ chooses arm $\bx_{t} \in \cA_{t}$ according to Eq~\eqref{eq:UCB_exact} and observes reward $y_{t}$, where $t=N(l-1)+i$
            \STATE Client $i$ updates 
            $\bK_{\cD_{t}(i),\cD_{t}(i)}, \by_{\cD_{t}(i)}$, where $\cD_{t}(i)=\cD_{t-1}(i) \cup \{t\}$; and its upload buffer $\Delta \cD_{t}(i)=\Delta \cD_{t-1}(i) \cup \{t\}$\\
            \textit{// Global Synchronization}
            \IF{the event $\cU_{t}(D)$ defined in Eq~\eqref{eq:sync_event_exact} is true}
                \STATE \textbf{Clients} $\forall j \in [N]$: send $\{(\bx_{s},y_{s})\}_{s \in \Delta \cD_{t}(j)}$ to server, and reset $\Delta \cD_{t}(j)=\emptyset$
                \STATE \textbf{Server}: aggregates and sends back $\{(\bx_{s},y_{s})\}_{s \in [t]}$; sets $t_{\text{last}}=t$
                \STATE \textbf{Clients} $\forall j \in [N]$: update $\bK_{\cD_{t}(j),\cD_{t}(j)}, \by_{\cD_{t}(i)}$, where $\cD_{t}(j)=[t]$ 
            \ENDIF
        \ENDFOR
    \ENDFOR
  \end{algorithmic}
\end{algorithm}

\paragraph{Arm Selection} 
For each round $l \in [T]$, when client $i \in [N]$ interacts with the environment, i.e., the $t$-th interaction between the learning system and the environment where $t=N(l-1)+i$, 
it chooses arm $\bx_{t} \in \cA_{t}$ based on the UCB of the mean estimator (line 5):
\begin{equation} \label{eq:UCB_exact}
    \bx_{t}=\argmax_{\bx \in \cA_{t}} \hat{\mu}_{t-1,i}(\bx) + \alpha_{t-1,i} \hat{\sigma}_{t-1,i}(\bx)
\end{equation}
where $\hat{\mu}_{t,i}(\bx)$ and $\hat{\sigma}^{2}_{t,i}(\bx)$ denote client $i$'s local estimated mean reward for arm $\bx \in \cA$ and its variance, and $\alpha_{t-1,i}$ is a carefully chosen scaling factor to balance exploration and exploitation (see Lemma \ref{lem:regret_comm_diskernelucb} for proper choice).

To facilitate further discussion, for time step $t \in [NT]$, we denote the sequence of time indices for the data points that have been used to update client $i$'s local estimate as $\cD_{t}(i)$, which include both data points collected locally and those shared by the other clients. If the clients never communicate, $\cD_{t}(i)=\cN_{t}(i),\forall t,i$; otherwise, $\cN_{t}(i) \subset \cD_{t}(i) \subseteq [t]$, with $\cD_{t}(i)=[t]$ recovering the centralized setting, i.e., each new data point collected from the environment immediately becomes available to all the clients in the learning system. 
The design matrix and reward vector for client $i$ at time step $t$ are denoted by $\bX_{\cD_{t}(i)}=[\bx_{s}]_{s \in \cD_{t}(i)}^{\top} \in \bR^{|\cD_{t}(i)| \times d}, \by_{t,i}=[y_{s}]_{s \in \cD_{t}(i)}^{\top} \in \bR^{|\cD_{t}(i)|}$, respectively. By applying the feature map $\phi(\cdot)$ to each row of $\bX_{\cD_{t}(i)}$, we obtain $\bPhi_{\cD_{t}(i)} \in \bR^{|\cD_{t}(i)| \times p}$, where $p$ is the dimension of $\cH$ and is possibly infinite.
Since the reward function is linear in $\cH$, client $i$ can construct the Ridge regression estimator $\hat{\theta}_{t,i} = (\bPhi_{\cD_{t}(i)}^{\top} \bPhi_{\cD_{t}(i)} + \lambda I)^{-1} \bPhi_{\cD_{t}(i)}^{\top} \by_{t,i}$,
where $\lambda > 0$ is the regularization coefficient. This gives us the estimated mean reward and variance in primal form for any arm $\bx \in \cA$, i.e., $\hat{\mu}_{t,i}(\bx) = \phi(\bx)^{\top} \bA_{t,i}^{-1} \bb_{t,i}$ and $\hat{\sigma}_{t,i}(\bx) = \sqrt{\phi(\bx)^{\top} \bA_{t,i}^{-1} \phi(\bx) }$,
where $\bA_{t,i}=\bPhi_{\cD_{t}(i)}^{\top} \bPhi_{\cD_{t}(i)}+ \lambda \bI$ and $\bb_{t,i}=\bPhi_{\cD_{t}(i)}^{\top} \by_{t,i}$.
Then using the kernel trick, we can obtain their equivalence in the dual form that only involves entries of the kernel matrix, and avoids directly working on $\cH$ which is possibly infinite:
\begin{align*}
    \hat{\mu}_{t,i}(\bx) &= \bK_{\cD_{t}(i)}(\bx)^{\top} \bigl( \bK_{\cD_{t}(i), \cD_{t}(i)} + \lambda I \bigr)^{-1} \by_{\cD_{t}(i)}  \\
    \hat{\sigma}_{t,i}(\bx) &= \lambda^{-1/2}\sqrt{ k(\bx, \bx) - \bK_{\cD_{t}(i)}(\bx)^{\top} \bigl( \bK_{\cD_{t}(i), \cD_{t}(i)} + \lambda I \bigr)^{-1} \bK_{\cD_{t}(i)}(\bx) }
\end{align*}
where $\bK_{\cD_{t}(i)}(\bx) = \bPhi_{\cD_{t}(i)} \phi(\bx) = [k(\bx_{s}, \bx)]^{\top}_{s \in \cD_{t}(i)} \in \bR^{|\cD_{t}(i)|}$, and $\bK_{\cD_{t}(i), \cD_{t}(i)} = \bPhi_{\cD_{t}(i)}^{\top} \bPhi_{\cD_{t}(i)} = [k(\bx_{s}, \bx_{s^{\prime}})]_{s, s^{\prime} \in \cD_{t}(i)} \in \bR^{|\cD_{t}(i)| \times |\cD_{t}(i)|}$.


\paragraph{Communication Protocol}
To reduce the regret in future interactions with the environment, the $N$ clients need to collaborate via communication, and a carefully designed communication protocol is essential in ensuring the communication efficiency.
In prior works like DisLinUCB \cite{wang2019distributed}, after each round of interaction with the environment, client $i$ checks whether the event $\{(|\cD_{t}(i)|-|\cD_{t_\text{last}}(i)|) \log(\frac{\det(\bA_{t,i})}{\det(\bA_{t_\text{last},i})})>D\}$ is true, where $t_\text{last}$ denotes the time step of last global synchronization. If true, a new global synchronization is triggered, such that the server will require all clients to upload their sufficient statistics since $t_\text{last}$, aggregate them to compute $\{\bA_{t},\bb_{t}\}$, and then synchronize the aggregated sufficient statistics with all clients, i.e., set $\{\bA_{t,i},\bb_{t,i}\}=\{\bA_{t},\bb_{t}\},\forall i \in [N]$.

Using kernel trick, we can obtain an equivalent event-trigger in terms of the kernel matrix,
\begin{equation} \label{eq:sync_event_exact}
    \cU_{t}(D) = \left\{(|\cD_{t}(i_{t})|-|\cD_{t_\text{last}}(i_{t})|) \log\left(\frac{\det(\bI + \lambda^{-1} \bK_{\cD_{t}(i_{t}), \cD_{t}(i_{t})} )}{\det(\bI + \lambda^{-1} \bK_{\cD_{t}(i_{t})\setminus \Delta \cD_{t}(i_{t}), \cD_{t}(i_{t}) \setminus \Delta \cD_{t}(i_{t})})}\right) > D \right\}.
\end{equation}
where $D>0$ denotes the predefined threshold value.
If event $\cU_{t}(D)$ is true (line 7), a global synchronization is triggered (line 7-10), where the local datasets of all $N$ clients are synchronized to $\{(\bx_{s},y_{s})\}_{s \in [t]}$. We should note that the transfer of raw data $(\bx_{s},y_{s})$ is necessary for the update of the kernel matrix and reward vector in line 6 and line 10, which will be used for arm selection at line 5.
This is an inherent disadvantage of kernelized estimation in distributed settings, which, as we mentioned in Section \ref{sec:related_works}, is also true for the existing distributed kernelized bandit algorithm \cite{dubey2020kernel}. Lemma \ref{lem:regret_comm_diskernelucb} below shows that in order to obtain the optimal order of regret, \algone{} incurs a communication cost linear in $T$ (proof given in the appendix), which is expensive for an online learning problem.
\begin{lemma}[Regret and Communication Cost of \algone{}] \label{lem:regret_comm_diskernelucb}
With threshold $D=\frac{T}{N \gamma_{NT}}$, 
$\alpha_{t,i}=\sqrt{\lambda} \lVert \btheta_{\star} \rVert + R \sqrt{4 \ln{N/\delta}+ 2\ln{\det( \bI + \bK_{\cD_{t}(i), \cD_{t}(i)} /\lambda)}}$, we have 
\begin{align*}
    R_{NT} = O \bigl( \sqrt{NT}(\lVert \theta_{\star} \rVert \sqrt{\gamma_{NT}} + \gamma_{NT} ) \bigr),
\end{align*}
with probability at least $1-\delta$, and
\begin{align*}
    C_{NT} = O(T N^{2} d ).
\end{align*}
where $\gamma_{NT}:=\max_{\cD \subset \cA:|\cD|=NT} \frac{1}{2}\log \det(\bK_{\cD,\cD}/\lambda + \bI)$ is the maximum information gain after $NT$ interactions \citep{chowdhury2017kernelized}. It is problem-dependent and can be bounded for specific arm set $\cA$ and kernel function $k(\cdot, \cdot)$. For example, $\gamma_{NT} = O(d \log(NT))$ for linear kernel and $\gamma_{NT} = O(\log(NT)^{d+1})$ for Gaussian kernel. 
\end{lemma}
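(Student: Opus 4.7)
The plan is to prove the regret and communication bounds separately. The regret bound follows the usual kernel-UCB template (self-normalized confidence, one-step UCB bound, Cauchy--Schwarz, information gain) with an extra determinant-ratio step to absorb the gap between each client's local covariance and the hypothetical fully-synchronized one. The communication bound is a direct counting argument.

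\textbf{Regret bound.} I would first condition on a high-probability confidence event: the self-normalized concentration of Chowdhury and Gopalan (2017) applied to each client's local Ridge regression and union-bounded over the $N$ clients guarantees that with probability at least $1-\delta$, the stated $\alpha_{t,i}$ ensures $|\phi(\bx)^\top \btheta_\star - \hat{\mu}_{t-1, i_t}(\bx)| \le \alpha_{t-1, i_t}\hat{\sigma}_{t-1, i_t}(\bx)$ for every $t$ and every $\bx \in \cA_t$. The UCB rule in Eq.~\eqref{eq:UCB_exact} then yields $r_t \le 2\alpha_{t-1, i_t}\hat{\sigma}_{t-1, i_t}(\bx_t)$, and Cauchy--Schwarz gives
\[
R_{NT} \le 2\alpha_{NT}\sqrt{NT \cdot \sum_{t=1}^{NT}\hat{\sigma}_{t-1, i_t}^2(\bx_t)}.
\]
Bounding $\sum_t \hat{\sigma}_{t-1, i_t}^2(\bx_t)$ is the main work. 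Letting $\bA_{t}^{\mathrm{all}} = \bPhi_{[t]}^\top \bPhi_{[t]} + \lambda\bI$ be the covariance of the imagined fully-synchronized system, the containment $\bA_{t-1, i_t} \preceq \bA_{t-1}^{\mathrm{all}}$ and the standard determinant inequality give $\hat{\sigma}_{t-1, i_t}^2(\bx_t) \le \frac{\det(\bA_{t-1}^{\mathrm{all}})}{\det(\bA_{t-1, i_t})}\bigl(\sigma_{t-1}^{\mathrm{all}}(\bx_t)\bigr)^2$, where $\sigma_{t-1}^{\mathrm{all}}$ is the variance built from all data up to $t-1$. Between syncs, $\lnot\cU_t(D)$ gives $(|\cD_t(i_t)|-|\cD_{t_\mathrm{last}}(i_t)|)\log\bigl(\det(\bA_{t, i_t})/\det(\bA_{t_\mathrm{last}})\bigr) \le D$ for every client at every non-sync time; combining this uniformly across clients within the same epoch lets one bound the missing-information ratio $\det(\bA_{t-1}^{\mathrm{all}})/\det(\bA_{t-1, i_t})$ by a constant whenever $D \le T/(N\gamma_{NT})$. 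The centralized information-gain inequality $\sum_t \log\bigl(1 + \lambda^{-1}(\sigma_{t-1}^{\mathrm{all}}(\bx_t))^2\bigr) \le 2\gamma_{NT}$ then gives $\sum_t \hat{\sigma}_{t-1, i_t}^2(\bx_t) = O(\gamma_{NT})$, and combining with $\alpha_{NT} = O(\|\btheta_\star\| + \sqrt{\gamma_{NT}})$ from the stated choice delivers $R_{NT} = O(\sqrt{NT}(\|\btheta_\star\|\sqrt{\gamma_{NT}}+\gamma_{NT}))$.

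\textbf{Communication bound.} This part is a straightforward accounting argument and does not depend on how the trigger fires. Every raw tuple $(\bx_s, y_s) \in \bR^{d+1}$ is uploaded from its originating client to the server at most once (in whichever sync round follows its production) and then broadcast from the server to at most $N-1$ other clients at most once. Each of the $NT$ data points therefore contributes at most $O(Nd)$ scalars to the total, and $C_{NT} = O(TN^2 d)$ follows.

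\textbf{Main obstacle.} The delicate step is controlling the multiplicative overhead $\det(\bA_{t-1}^{\mathrm{all}})/\det(\bA_{t-1, i_t})$ uniformly in $t$ using only the single-client, locally-checkable trigger condition. Unlike the centralized elliptical-potential lemma, which telescopes cleanly over one sequence, the information absent from $\bA_{t-1, i_t}$ is contributed by the other $N-1$ clients in the same epoch, so the argument has to trade their $N$-fold accumulation of missing data against the threshold $D$. The choice $D = T/(N\gamma_{NT})$ is calibrated exactly so that this overhead stays a constant and the overall rate matches the centralized kernel-UCB regret up to the expected $\sqrt{N}$ parallelism factor.
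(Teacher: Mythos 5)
Your communication-cost argument is correct and in fact slightly cleaner than the paper's own: counting that each of the $NT$ raw tuples $(\bx_s,y_s)\in\bR^{d+1}$ is uploaded to the server at most once and then broadcast to at most $N-1$ other clients at most once gives $O(NT\cdot Nd)=O(TN^2d)$ immediately, without the paper's worst-case argument about the last synchronization firing near time $NT$.

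The regret bound, however, has a genuine gap at precisely the step you flag as delicate. You assert that the non-firing trigger together with $D=T/(N\gamma_{NT})$ bounds the ratio $\det(\bA^{\mathrm{all}}_{t-1})/\det(\bA_{t-1,i_t})$ by a constant uniformly in $t$. This is not true. The trigger $\lnot\,\cU_t(D)$ only constrains, for each client $j$ separately, the \emph{product} $(|\cD_t(j)|-|\cD_{t_\mathrm{last}}(j)|)\cdot\log\bigl(\det(\bA_{t,j})/\det(\bA_{t_\mathrm{last}})\bigr)\le D$; when the number of new local samples is as small as $1$, the local log-determinant increase can approach $D$ without firing. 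Aggregating the missing information of the other $N-1$ clients (via the chain rule / submodularity of log-det) bounds $\log\bigl(\det(\bA^{\mathrm{all}}_{t-1})/\det(\bA_{t-1,i_t})\bigr)$ only by roughly $(N-1)D \approx T/\gamma_{NT}$, which is far from constant — for a Gaussian kernel $\gamma_{NT}$ is polylogarithmic, so the multiplicative ratio can be $\exp\bigl(\Theta(T/\mathrm{polylog}(NT))\bigr)$. Your subsequent step $\sum_t\hat{\sigma}^2_{t-1,i_t}(\bx_t)=O(\gamma_{NT})$ therefore does not follow.

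The paper avoids this obstacle by partitioning epochs into \emph{good} ones, where $\ln\bigl(\det(\bI+\lambda^{-1}\bK_{[t_p],[t_p]})/\det(\bI+\lambda^{-1}\bK_{[t_{p-1}],[t_{p-1}]})\bigr)\le 1$ so the determinant ratio is $\le e$ and your comparison argument does go through, and \emph{bad} ones, of which there are at most $2\gamma_{NT}$ by pigeonhole on the centralized information gain. In bad epochs the comparison to the centralized variance is abandoned entirely; instead each client's per-epoch contribution is bounded directly from the trigger via Cauchy--Schwarz and the local elliptical-potential lemma as $O(\sqrt{D})$ per client, giving $O(N\sqrt{D})$ per bad epoch and, with $D=T/(N\gamma_{NT})$, a bad-epoch total matching the good-epoch total up to constants. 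Some such case split (or an equivalent device) is essential; the uniform-ratio route you propose cannot be made to work from the trigger condition alone.
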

\begin{remark} \label{rmk:1}
In the distributed linear bandit problem, to attain $O(d\sqrt{NT} \ln(NT))$ regret, DisLinUCB \citep{wang2019distributed} requires a total number of $O(N^{0.5} d \log(NT))$ synchronizations, and \algone{} matches this result under linear kernel, as it requires $O(N^{0.5} \gamma_{NT})$ synchronizations. 
We should note that the communication cost for each synchronization in DisLinUCB is fixed, i.e., $O(Nd^{2})$ to synchronize the sufficient statistics with all the clients, so in total $C_{NT}=O(N^{1.5} d^{3} \ln(NT) )$. However, this is not the case for \algone{} that needs to send raw data, because the communication cost for each synchronization in \algone{} is not fixed, but depends on the number of unshared data points on each client. Even if the total number of synchronizations is small, \algone{} could still incur $C_{NT}=O(TN^{2}d)$ in the worse case. Consider the extreme case where synchronization only happens once, but it happens near $NT$, then we still have $C_{NT}=O(T N^{2} d )$. The time when synchronization gets triggered depends on $\{\cA_{t} \}_{t \in [NT]}$, which is out of the control of the algorithm.
Therefore, in the following section, to improve the communication efficiency of \algone{}, we propose to let each client communicate embedded statistics in some small subspace during each global synchronization. 
\end{remark}

\section{Approximated Distributed Kernel UCB} \label{sec:method}
In this section, we propose and analyze a new algorithm that improves the communication efficiency of \algone{} using the Nystr\"{o}m approximation, such that the clients only communicate the embedded statistics during event-triggered synchronizations.
We name this algorithm Approximated Distributed Kernel UCB, or \algtwo{} for short. Its description is given in Algorithm \ref{algo:Sync-KernelUCB-Approx}.

\begin{algorithm}[t]
    \caption{Approximated Distributed Kernel UCB (\algtwo{})} 
    \label{algo:Sync-KernelUCB-Approx}
  \begin{algorithmic}[1]
    \STATE \textbf{Input:} threshold $D>0$, regularization parameter $\lambda>0$, $\delta \in (0,1)$ and kernel function $k(\cdot, \cdot)$.
    \STATE \textbf{Initialize} $\tilde{\mu}_{0,i}(\bx)= 0, \tilde{\sigma}_{0,i}(\bx)=\sqrt{k(\bx,\bx)}$, $\cN_{0}(i)=\cD_{0}(i)=\emptyset$, $\forall i\in[N]$; $\cS_{0}=\emptyset$, $t_{\text{last}}=0$
    \FOR{ round $l=1,2,...,T$}
        \FOR{ client $i = 1,2,...,N$}
            \STATE [Client $i$] selects arm $\bx_{t} \in \cA_{t}$ according to Eq~\eqref{eq:UCB_approx}
            and observes reward $y_{t}$, where $t:=N(l-1)+i$
            \STATE [Client $i$] 
            updates $\bZ_{\cD_{t}(i);\cS_{t_{\text{last}}}}^{\top}\bZ_{\cD_{t}(i);\cS_{t_{\text{last}}}}$ and $\bZ_{\cD_{t}(i);\cS_{t_{\text{last}}}}^{\top} \by_{\cD_{t}(i)}$ using $\bigl(\bz(\bx_{t};\cS_{t_\text{last}}),y_{t}\bigr)$;
            sets $\cN_{t}(i)=\cN_{t-1}(i)\cup \{t\}$, and $\cD_{t}(i)=\cD_{t-1}(i)\cup \{t\}$ \\
            \textit{// Global Synchronization}
            \IF{the event $\cU_{t}(D)$ defined in Eq~\eqref{eq:sync_event} is true}
                \STATE [Clients $\forall i $] sample $\cS_{t,i}=\text{RLS}(\cN_{t}(i),\bar{q},\tilde{\sigma}_{t_\text{last},i}^{2})$, and send $\{(\bx_{s},y_{s})\}_{s \in \cS_{t,i}}$ to server
                \STATE [Server] aggregates and sends $\{(\bx_{s},y_{s})\}_{s \in \cS_{t}}$ back to all clients, where $\cS_{t}=\cup_{i \in [N]} \cS_{t,i}$
                \STATE [Clients $\forall i$] compute and send $\{\bZ_{\cN_{t}(i);\cS_{t}}^{\top}\bZ_{\cN_{t}(i);\cS_{t}}, \bZ_{\cN_{t}(i);\cS_{t}}^{\top}\by_{\cN_{t}(i)}\}$ to server
                \STATE [Server] aggregates 
                $\sum_{i=1}^{N}\bZ_{\cN_{t}(i);\cS_{t}}^{\top}\bZ_{\cN_{t}(i);\cS_{t}}, \sum_{i=1}^{N}\bZ_{\cN_{t}(i);\cS_{t}}^{\top}\by_{\cN_{t}(i)}$
                and sends it back
                \STATE [Clients $\forall i$] 
                updates $\bZ_{\cD_{t}(i);\cS_{t}}^{\top}\bZ_{\cD_{t}(i);\cS_{t}}$ and $\bZ_{\cD_{t}(i);\cS_{t}}^{\top} \by_{\cD_{t}(i)}$; sets $\cD_{t}(i)=\cup_{i=1}^{N}\cN_{t}(i)=[t]$ and $t_\text{last}=t$
            \ENDIF
        \ENDFOR
    \ENDFOR
  \end{algorithmic}
\end{algorithm}

\subsection{Algorithm}
\paragraph{Arm selection}
For each round $l \in [T]$, when client $i \in [N]$ interacts with the environment, i.e., the $t$-th interaction between the learning system and the environment where $t:=N(l-1)+i$, 
instead of using the UCB for the exact estimator in Eq~\eqref{eq:UCB_exact}, client $i$ chooses arm $\bx_{t} \in \cA_{t}$ that maximizes the UCB for the approximated estimator (line 5):
\begin{equation} \label{eq:UCB_approx}
    \bx_{t}=\argmax_{\bx \in \cA_{t,i}} \tilde{\mu}_{t-1,i}(\bx) + \alpha_{t-1,i} \tilde{\sigma}_{t-1,i}(\bx)
\end{equation}
where $\tilde{\mu}_{t-1,i}(\bx)$ and $\tilde{\sigma}_{t-1,i}(\bx)$ are approximated using Nyestr\"{o}m method, and the statistics used to compute these approximations are much more efficient to communicate as they scale with the maximum information gain $\gamma_{NT}$ instead of $T$.

Specifically, Nystr\"{o}m method works by projecting some original dataset $\cD$ to the subspace defined by a small representative subset $\cS \subseteq \cD$, which is called the dictionary. The orthogonal projection matrix is defined as
\begin{align*}
    \bP_{\cS} = \bPhi_{\cS}^{\top} \bigl( \bPhi_{\cS} \bPhi_{\cS}^{\top} \bigr)^{-1} \bPhi_{\cS}=\bPhi_{\cS}^{\top} \bK_{\cS,\cS}^{-1} \bPhi_{\cS} \in \bR^{p \times p}
\end{align*}
We then take eigen-decomposition of $\bK_{\cS,\cS} = \bU \mathbf{\Lambda} \bU^{\top}$ to rewrite the orthogonal projection as $\bP_{\cS}=\bPhi_{\cS}^{\top} \bU \mathbf{\Lambda}^{-1/2} \mathbf{\Lambda}^{-1/2} \bU^{\top} \bPhi_{\cS}$, and define the Nystr\"{o}m embedding function
\begin{align*}
    z(\bx;\cS) = \bP_{\cS}^{1/2} \phi(\bx) =\mathbf{\Lambda}^{-1/2} \bU^{\top} \bPhi_{\cS} \phi(\bx) = \bK_{\cS,\cS}^{-1/2} \bK_{\cS}(\bx)
\end{align*}
which maps the data point $\bx$ from $\bR^{d}$ to $\bR^{|\cS|}$.

Therefore, we can approximate the Ridge regression estimator in Section \ref{subsec:diskernel_ucb} as $\tilde{\theta}_{t,i} = \tilde{\bA}_{t,i}^{-1} \tilde{\bb}_{t,i}$,
where $\tilde{\bA}_{t,i}=\bP_{\cS} \bPhi_{\cD_{t}(i)}^{\top}\bPhi_{\cD_{t}(i)} \bP_{\cS}+ \lambda \bI $, and $\tilde{\bb}_{t,i}=\bP_{\cS}\bPhi_{\cD_{t}(i)}^{\top} \by_{\cD_{t}(i)}$, and thus the approximated mean reward and variance in Eq~\eqref{eq:UCB_approx} can be expressed as $\tilde{\mu}_{t,i}(\bx) = \phi(\bx)^{\top} \tilde{\bA}_{t,i}^{-1} \tilde{\bb}_{t,i}$ and $\tilde{\sigma}_{t,i}(\bx) = \sqrt{ \phi(\bx)^{\top} \tilde{\bA}_{t,i}^{-1} \phi(\bx)}$,
and their kernelized representation are (see appendix for detailed derivation)
\begin{align*}
    \tilde{\mu}_{t,i}(\bx) & = z(\bx;\cS)^{\top} \bigl( \bZ_{\cD_{t}(i);\cS}^{\top}\bZ_{\cD_{t}(i);\cS} + \lambda \bI\bigr)^{-1} \bZ_{\cD_{t}(i);\cS}^{\top} \by_{\cD_{t}(i)} \\
    \tilde{\sigma}_{t,i}(\bx) & = \lambda^{-1/2}\sqrt{ k(\bx, \bx) -  z(\bx;\cS)^{\top} \bZ_{\cD_{t}(i);\cS}^{\top}\bZ_{\cD_{t}(i);\cS} [ \bZ_{\cD_{t}(i);\cS}^{\top}\bZ_{\cD_{t}(i);\cS} + \lambda \bI]^{-1} z(\bx|\cS)  } 
\end{align*}
where $\bZ_{\cD_{t}(i);\cS} \in \bR^{|\cD_{t}(i)|\times |\cS|}$ is obtained by applying $z(\cdot;\cS)$ to each row of $\bX_{\cD_{t}(i)}$, i.e., $\bZ_{\cD_{t}(i);\cS}= \bPhi_{\cD_{t}(i)} \bP_{\cS}^{1/2}$.
We can see that 
the computation of $\tilde{\mu}_{t,i}(\bx)$ and $\tilde{\sigma}_{t,i}(\bx)$ only requires the embedded statistics: matrix $\bZ_{\cD_{t}(i);\cS}^{\top}\bZ_{\cD_{t}(i);\cS} \in \bR^{|\cS| \times |\cS|}$ and vector $\bZ_{\cD_{t}(i);\cS}^{\top} \by_{\cD_{t}(i)} \in \bR^{|\cS|}$, which, as we will show later, makes joint kernelized estimation among $N$ clients much more efficient in communication.


After obtaining the new data point $(\bx_{t}, y_{t})$, client $i$ immediately updates both $\tilde{\mu}_{t-1,i}(\bx)$ and $\tilde{\sigma}_{t-1,i}(\bx)$ using the newly collected data point $(\bx_{t},y_{t})$, i.e., by projecting $\bx_{t}$ to the finite dimensional RKHS spanned by $\bPhi_{\cS_{t_\text{last}}}$ (line 6).
Recall that, we use $\cN_{t}(i)$ to denote the sequence of indices for data collected by client $i$, and denote by $\cD_{t}(i)$ the sequence of indices for data that has been used to update client $i$'s model estimation $\tilde{\mu}_{t,i}$. Therefore, both of them need to be updated to include time step $t$.


\paragraph{Communication Protocol}
With the approximated estimator, the size of message being communicated across the learning system is reduced. However, a carefully designed event-trigger is still required to minimize the total number of global synchronizations up to time $NT$. Since the clients can no longer evaluate the exact kernel matrices in Eq~\eqref{eq:sync_event_exact}, we instead use the event-trigger in Eq~\eqref{eq:sync_event}, which can be computed using the approximated variance from last global synchronization as,
\begin{equation} \label{eq:sync_event}
    \cU_{t}(D) = \left\{\sum_{s \in \cD_{t}(i) \setminus \cD_{t_\text{last}}(i)} \tilde{\sigma}_{t_\text{last},i}^{2}(\bx_{s}) > D\right\}
\end{equation}
Similar to Algorithm \ref{alg:diskernelucb}, if Eq \eqref{eq:sync_event} is true, global synchronization is triggered, where both the dictionary and the embedded statistics get updated. 
During synchronization, each client first samples a subset $\cS_{t}(i)$ from $\cN_{t}(i)$ (line 8) using Ridge Leverage Score sampling (RLS) \citep{calandriello2019gaussian,calandriello2020near}, which is given in Algorithm \ref{alg:rls}, and then sends $\{(\bx_{s},y_{s})\}_{s \in \cS_{t}(i)}$ to the server.
The server aggregates the received local subsets to construct a new dictionary $\{(\bx_{s},y_{s})\}_{s \in \cS_{t}}$, where $\cS_{t}=\cup_{i=1}^{N}\cS_{t}(i)$, and then sends it back to all $N$ clients (line 9). 
Finally, the $N$ clients use this updated dictionary to re-compute the embedded statistics of their local data, and then synchronize it with all other clients via the server (line 10-12).

\begin{algorithm}[h]
    \caption{$\quad \text{Ridge Leverage Score Sampling (RLS)}$} \label{alg:rls}
  \begin{algorithmic}[1]
    \STATE \textbf{Input:} dataset $\cD$, scaling factor $\bar{q}$, (possibly delayed and approximated) variance function $\tilde{\sigma}^{2}(\cdot)$
    \STATE \textbf{Initialize} new dictionary $\cS=\emptyset$
    \FOR{$s \in \cD$}
        \STATE Set $\tilde{p}_{s}=\bar{q}\tilde{\sigma}^{2}(\bx_{s})$
        \STATE Draw $q_{s} \sim \text{Bernoulli}(\tilde{p}_{s})$
        \STATE If $q_{s}=1$, add $s$ into $\cS$ 
    \ENDFOR
    \STATE \textbf{Output:} $\cS$
  \end{algorithmic}
\end{algorithm}
Intuitively, in Algorithm \ref{algo:Sync-KernelUCB-Approx}, the clients first agree upon a common dictionary $\cS_{t}$ that serves as a good representation of the whole dataset at the current time $t$, and then project their local data to the subspace spanned by this dictionary before communication, in order to avoid directly sending the raw data as in Algorithm \ref{alg:diskernelucb}. 
Then using the event-trigger, each client monitors the amount of new knowledge it has gained through interactions with the environment from last synchronization. When there is a sufficient amount of new knowledge, it will inform all the other clients to perform a synchronization.
As we will show in the following section, the size of $\cS_{t}$ scales linearly w.r.t. the maximum information gain $\gamma_{NT}$, and therefore it improves both the local computation efficiency on each client, and the communication efficiency during the global synchronization.

\subsection{Theoretical Analysis} \label{subsec:theoretical_analysis}
Denote the sequence of time steps when global synchronization is performed, i.e., the event $\cU_{t}(D)$ in Eq~\eqref{eq:sync_event} is true, as $\{t_{p}\}_{p=1}^{B}$, where $B \in [NT]$ denotes the total number of global synchronizations.
Note that in Algorithm \ref{algo:Sync-KernelUCB-Approx}, the dictionary is only updated during global synchronization, e.g., at time $t_{p}$, the dictionary $\{(\bx_{s}, y_{s})\}_{s \in \cS_{t_{p}}}$ is sampled from the whole dataset $\{(\bx_{s}, y_{s})\}_{s \in [t_{p}]}$ in a distributed manner, and remains fixed for all the interactions happened at $t \in [t_{p}+1,t_{p+1}]$.
Moreover, at time $t_{p}$, all the clients synchronize their embedded statistics, so that $\cD_{t_{p}}(i)=[t_{p}],\forall i \in [N]$.

Since Algorithm \ref{algo:Sync-KernelUCB-Approx} enables local update on each client, for time step $t \in [t_{p}+1,t_{p}]$, new data points are collected and added into $\cD_{t}(i)$, such that $\cD_{t}(i) \supseteq [t_{p}]$. This \textit{decreases} the approximation accuracy of $\cS_{t_{p}}$, as new data points may not be well approximated by $\cS_{t_{p}}$. For example, in extreme cases, the new data could be orthogonal to the dictionary. To formally analyze the accuracy of the dictionary, we adopt the definition of $\epsilon$-accuracy from \cite{calandriello2017second}. Denote by $\bar{\bS}_{t,i} \in \bR^{|\cD_{t}(i)| \times |\cD_{t}(i)|}$ a diagonal matrix, with its $s$-th diagonal entry equal to $\frac{1}{\sqrt{\tilde{p}_{s}}}$ if $s \in \cS_{t_{p}}$ and $0$ otherwise. Then if
\begin{align*}
    (1-\epsilon_{t,i}) (\bPhi_{\cD_{t}(i)}^{\top} \bPhi_{\cD_{t}(i)} + \lambda \bI) \preceq \bPhi_{\cD_{t}(i)}^{\top} \bar{\bS}_{t,i}^{\top} \bar{\bS}_{t,i}\bPhi_{\cD_{t}(i)} + \lambda \bI \preceq (1+\epsilon_{t,i}) (\bPhi_{\cD_{t}(i)}^{\top} \bPhi_{\cD_{t}(i)} + \lambda \bI),
\end{align*}
we say the dictionary $\{(\bx_{s},y_{s})\}_{s \in \cS_{t_{p}}}$ is $\epsilon_{t,i}$-accurate w.r.t. dataset $\{(\bx_{s},y_{s})\}_{s \in \cD_{t}(i)}$. 

As shown below, the accuracy of the dictionary for Nystr\"{o}m approximation is essential as it affects the width of the confidence ellipsoid, and thus affects the cumulative regret. Intuitively, in order to ensure its accuracy throughout the learning process, we need to 1) make sure the RLS procedure in line 8 of Algorithm \ref{algo:Sync-KernelUCB-Approx} that happens at each global synchronization produces a representative set of data samples, and 2) monitor the extent to which the dictionary obtained in previous global synchronization has degraded over time, and when necessary, trigger a new global synchronization to update it. Compared with prior work that freezes the model in-between consecutive communications \cite{calandriello2020near}, the analysis of $\epsilon$-accuracy for \algtwo{} is unique to our paper and the result is presented below.
\begin{lemma} \label{lem:dictionary_accuracy_global}
With $\bar{q}=6\frac{1+\epsilon}{1-\epsilon} \log(4NT/\delta)/\epsilon^{2}$, for some $\epsilon \in [0,1)$, and threshold $D>0$, Algorithm \ref{algo:Sync-KernelUCB-Approx} guarantees that the dictionary is accurate with constant $\epsilon_{t,i}:=\bigl(\epsilon+1- \frac{1}{1+\frac{1+\epsilon}{1-\epsilon}D} \bigr)$, and its size $|\cS_{t}| = O(\gamma_{NT})$ for all $t \in [NT]$. 
\end{lemma}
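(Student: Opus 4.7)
The plan is to induct over the synchronization epochs $\{t_p\}_{p=0}^{B}$. At each epoch I first establish $\epsilon$-accuracy of the freshly sampled dictionary $\cS_{t_p}$ with respect to the full distributed dataset $[t_p]$ via standard Ridge-leverage-score (RLS) concentration, then show that as long as the event-trigger $\cU_t(D)$ in Eq~\eqref{eq:sync_event} has not fired, the accuracy degrades in a quantifiable way producing exactly the claimed $\epsilon_{t,i}$.

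\textbf{Step 1 (accuracy right after a sync).} At $t_p$, each client draws $\cS_{t_p,i}$ with probabilities $\tilde p_s = \bar q\,\tilde\sigma^2_{t_{p-1},i}(\bx_s)$. By the inductive hypothesis carried from Step 2 of the previous epoch, the stale variance $\tilde\sigma^2_{t_{p-1},i}$ overestimates the exact ridge leverage score $\phi(\bx_s)^\top\bA_{t_p}^{-1}\phi(\bx_s)$ (where $\bA_{t_p}:=\bPhi_{[t_p]}^\top\bPhi_{[t_p]}+\lambda\bI$, identical across clients after synchronization) by at most a $\tfrac{1+\epsilon}{1-\epsilon}$ factor. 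Consequently $\tilde p_s$ is a valid RLS upper-bound sampling distribution, and a matrix-Bernstein argument in the spirit of Proposition~2 of \cite{calandriello2019gaussian} (or Lemma~1 of \cite{calandriello2020near}) gives $\bPhi_{[t_p]}^\top\bar\bS^\top\bar\bS\bPhi_{[t_p]}+\lambda\bI$ within multiplicative $\epsilon$ of $\bA_{t_p}$ with probability $1-\delta/(2NT)$. A union bound over the at most $NT$ possible sync times is absorbed into the $\log(4NT/\delta)$ factor inside $\bar q$.

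\textbf{Step 2 (controlled degradation between syncs).} For $t\in(t_p,t_{p+1}]$ the reweighted matrix $\bPhi_{\cD_t(i)}^\top\bar\bS_{t,i}^\top\bar\bS_{t,i}\bPhi_{\cD_t(i)}$ is frozen at its $t_p$ value since the dictionary has not changed, while
\begin{equation*}
\bPhi_{\cD_t(i)}^\top\bPhi_{\cD_t(i)} \;=\; \bPhi_{[t_p]}^\top\bPhi_{[t_p]} + \sum_{s\in\cD_t(i)\setminus[t_p]}\phi(\bx_s)\phi(\bx_s)^\top.
\end{equation*}
Using the rank-one inequality $\phi(\bx_s)\phi(\bx_s)^\top\preceq \bigl(\phi(\bx_s)^\top\bA_{t_p}^{-1}\phi(\bx_s)\bigr)\bA_{t_p}$ and invoking the inductive $\epsilon$-accuracy at $t_p$ to trade exact for approximate variance up to a $\tfrac{1+\epsilon}{1-\epsilon}$ factor, the fact that $\cU_t(D)$ has not fired, namely $\sum_{s\in\cD_t(i)\setminus[t_p]}\tilde\sigma^2_{t_p,i}(\bx_s)\leq D$, yields
\begin{equation*}
\bPhi_{\cD_t(i)}^\top\bPhi_{\cD_t(i)}+\lambda\bI \;\preceq\; \Bigl(1+\tfrac{1+\epsilon}{1-\epsilon}D\Bigr)\bA_{t_p}.
\end{equation*}
Composing this multiplicative sandwich with the $\epsilon$-accuracy from Step 1 gives upper and lower PSD bounds that rearrange to the claimed $\epsilon_{t,i} = \epsilon + 1 - \tfrac{1}{1+\frac{1+\epsilon}{1-\epsilon}D}$. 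The size statement follows by noting that the expected dictionary size from RLS is $\bar q\sum_{s\in\cN_{t_p}(i)}\tilde\sigma^2_{t_{p-1},i}(\bx_s)$; summing over clients, trading approximate for exact variances up to a constant via Step 2, and telescoping $\sum_{s\in[t_p]}\phi(\bx_s)^\top\bA_{s-1}^{-1}\phi(\bx_s)\leq 2\log\det(\bI+\bK_{[t_p],[t_p]}/\lambda)\leq 4\gamma_{NT}$, plus a Chernoff bound on the Bernoulli draws, gives $|\cS_{t_p}|=O(\bar q\,\gamma_{NT}) = \tilde O(\gamma_{NT})$.

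\textbf{Main obstacle.} The delicate piece is the circularity in Step 1: the sampling probabilities at $t_p$ rely on the stale variance produced by the \emph{previous} dictionary, so accuracy at $t_p$ is contingent on accuracy at $t_{p-1}$, which in turn needed Step 2 of epoch $p-1$. The induction must verify that the $\tfrac{1+\epsilon}{1-\epsilon}D$ slack accumulated within a single epoch does not snowball across epochs; the key observation making this work is that $\epsilon_{t,i}$ enters Step 1 of the next epoch only as a constant-factor over-estimation of the ridge leverage scores, so the same $\bar q$ suffices uniformly in $p$. This bookkeeping is precisely what distinguishes the analysis from BBKB \cite{calandriello2020near}, where the model is frozen between syncs and the analogous between-sync degradation never needs to be quantified.
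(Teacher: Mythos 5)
The proposal's overall architecture is the same as the paper's: establish $\epsilon$-accuracy at synchronization times via RLS concentration (Lemma 2 of \citealt{calandriello2020near}, i.e.\ Lemma~\ref{lem:rls} in the paper), then quantify between-sync degradation using the fact that the event-trigger $\cU_t(D)$ has not fired. The difference lies in how that degradation step is argued, and the comparison is worth noting. The paper (in Lemma~\ref{lem:dictionary_accuracy}) normalizes by $\bA_{t}^{-1/2}$, splits the sampling error by the triangle inequality into a ``RLS error at $t_p$'' term bounded by $\epsilon$ and a ``new-data'' term, and then bounds the latter as $1 - 1/\max_\bx\bigl(\sigma^2_{t_p,i}(\bx)/\sigma^2_{t,i}(\bx)\bigr)$ using Lemma~\ref{lem:bound_variance_ratio}. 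You instead compose two L\"owner sandwiches directly: the RLS sandwich $(1-\epsilon)\bA_{t_p} \preceq \bPhi_{[t_p]}^\top\bar\bS^\top\bar\bS\bPhi_{[t_p]}+\lambda\bI \preceq (1+\epsilon)\bA_{t_p}$ with the growth sandwich $\bA_{t_p} \preceq \bA_t \preceq \bigl(1+\tfrac{1+\epsilon}{1-\epsilon}D\bigr)\bA_{t_p}$ obtained from the rank-one Cauchy--Schwarz step $\phi_s\phi_s^\top \preceq (\phi_s^\top\bA_{t_p}^{-1}\phi_s)\bA_{t_p}$ and the trigger condition. Carrying this through actually yields $\epsilon'_{t,i} = 1 - \tfrac{1-\epsilon}{1+\frac{1+\epsilon}{1-\epsilon}D}$, which is \emph{smaller} than the paper's $\epsilon_{t,i} = \epsilon + 1 - \tfrac{1}{1+\frac{1+\epsilon}{1-\epsilon}D}$ (the gap is $\epsilon\bigl(1-\tfrac{1}{1+\frac{1+\epsilon}{1-\epsilon}D}\bigr) \geq 0$), so the sandwich route is marginally tighter and certainly implies the stated constant. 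Your ``inductive bootstrap'' framing of Step~1 is morally the same content as citing \citealt{calandriello2020near}'s Lemma~2, which already packages the cross-epoch dependence of the sampling probabilities; the paper simply invokes that result directly. One small caveat worth making explicit in your write-up: the stale variance $\tilde\sigma^2_{t_{p-1},i}$ needs only to \emph{upper}-bound (up to a constant) the current leverage score at $t_p$ for RLS validity, which follows from monotonicity of posterior variance plus the $\tfrac{1+\epsilon}{1-\epsilon}$ approximation factor; the exact phrasing ``overestimates by at most a factor'' conflates these two mechanisms, but the conclusion is right.
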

Based on Lemma \ref{lem:dictionary_accuracy_global}, we can construct the following confidence ellipsoid for unknown parameter $\theta_{\star}$.

\begin{lemma}[Confidence Ellipsoid of Approximated Estimator] \label{lem:confidence_ellipsoid_approx}
Under the condition that $\bar{q}=6\frac{1+\epsilon}{1-\epsilon} \log(4NT/\delta)/\epsilon^{2}$, for some $\epsilon \in [0,1)$, and threshold $D>0$, with probability at least $1-\delta$, we have $\forall t,i$ that
\small
\begin{align*}
    \lVert\tilde{\btheta}_{t,i} - \btheta_{\star} \rVert_{\tilde{\bA}_{t,i}} 
    & \leq \Bigl( \frac{1}{\sqrt{-\epsilon + 1/(\frac{1+\epsilon}{1-\epsilon}D)}}  + 1  \Bigr) \sqrt{\lambda} \lVert \btheta_{\star} \rVert + 2R \sqrt{\ln{N/\delta}+ \gamma_{NT}} := \alpha_{t,i}.
\end{align*}
\end{lemma}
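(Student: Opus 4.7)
The plan is to decompose the error $\tilde{\btheta}_{t,i} - \btheta_{\star}$, measured in the $\tilde{\bA}_{t,i}$ norm, into three standard pieces---a regularization bias, a Nystr\"om projection bias, and a self-normalized noise term---and control each piece using Lemma~\ref{lem:dictionary_accuracy_global} together with a self-normalized martingale inequality. First, I would substitute $\by_{\cD_{t}(i)} = \bPhi_{\cD_{t}(i)}\btheta_{\star} + \eta_{\cD_{t}(i)}$ into $\tilde{\btheta}_{t,i} = \tilde{\bA}_{t,i}^{-1}\bP_{\cS}\bPhi_{\cD_{t}(i)}^{\top}\by_{\cD_{t}(i)}$, then split $\btheta_{\star} = \bP_{\cS}\btheta_{\star} + (\bI - \bP_{\cS})\btheta_{\star}$ and use the identity $\tilde{\bA}_{t,i}^{-1}(\bP_{\cS}\bPhi_{\cD_{t}(i)}^{\top}\bPhi_{\cD_{t}(i)}\bP_{\cS} + \lambda\bI) = \bI$ to write
\[
\tilde{\btheta}_{t,i} - \btheta_{\star} = -\lambda \tilde{\bA}_{t,i}^{-1}\btheta_{\star} + \tilde{\bA}_{t,i}^{-1}\bP_{\cS}\bPhi_{\cD_{t}(i)}^{\top}\bPhi_{\cD_{t}(i)}(\bI - \bP_{\cS})\btheta_{\star} + \tilde{\bA}_{t,i}^{-1}\bP_{\cS}\bPhi_{\cD_{t}(i)}^{\top}\eta_{\cD_{t}(i)}.
\]
Applying the triangle inequality in the $\tilde{\bA}_{t,i}$ norm then isolates the three summands for individual treatment.

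The regularization and noise pieces are standard. For the regularization, since $\tilde{\bA}_{t,i} \succeq \lambda\bI$, one immediately gets $\lambda\lVert \tilde{\bA}_{t,i}^{-1}\btheta_{\star}\rVert_{\tilde{\bA}_{t,i}} \le \sqrt{\lambda}\lVert\btheta_{\star}\rVert$, producing the ``$+1$'' contribution inside the bracket of $\alpha_{t,i}$. For the noise, I would apply a self-normalized martingale inequality (Abbasi-Yadkori style) in the $|\cS_{t_{\text{last}}}|$-dimensional Nystr\"om-embedded regression, with a union bound across the $N$ clients that may trigger synchronization, yielding a bound of order $R\sqrt{2\ln\det(\bI + \bZ_{\cD_{t}(i);\cS}^{\top}\bZ_{\cD_{t}(i);\cS}/\lambda) + 2\ln(N/\delta)}$. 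The Nystr\"om log-determinant is then controlled by the exact one via the $\epsilon_{t,i}$-accuracy of Lemma~\ref{lem:dictionary_accuracy_global}, and the latter is at most $2\gamma_{NT}$ by the definition of maximum information gain, giving the $2R\sqrt{\ln(N/\delta)+\gamma_{NT}}$ term.

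The main obstacle is the middle, projection-bias term $\lVert\tilde{\bA}_{t,i}^{-1}\bP_{\cS}\bPhi_{\cD_{t}(i)}^{\top}\bPhi_{\cD_{t}(i)}(\bI - \bP_{\cS})\btheta_{\star}\rVert_{\tilde{\bA}_{t,i}}$. I would expand the squared norm, use $\bP_{\cS}\tilde{\bA}_{t,i}^{-1}\bP_{\cS} \preceq \lambda^{-1}\bP_{\cS}$ to reduce matters to controlling $\lambda^{-1/2}\lVert\bP_{\cS}\bPhi_{\cD_{t}(i)}^{\top}\bPhi_{\cD_{t}(i)}(\bI - \bP_{\cS})\btheta_{\star}\rVert$, and then convert the $\epsilon_{t,i}$-accuracy guarantee of Lemma~\ref{lem:dictionary_accuracy_global} into the operator inequality $(\bPhi_{\cD_{t}(i)}^{\top}\bPhi_{\cD_{t}(i)} + \lambda\bI)^{1/2}(\bI - \bP_{\cS})(\bPhi_{\cD_{t}(i)}^{\top}\bPhi_{\cD_{t}(i)} + \lambda\bI)^{1/2} \preceq \frac{\epsilon_{t,i}}{1 - \epsilon_{t,i}}\lambda\bI$, a standard consequence of RLS-based Nystr\"om accuracy. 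Combined with $\lVert\btheta_{\star}\rVert \le S$, this yields a factor of the form $\frac{1}{\sqrt{1 - \epsilon_{t,i}}}\sqrt{\lambda}\lVert\btheta_{\star}\rVert$, and substituting the expression $\epsilon_{t,i} = \epsilon + 1 - \frac{1}{1+\frac{1+\epsilon}{1-\epsilon}D}$ from Lemma~\ref{lem:dictionary_accuracy_global} reproduces the $\frac{1}{\sqrt{-\epsilon + 1/(\frac{1+\epsilon}{1-\epsilon}D)}}$ coefficient in $\alpha_{t,i}$. The delicate part is the bookkeeping between the ``frozen'' accuracy at $t_{\text{last}}$ and the ``current'' drift at $t$: the event trigger in Eq.~\eqref{eq:sync_event} enforces a variance budget $D$ on each client's post-synchronization interactions, and it is precisely this budget that allows the frozen $\epsilon$-accuracy to be inflated into $\epsilon_{t,i}$-accuracy at time $t$. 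Tracking this inflation carefully---without freezing the model between communications, as was done in the centralized BBKB of \cite{calandriello2020near}---is what drives the whole argument and is the key technical ingredient distinguishing our analysis.
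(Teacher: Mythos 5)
Your overall decomposition is the same as the paper's (the paper merges your first two terms into one before splitting them again, but the algebra is identical), your treatment of the regularization piece is exactly the paper's, and your high-level story about $\epsilon$-drift controlled by the variance budget $D$ is correct. However, the two technical steps you sketch for the remaining pieces both have genuine gaps.

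For the noise term, you propose to apply an Abbasi-Yadkori self-normalized bound \emph{in the embedded $|\cS_{t_\text{last}}|$-dimensional regression}, i.e.\ with features $\bz(\bx_s;\cS_{t_\text{last}})$. But $\cS_{t_\text{last}}$ is chosen by the RLS routine using data collected by all $N$ clients up to time $t_\text{last}$, so for $s < t_\text{last}$ the vector $\bz(\bx_s;\cS_{t_\text{last}})$ paired with $\eta_s$ is not measurable with respect to the noise filtration at time $s$. The adaptedness hypothesis of the self-normalized concentration inequality therefore fails. The paper's proof sidesteps this by never leaving the ambient RKHS: it writes $\tilde{\bA}_{t,i}^{-1/2}\bP_{\cS}\bPhi^{\top}\eta = \bigl(\tilde{\bA}_{t,i}^{-1/2}\bP_{\cS}\bA_{t,i}^{1/2}\bigr)\bigl(\bA_{t,i}^{-1/2}\bPhi^{\top}\eta\bigr)$, controls $\lVert\bA_{t,i}^{-1/2}\bPhi^{\top}\eta\rVert$ with the dictionary-free self-normalized bound (Lemma~\ref{lem:self_normalized_bound}, already established for \algone{} via the batched filtration), and bounds the data-dependent but noise-independent factor deterministically: $\lVert\tilde{\bA}_{t,i}^{-1/2}\bP_{\cS}\bA_{t,i}^{1/2}\rVert \le \sqrt{2}$, using $\bP_{\cS}\bA_{t,i}\bP_{\cS} = \tilde{\bA}_{t,i} + \lambda(\bP_{\cS}-\bI)$.

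For the projection-bias term, the reduction you propose via $\bP_{\cS}\tilde{\bA}_{t,i}^{-1}\bP_{\cS} \preceq \lambda^{-1}\bP_{\cS}$ is too lossy: it leaves you to bound $\lambda^{-1/2}\lVert\bP_{\cS}\bPhi_{\cD_t(i)}^{\top}\bPhi_{\cD_t(i)}(\bI-\bP_{\cS})\btheta_{\star}\rVert$, and after splitting off $\lVert\bPhi_{\cD_t(i)}(\bI-\bP_{\cS})\rVert$ (which the $\epsilon_{t,i}$-accuracy does control, by $O(\sqrt{\lambda/(1-\epsilon_{t,i})})$), you are still stuck with a factor $\lambda^{-1/2}\lVert\bPhi_{\cD_t(i)}\bP_{\cS}\rVert$, which grows with $|\cD_t(i)|$ and is not $O(1)$. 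The fix is the same contraction used for the noise term: pair $\tilde{\bA}_{t,i}^{-1/2}$ directly with $\bP_{\cS}\bPhi_{\cD_t(i)}^{\top}$ and use $\lVert\tilde{\bA}_{t,i}^{-1/2}\bP_{\cS}\bPhi_{\cD_t(i)}^{\top}\rVert = \sqrt{\lambda_{\max}\bigl(\bPhi\bP_{\cS}(\bP_{\cS}\bPhi^{\top}\bPhi\bP_{\cS}+\lambda\bI)^{-1}\bP_{\cS}\bPhi^{\top}\bigr)} \le 1$, which is what the paper does. The remaining factor $\lVert\bPhi_{\cD_t(i)}(\bI-\bP_{\cS})\rVert\,\lVert\btheta_{\star}\rVert$ is then bounded exactly as you intended, via $\bI - \bP_{\cS_{t_p}} \preceq \frac{\lambda}{1-\epsilon_{t,i}}(\bPhi_{\cD_t(i)}^{\top}\bPhi_{\cD_t(i)}+\lambda\bI)^{-1}$ (Proposition~10 of \cite{calandriello2019gaussian}) together with Lemma~\ref{lem:dictionary_accuracy_global}.
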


Using Lemma \ref{lem:confidence_ellipsoid_approx}, we obtain the regret and communication cost upper bound of \algtwo{}, which is given in Theorem \ref{thm:regret_comm_sync} below.
\begin{theorem}[Regret and Communication Cost of \algtwo{}] \label{thm:regret_comm_sync}
Under the same condition as Lemma \ref{lem:confidence_ellipsoid_approx}, and by setting $D=\frac{1}{N}, \epsilon < \frac{1}{3}$, we have
\begin{align*}
    R_{NT} = O \bigl( \sqrt{NT}(\lVert \theta_{\star} \rVert \sqrt{\gamma_{NT}} + \gamma_{NT} ) \bigr)
\end{align*}
with probability at least $1-\delta$, and 
\begin{align*}
    C_{NT} = O\bigl( N^{2} \gamma_{NT}^{3} \bigr)
\end{align*}
\end{theorem}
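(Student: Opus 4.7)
The plan is an optimistic UCB regret decomposition using Lemma~\ref{lem:confidence_ellipsoid_approx}. Conditioning on the ellipsoid containing $\btheta_\star$, the arm selection rule~\eqref{eq:UCB_approx} gives $r_t \le 2\alpha_{t-1,i_t}\,\tilde{\sigma}_{t-1,i_t}(\bx_t)$. Lemma~\ref{lem:dictionary_accuracy_global} yields $|\cS_t|=O(\gamma_{NT})$ uniformly and hence $\alpha_{t,i}=O\bigl(\lVert\btheta_\star\rVert+\sqrt{\gamma_{NT}+\ln(N/\delta)}\bigr)$. After Cauchy--Schwarz the task reduces to bounding $\sum_{t=1}^{NT}\tilde{\sigma}_{t-1,i_t}^2(\bx_t)$. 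The main step is to show that within each synchronization epoch $(t_p,t_{p+1}]$ the local variance $\tilde{\sigma}_{t-1,i_t}(\bx_t)$ is within a constant factor of the variance $\tilde{\sigma}_{t_p}(\bx_t)$ at the last sync; this will follow from a Sherman--Morrison / determinant-ratio comparison in the fixed subspace spanned by the dictionary $\cS_{t_p}$, together with the event-trigger~\eqref{eq:sync_event} and the choices $D=1/N$ and $\epsilon<1/3$. An elliptic-potential identity applied to the aggregated embedded Gram matrix, combined with the $\epsilon$-accuracy guarantee of Lemma~\ref{lem:dictionary_accuracy_global}, then bounds $\sum_t \tilde{\sigma}_{t_\text{last},i_t}^2(\bx_t)=O(\gamma_{NT})$, and substituting into the Cauchy--Schwarz bound delivers the claimed $R_{NT}=O\bigl(\sqrt{NT}(\lVert\btheta_\star\rVert\sqrt{\gamma_{NT}}+\gamma_{NT})\bigr)$.

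\paragraph{Communication cost.}
The plan is to combine a per-sync cost bound with a bound on the number of syncs $B$. By Lemma~\ref{lem:dictionary_accuracy_global}, $|\cS_t|=O(\gamma_{NT})$, so at each global synchronization the clients and server exchange an $O(\gamma_{NT}^2)$ embedded Gram matrix and vector with each of the $N$ clients (lines~10--12), together with $O(N\gamma_{NT})$ raw dictionary samples (lines~8--9), for a per-sync cost of $O(N\gamma_{NT}^2)$. To count syncs, the event-trigger~\eqref{eq:sync_event} guarantees that at every sync some client's accumulated squared approximated variance exceeds $D$, so
\[
B\cdot D \;\le\; \sum_{s=1}^{NT}\tilde{\sigma}_{t_\text{last},i_s}^2(\bx_s) \;=\; O(\gamma_{NT}),
\]
where the right-hand bound is the variance sum derived in the regret analysis. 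Setting $D=1/N$ gives $B=O(N\gamma_{NT})$, and multiplying by the per-sync cost yields $C_{NT}=O(N^2\gamma_{NT}^3)$.

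\paragraph{Expected main obstacle.}
The hardest step is the within-epoch comparison between the local variance $\tilde{\sigma}_{t-1,i_t}$ and the stale variance $\tilde{\sigma}_{t_p}$. Because each client keeps collecting and using fresh local data between syncs without refreshing the dictionary $\cS_{t_p}$, the embedded covariance $\tilde{\bA}_{t,i}$ drifts inside a frozen subspace while the newly arrived data may have components outside it. Consequently the standard elliptic-potential / determinant-ratio argument does not apply directly to the full-dimensional covariance $\bA_{t,i}$; the proof must execute the Sherman--Morrison comparison in the embedded subspace and then tie it back to the $\epsilon$-accuracy guarantee of Lemma~\ref{lem:dictionary_accuracy_global} so that the component of the new data orthogonal to $\cS_{t_p}$ is controlled. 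This is precisely the technical improvement noted in Section~\ref{sec:related_works} that allows \algtwo{} to update its local estimator between synchronizations, unlike BBKB~\cite{calandriello2020near} which must freeze the model during a batch.
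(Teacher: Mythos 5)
Your proposal is correct and reaches all of the theorem's conclusions, but it follows a genuinely different route from the paper's proof. The paper adopts the DisLinUCB template: it partitions the horizon into \emph{good} epochs (where $\ln\bigl(\det(\bI+\lambda^{-1}\bK_{[t_p],[t_p]})/\det(\bI+\lambda^{-1}\bK_{[t_{p-1}],[t_{p-1}]})\bigr)\le 1$) and \emph{bad} epochs (at most $2\gamma_{NT}$ of them, by pigeonhole), and bounds regret separately in each. Good epochs use the chain $\tilde{\sigma}_{t-1,i}\le\tilde{\sigma}_{t_{p-1},i}\le\frac{1+\epsilon}{1-\epsilon}\hat{\sigma}_{t_{p-1}}\le\sqrt{e}\,\frac{1+\epsilon}{1-\epsilon}\hat{\sigma}_{t-1}$ followed by the elliptic potential on the centralized variance; bad epochs use the event-trigger~\eqref{eq:sync_event}, Cauchy--Schwarz over clients, the round-robin assumption, and the bad-epoch count. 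You instead derive a single global bound $\sum_{t}\tilde{\sigma}^2_{t-1,i_t}(\bx_t)=O(\gamma_{NT})$ by chaining (monotonicity of the embedded variance within an epoch) $\to$ ($\epsilon$-accuracy, Lemma~\ref{lem:rls}) $\to$ (per-epoch variance-ratio bound, Lemma~\ref{lem:bound_variance_ratio}, controlled to be $O(1)$ by the trigger and $D=1/N$) $\to$ (elliptic potential, Lemma~\ref{lem:sum_sqr_log_det}), and then feed that one quantity into both the regret (via Cauchy--Schwarz) and the communication count (via $B D\lesssim\sum_t\tilde{\sigma}^2_{t_{\text{last}},i_t}(\bx_t)$). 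This is cleaner in that it unifies the two analyses around one potential sum and avoids the good/bad split entirely; the paper's decomposition exposes more clearly which epochs are responsible for which portion of regret. The two approaches rely on the same technical ingredients, so neither buys a tighter constant.

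One point worth tightening when you write it out: you describe the key within-epoch variance comparison as a ``Sherman--Morrison\,/\,determinant-ratio comparison in the fixed subspace spanned by the dictionary $\cS_{t_p}$,'' suggesting you intend to manipulate the embedded Gram matrix $\bZ^{\top}\bZ$ directly. That is subtler than it looks: the approximated variance $\tilde{\sigma}^2_{t,i}(\bx)=\phi(\bx)^{\top}\tilde{\bA}_{t,i}^{-1}\phi(\bx)$ queries with the \emph{full} $\phi(\bx)$ while the data enters $\tilde{\bA}_{t,i}$ only through its projection $\bP_{\cS_{t_p}}\phi_s$, so Lemma~\ref{lem:quadratic_det_inequality} does not apply verbatim to the pair $(\tilde{\bA}_{t,i},\tilde{\bA}_{t_p,i})$ with query $\phi(\bx)$. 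The clean route --- and the one the paper uses --- is to first pass from $\tilde{\sigma}$ to the exact $\hat{\sigma}$ via the $\epsilon$-accuracy guarantee (Lemma~\ref{lem:rls}), apply the exact-variance ratio bound of Lemma~\ref{lem:bound_variance_ratio}, and then convert back. With that substitution your argument is airtight, and the constants $\frac{1+\epsilon}{1-\epsilon}\le 2$ (from $\epsilon<1/3$) and $1+\frac{1+\epsilon}{1-\epsilon}(ND+L^2/\lambda)=O(1)$ (from $D=1/N$) fall out exactly as in the paper.
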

Here we provide a proof sketch for Theorem \ref{thm:regret_comm_sync}, and the complete proof can be found in appendix.
\begin{proof}[Proof Sketch]
Similar to the analysis of \algone{} in Section \ref{subsec:diskernel_ucb} and DisLinUCB from \citep{wang2019distributed}, the cumulative regret incurred by \algtwo{} can be decomposed in terms of `good' and `bad' epochs, and bounded separately. Here an epoch refers to the time period in-between two consecutive global synchronizations, e.g., the $p$-th epoch refers to $[t_{p-1}+1,t_{p}]$. 
Now consider an imaginary centralized agent that has immediate access to each data point in the learning system, and denote by $A_{t}=\sum_{s=1}^{t} \phi_{s} \phi_{s}^{\top}$ for $t \in [NT]$ the matrix constructed by this centralized agent.
We call the $p$-th epoch a good epoch if $\ln(\frac{\det(\bI + \lambda^{-1}\bK_{[t_{p}],[t_{p}]})}{\det(\bI + \lambda^{-1}\bK_{[t_{p-1}],[t_{p-1}]})}) \leq 1$,
otherwise it is a bad epoch. 
Note that 
$\ln(\frac{\det(\bI + \lambda^{-1}\bK_{[t_{1}],[t_{1}]})}{\det(\bI)})+\ln(\frac{\det(\bI + \lambda^{-1}\bK_{[t_{2}],[t_{2}]})}{\det(\bI + \lambda^{-1}\bK_{[t_{1}],[t_{1}]})})+\dots+\ln(\frac{\det(\bI + \lambda^{-1}\bK_{[NT],[NT]})}{\det(\bI + \lambda^{-1}\bK_{[t_{B}],[t_{B}]})}) = \ln(\det(\bI + \lambda^{-1} \bK_{[NT],[NT]})) \leq 2 \gamma_{NT}$, where the last equality is due to the matrix determinant lemma, and the last inequality is by the definition of the maximum information gain $\gamma_{NT}$ in Lemma \ref{lem:regret_comm_diskernelucb}.
Then based on the pigeonhole principle, there can be at most $2 \gamma_{NT}$ bad epochs.

By combining Lemma \ref{lem:rls} and Lemma \ref{lem:confidence_ellipsoid_approx}, 
we can bound the cumulative regret incurred during all good epochs, i.e., $R_{good} = O(\sqrt{NT}\gamma_{NT})$, which matches the optimal regret attained by the KernelUCB algorithm in centralized setting. 
Our analysis deviates from that of \algone{} in the bad epochs, because of the difference in the event-trigger.
Previously, the event-trigger of \algone{} directly bounds the cumulative regret each client incurs during a bad epoch, i.e., 
\small
$\sum_{t \in \cD_{t_{p}}(i) \setminus \cD_{t_{p-1}}(i) }\hat{\sigma}_{t-1,i}(\bx_{t}) \leq \sqrt{ (|\cD_{t_{p}}(i)|-|\cD_{t_{p-1}}(i)|) \log\left(\det(\bI + \lambda^{-1} \bK_{\cD_{t}(i_{t}), \cD_{t}(i_{t})} )/\det(\bI + \lambda^{-1} \bK_{\cD_{t}(i_{t})\setminus \Delta \cD_{t}(i_{t}), \cD_{t}(i_{t}) \setminus \Delta \cD_{t}(i_{t})})\right)} < \sqrt{D}$. 
\normalsize
However, the event trigger of \algtwo{} only bounds part of it, i.e., $\sum_{t \in \cD_{t_{p}}(i) \setminus \cD_{t_{p-1}}(i) }\tilde{\sigma}_{t-1,i}(\bx_{t}) \leq  \sqrt{ (|\cD_{t_{p}}(i)|-|\cD_{t_{p-1}}(i)|) D } $, which leads to $R_{bad}=O(\sqrt{T}\gamma_{NT} N \sqrt{D})$ that is slightly worse than that of \algone{}, i.e., a $\sqrt{T}$ factor in place of the $\sqrt{\gamma_{NT}}$ factor.
By setting $D=1/N$, we have $R_{NT}=O(\sqrt{NT}\gamma_{NT})$. Note that, to make sure $\epsilon_{t,i}=\bigl(\epsilon+1- \frac{1}{1+\frac{1+\epsilon}{1-\epsilon} \frac{1}{N}} \bigr) \in [0,1)$ is still well-defined, we can set $\epsilon < 1/3$.

For communication cost analysis, we bound the total number of epochs $B$ by upper bounding the total number of summations like $\sum_{s = t_{p-1}+1}^{t_{p}} \hat{\sigma}^{2}_{t_{p-1}}(\bx_{s})$, over the time horizon $NT$. Using Lemma \ref{lem:rls}, our event-trigger in Eq~\eqref{eq:sync_event} provides a lower bound $\sum_{s = t_{p-1}+1}^{t_{p}} \hat{\sigma}^{2}_{t_{p-1}}(\bx_{s}) \geq \frac{1-\epsilon}{1+\epsilon}D$. Then in order to apply the pigeonhole principle, we continue to upper bound the summation over all epochs, $\sum_{p=1}^{B} \sum_{s = t_{p-1}+1}^{t_{p}} \hat{\sigma}^{2}_{t_{p-1}}(\bx_{s}) = \sum_{p=1}^{B} \sum_{s = t_{p-1}+1}^{t_{p}} \hat{\sigma}^{2}_{s-1}(\bx_{s}) \frac{\hat{\sigma}^{2}_{t_{p-1}}(\bx_{s})}{\hat{\sigma}^{2}_{s-1}(\bx_{s})}$ by deriving a uniform bound for the ratio $\frac{\hat{\sigma}^{2}_{t_{p-1}}(\bx_{s})}{\hat{\sigma}^{2}_{s-1}(\bx_{s})} \leq \frac{\hat{\sigma}^{2}_{t_{p-1}}(\bx_{s})}{\hat{\sigma}^{2}_{t_{p}}(\bx_{s})} \leq 1 + \sum_{s=t_{p-1}+1}^{t_{p}} \hat{\sigma}^{2}_{t_{p-1}}(\bx_{s}) \leq 1 + \frac{1+\epsilon}{1-\epsilon} \sum_{s=t_{p-1}+1}^{t_{p}} \tilde{\sigma}^{2}_{t_{p-1}}(\bx_{s}) $ in terms of the communication threshold $D$ on each client. This leads to the following upper bound about the total number of epochs $B \leq \frac{1+\epsilon}{1-\epsilon}[ \frac{1}{D} + \frac{1+\epsilon}{1-\epsilon} (N + L^{2}/(\lambda D)) ] 2 \gamma_{NT}$, and with $D=1/N$, we have $C_{NT} \leq B \cdot N \gamma_{NT}^{2} = O(N^{2} \gamma_{NT}^{3}) $, which completes the proof.
\end{proof}

\begin{remark} \label{rmk:2}
Compared with \algone{}'s $O(TN^{2}d)$ communication cost, \algtwo{} removes the linear dependence on $T$, but introduces an additional $\gamma_{NT}^{3}$ dependence due to the communication of the embedded statistics. In situations where $\gamma_{NT} \ll T^{1/3} d^{1/3}$, \algone{} is preferable. 
As mentioned in Lemma \ref{lem:regret_comm_diskernelucb}, the value of $\gamma_{NT}$, which affects how much the data can be compressed,
depends on the specific arm set of the problem and the kernel function of the choice.
By Mercer's Theorem, one can represent the kernel using its eigenvalues, and $\gamma_{NT}$ characterizes how fast its eigenvalues decay. Vakili et al. \citep{vakili2021information} showed that for kernels whose eigenvalues decay exponentially, i.e., $\lambda_{m}=O(\exp(- m^{\beta_{e}}))$, for some $\beta_{e}>0$, $\gamma_{NT}=O(\log^{1+\frac{1}{\beta_{e}}}(NT))$. In this case, \algtwo{} is far more efficient than \algone{}. This includes Gaussian kernel, which is widely used for GPs and SVMs.
For kernels that have polynomially decaying eigenvalues, i.e., $\lambda_{m}=O(m^{-\beta_{p}})$, for some $\beta_{p} > 1$, $\gamma_{NT} = O(T^{\frac{1}{\beta_{p}}} \log^{1-\frac{1}{\beta_{p}}}(NT))$. Then as long as $\beta_{p} > 3$, \algtwo{} still enjoys reduced communication cost.
\end{remark}
\section{Experiments}

In order to evaluate \algtwo{}'s effectiveness in reducing communication cost, we performed extensive empirical evaluations on both synthetic and real-world datasets, and the results (averaged over 3 runs) are reported in Figure \ref{fig:synthetic_exp_results}, \ref{fig:uci_exp_results} and \ref{fig:recommendation_exp_results}, respectively.
We included \algone{}, 
DisLinUCB \citep{wang2019distributed}, OneKernelUCB, and NKernelUCB \citep{chowdhury2017kernelized} as baselines, where One-KernelUCB learns a shared bandit model across all clients' aggregated data where data aggregation happens immediately after each new data point becomes available, and N-KernelUCB learns a separated bandit model for each client with no communication. For all the kernelized algorithms, we used the Gaussian kernel $k(x, y) = \exp(-\gamma \lVert x-y \rVert^{2})$.
We did a grid search of $\gamma \in \{0.1, 1, 4\}$ for kernelized algorithms, and set $D=20$ for DisLinUCB and \algone{}, $D=5$ for \algtwo{}. For all algorithms, instead of using their theoretically derived exploration coefficient $\alpha$, we followed the convention \cite{li2010contextual,zhou2020neural} to use grid search for $\alpha$ in $\{0.1, 1, 4\}$. 
Due to space limit, here we only present the experiment results and discussions. Details about the experiment setup are presented in appendix.

\begin{figure}[t]
\centering 
\subfigure[$\cos(3 \bx^{\top} \btheta_{\star})$]{\label{fig:a}\includegraphics[width=0.35\textwidth]{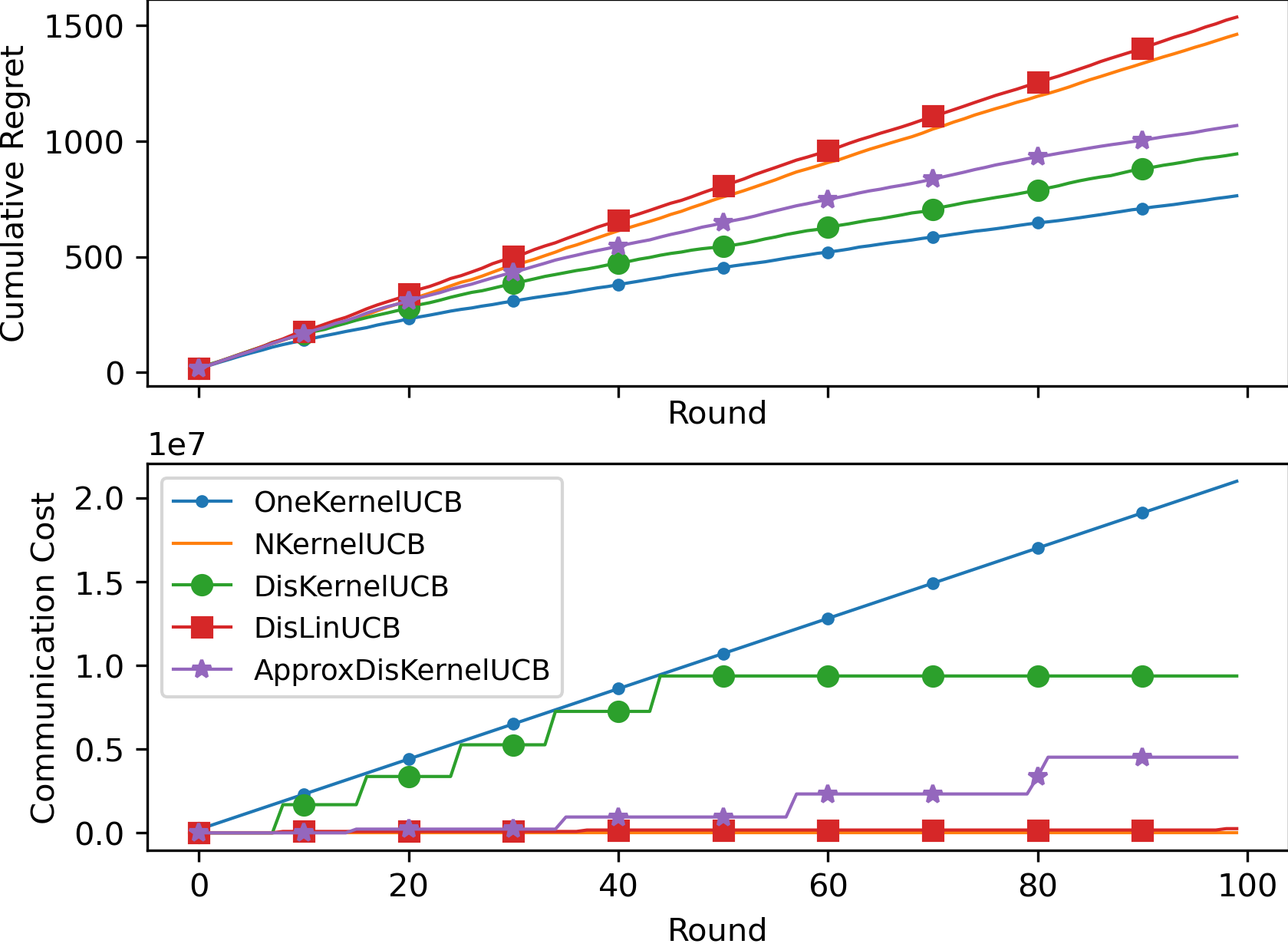}}
\subfigure[$ (\bx^{\top} \btheta_{\star})^{3} - 3(\bx^{\top} \btheta_{\star})^{2} - (\bx^{\top} \btheta_{\star}) + 3$]{\label{fig:b}\includegraphics[width=0.35\textwidth]{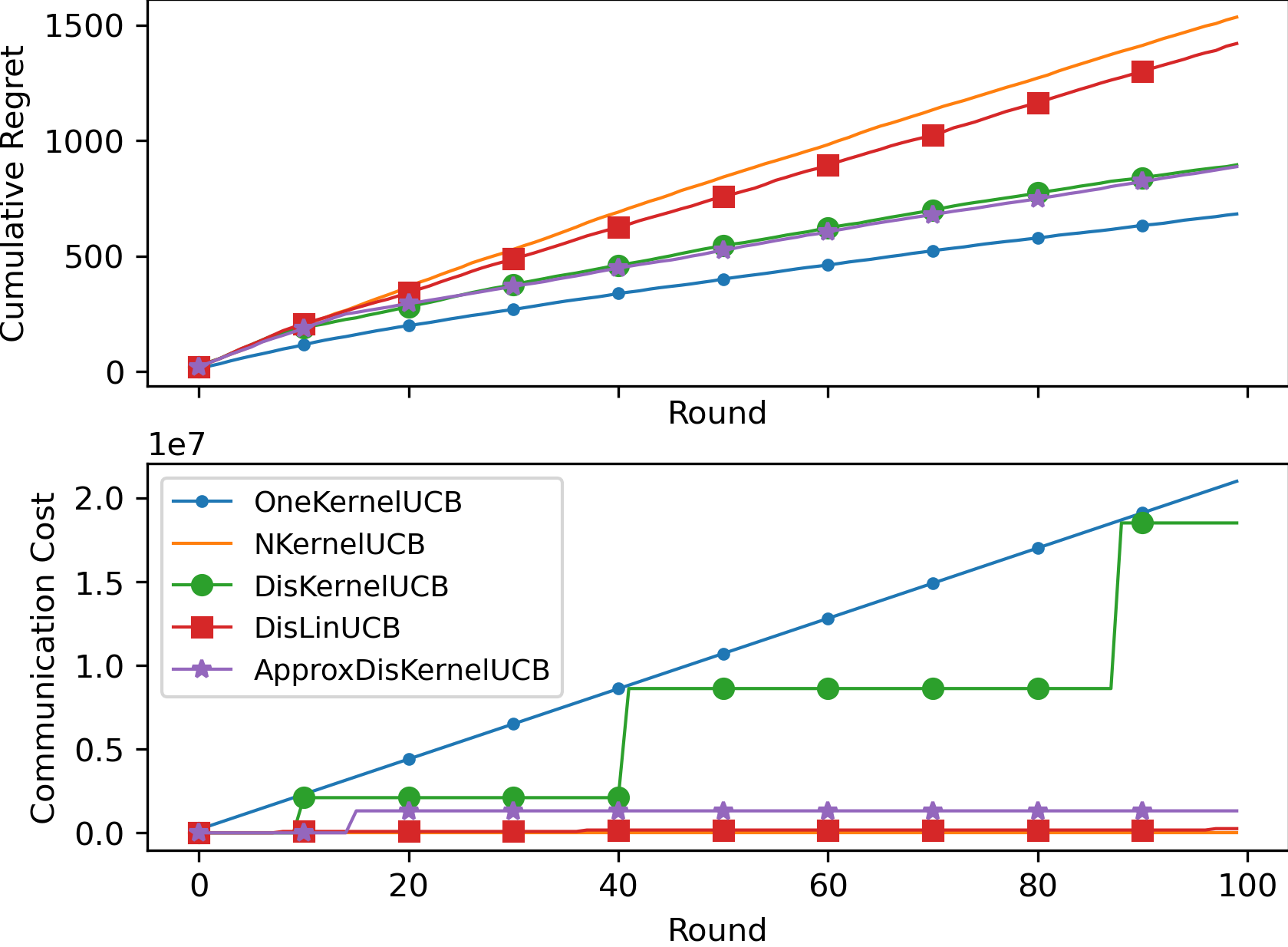}}
\caption{Experiment results on synthetic datasets with different reward function $f(\bx)$.}
\label{fig:synthetic_exp_results}
\end{figure}

\begin{figure}[t]
\centering 
\subfigure[MagicTelescope]{\label{fig:c}\includegraphics[width=0.32\textwidth]{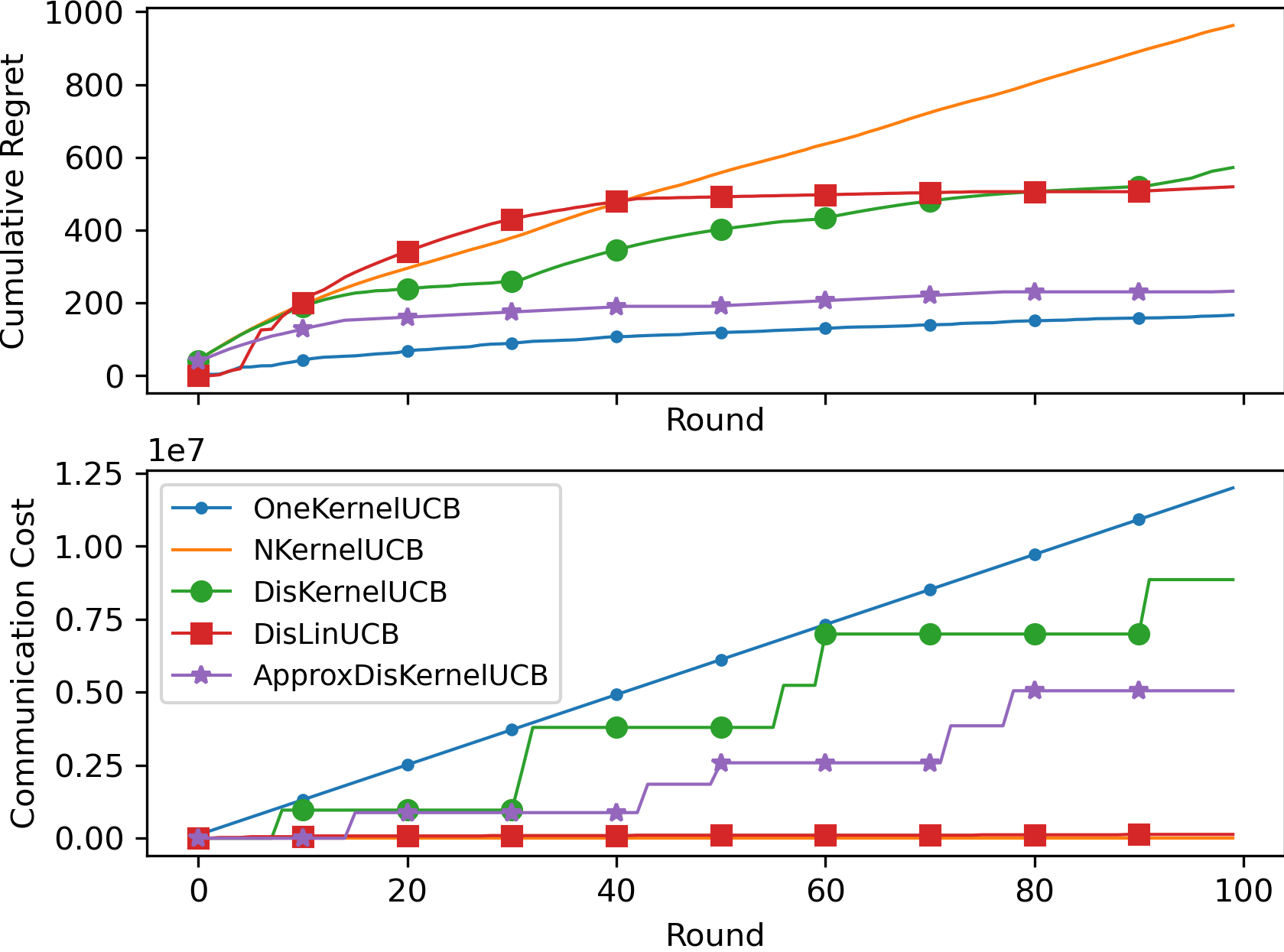}}
\subfigure[Mushroom]{\label{fig:d}\includegraphics[width=0.32\textwidth]{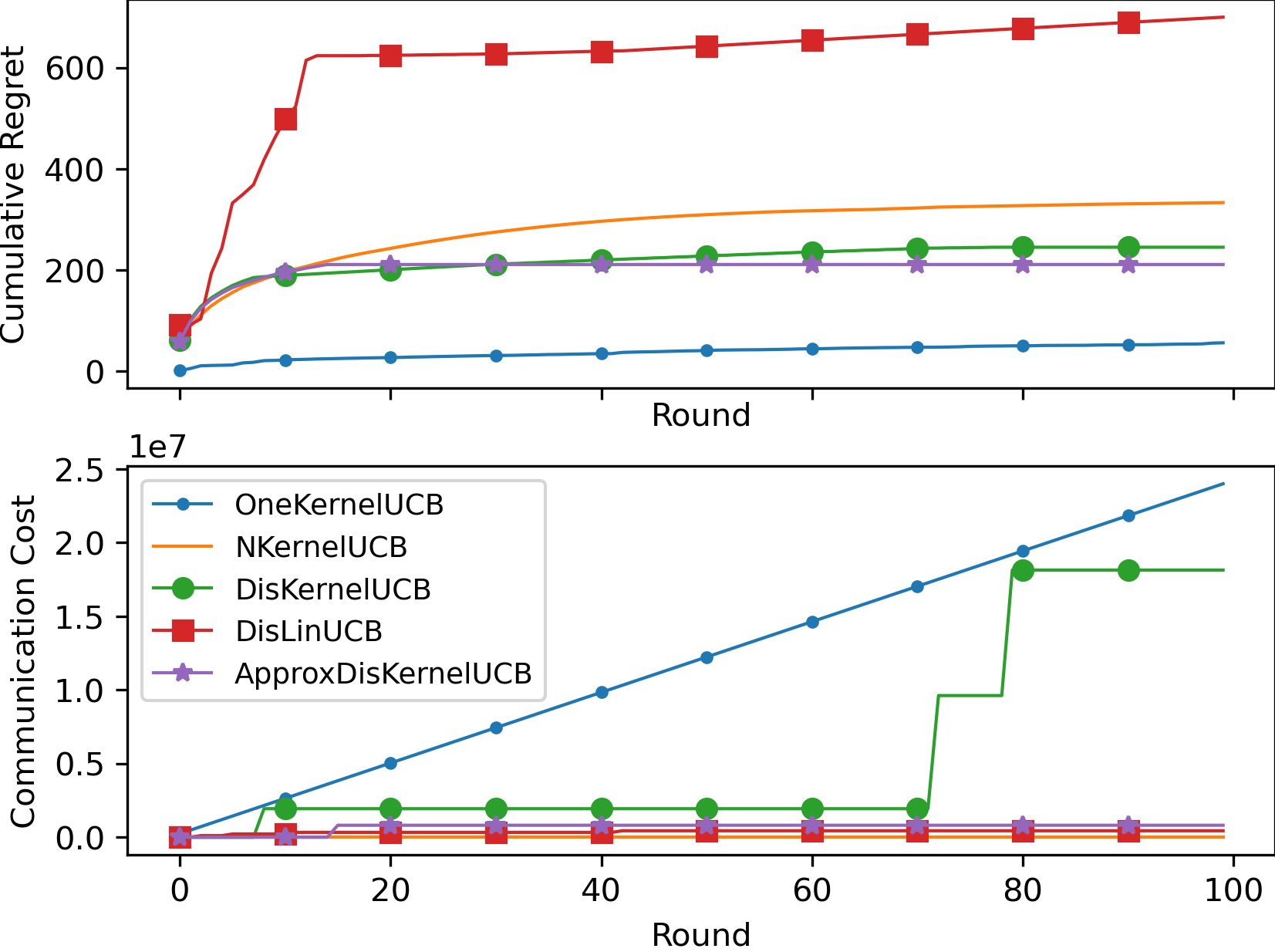}}
\subfigure[Shuttle]{\label{fig:e}\includegraphics[width=0.32\textwidth]{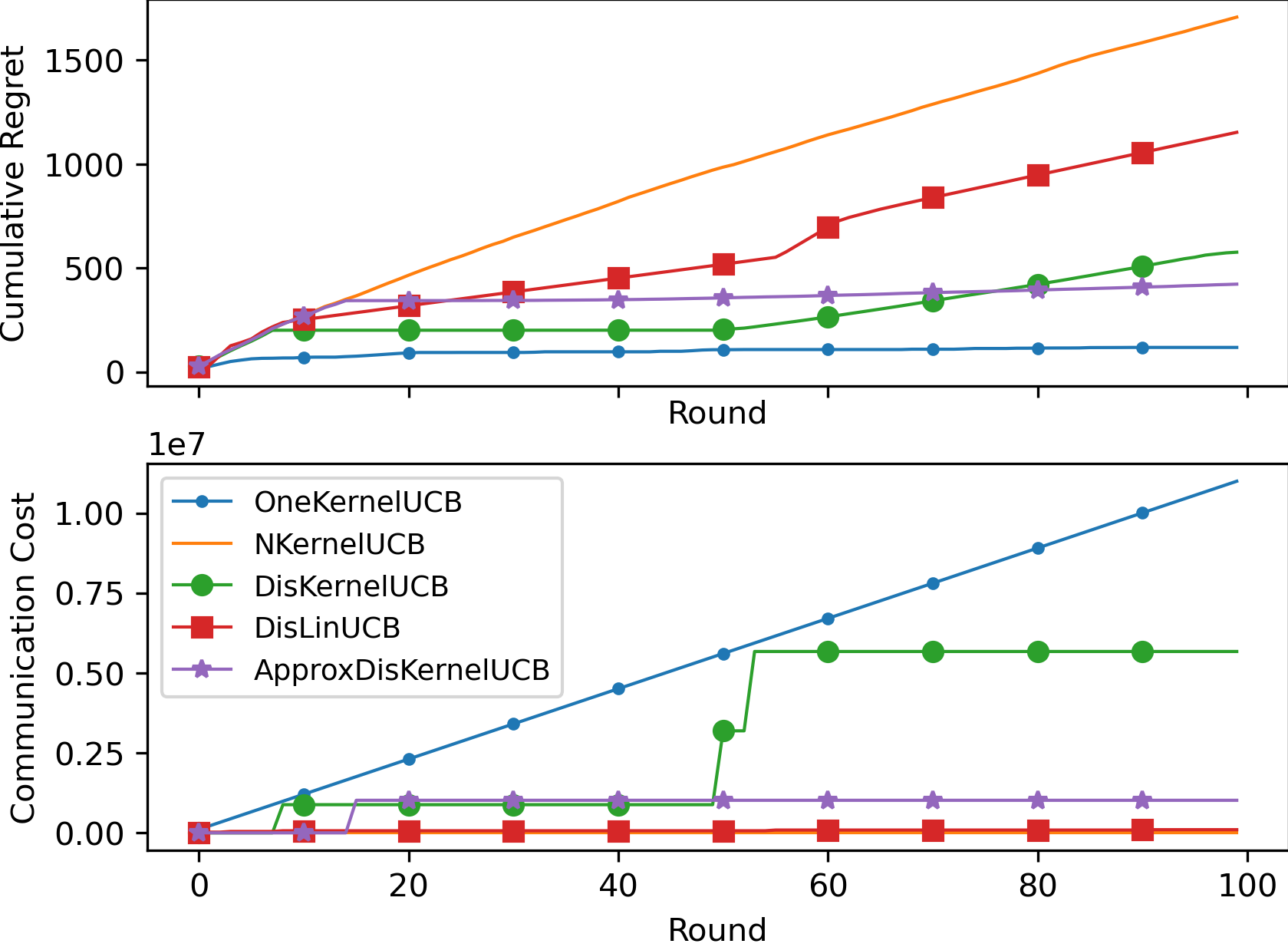}}
\caption{Experiment results on UCI datasets.}
\label{fig:uci_exp_results}
\end{figure}

\begin{figure}[h!]
\centering 
\subfigure[MovieLens]{\label{fig:f}\includegraphics[width=0.35\textwidth]{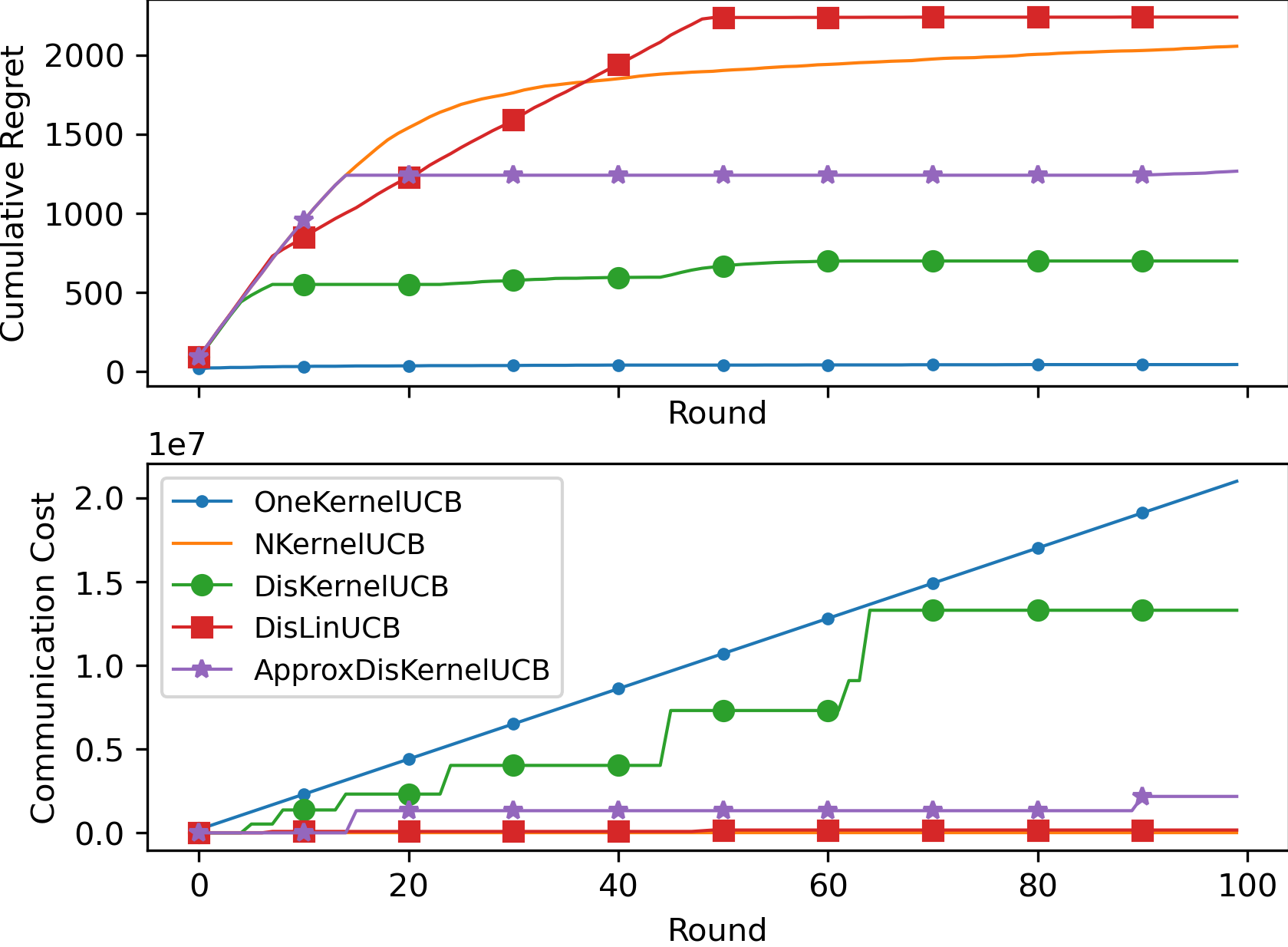}}
\subfigure[Yelp]{\label{fig:g}\includegraphics[width=0.35\textwidth]{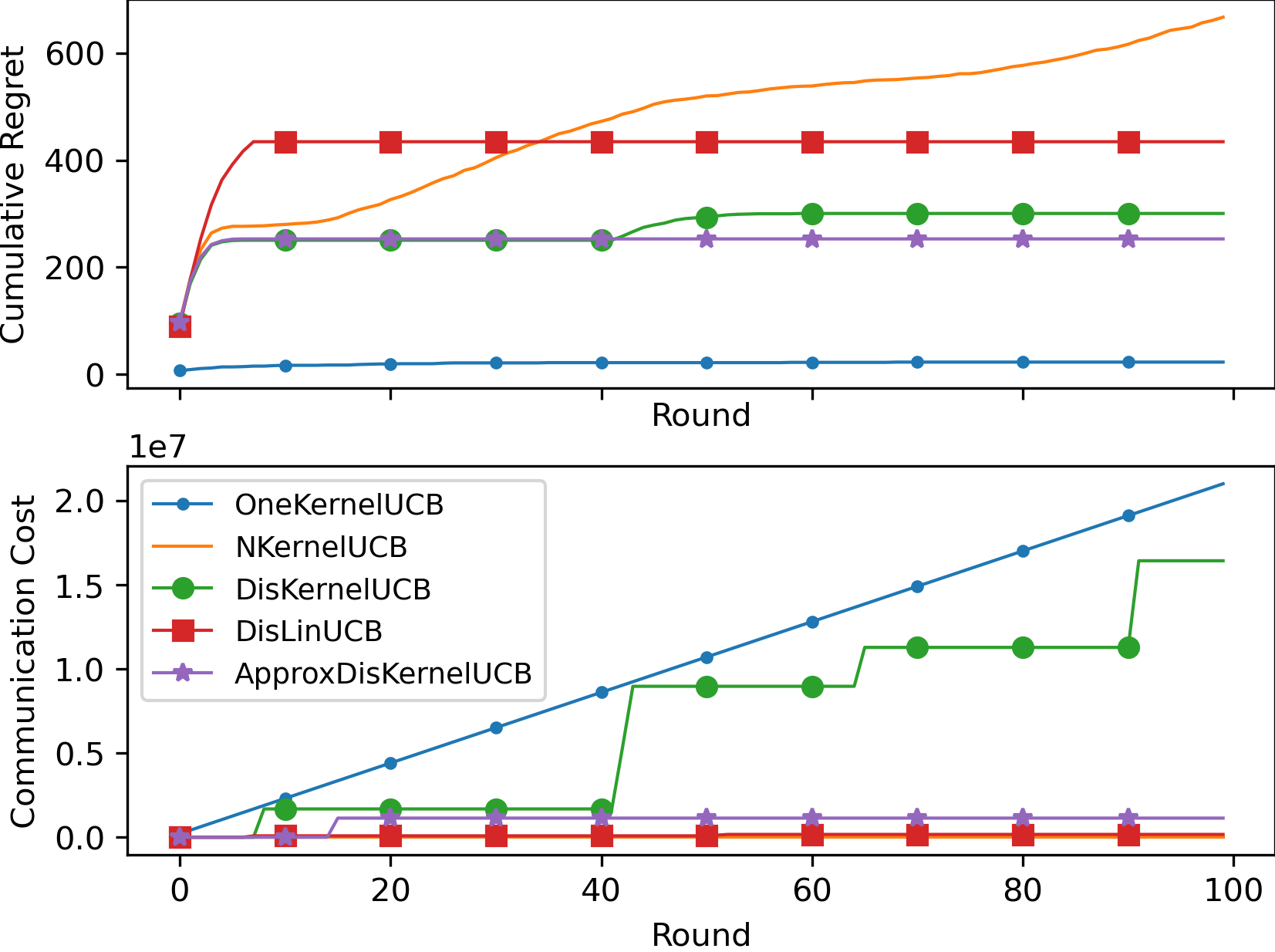}}
\caption{Experiment results on MovieLens \& Yelp datasets.}
\label{fig:recommendation_exp_results}
\end{figure}

When examining the experiment results presented in Figure \ref{fig:synthetic_exp_results}, \ref{fig:uci_exp_results} and \ref{fig:recommendation_exp_results}, we can first look at the cumulative regret and communication cost of OneKernelUCB and NKernelUCB, which correspond to the two extreme cases where the clients communicate in every time step to learn a shared model, and each client learns its own model independently with no communication, respectively.
OneKernelUCB achieves the smallest cumulative regret in all experiments, while also incurring the highest communication cost, i.e., $O(T N^{2} d)$. This demonstrates the need of efficient data aggregation across clients for reducing regret. 
Second, we can observe that \algone{} incurs the second highest communication cost in all experiments due to the transfer of raw data, as we have discussed in Remark \ref{rmk:1}, which makes it prohibitively expensive for distributed setting.
On the other extreme, we can see that DisLinUCB incurs very small communication cost thanks to its closed-form solution, but fails to capture the complicated reward mappings in most of these datasets, e.g. in Figure \ref{fig:a}, \ref{fig:d} and \ref{fig:f}, it leads to even worse regret than NKernelUCB that learns a kernelized bandit model independently for each client.
In comparison, the proposed \algtwo{} algorithm enjoys the best of both worlds in most cases, i.e., it can take advantage of the superior modeling power of kernels to reduce regret, while only requiring a relatively low communication cost for clients to collaborate. On all the datasets, \algtwo{} achieved comparable regret with \algone{} that maintains exact kernelized estimators, and sometimes even getting very close to OneKernelUCB, e.g., in Figure \ref{fig:b} and \ref{fig:c}, but its communication cost is only slightly higher than that of DisLinUCB.

\section{Conclusion}
In this paper, we proposed the first communication efficient algorithm for distributed kernel bandits using Nystr\"{o}m approximation. Clients in the learning system project their local data to a finite RKHS spanned by a shared dictionary, and then communicate the embedded statistics for collaborative exploration. To ensure communication efficiency, the frequency of dictionary update and synchronization of embedded statistics are controlled by an event-trigger.
The algorithm is proved to incur $O(N^{2} \gamma_{NT}^{3})$ communication cost, while attaining the optimal $O(\sqrt{NT}\gamma_{NT})$ cumulative regret.

We should note that the total number of synchronizations required by \algtwo{} is $N \gamma_{NT}$, which is $\sqrt{N}$ worse than \algone{}. 
An important future direction of this work is to investigate whether this part can be further improved.
It is also interesting to extend the proposed algorithm to
P2P setting, at the absence of a central server to coordinate the update of the shared dictionary and the exchange of embedded statistics. 
Due to the delay in propagating messages, it may be beneficial to utilize possible local structures in the network of clients, and approximate each block of the kernel matrix separately \citep{si2014memory}, i.e., each block corresponds to a group of clients, instead of directly approximating the complete matrix.

\section{Acknowledgement}
This work is supported by NSF grants IIS-2128019, IIS-1838615, IIS-1553568, IIS-2107304, CMMI-1653435, AFOSR grant and ONR grant 1006977.

\bibliography{bibfile}

\clearpage
\appendix
\section{Technical Lemmas}


\begin{lemma}[Lemma 12 of \citep{abbasi2011improved}] \label{lem:quadratic_det_inequality}
Let $A$, $B$ and $C$ be positive semi-definite matrices with finite dimension, such that $A=B+C$. Then, we have that:
\begin{align*}
    \sup_{\bx \neq \textbf{0}} \frac{\bx^{\top} A \bx}{\bx^{\top} B \bx} \leq \frac{\det(A)}{\det(B)}
\end{align*}
\end{lemma}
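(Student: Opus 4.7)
The plan is to reduce the statement to a Rayleigh-quotient computation and then invoke multiplicativity of the determinant. I would first note that the interesting case is $B \succ 0$ (strictly positive definite): if $B$ is only positive semidefinite with a nontrivial kernel, then either $A$ shares that kernel, in which case both sides of the claimed inequality are formally $0/0$ and the statement should be interpreted in the limit, or there exists $\bx \in \ker B$ with $\bx^\top A \bx > 0$, in which case $\det(B) = 0$ and the right-hand side is $+\infty$. So I would fix $B \succ 0$ henceforth.

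The first step is to change variables by $\by = B^{1/2} \bx$, which is a bijection of $\mathbb{R}^n \setminus \{0\}$ onto itself. Under this substitution,
\begin{align*}
\frac{\bx^\top A \bx}{\bx^\top B \bx} \;=\; \frac{\by^\top B^{-1/2} A B^{-1/2} \by}{\by^\top \by},
\end{align*}
so the supremum on the left equals $\lambda_{\max}\bigl(B^{-1/2} A B^{-1/2}\bigr)$. Using $A = B + C$, I would write $B^{-1/2} A B^{-1/2} = I + M$ with $M := B^{-1/2} C B^{-1/2}$. Since $C \succeq 0$ and congruence preserves positive semidefiniteness, $M \succeq 0$, and therefore $\lambda_{\max}(I + M) = 1 + \lambda_{\max}(M)$.

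The second step is the core scalar inequality. Let $\mu_1 \geq \mu_2 \geq \cdots \geq \mu_n \geq 0$ be the eigenvalues of $M$. Then
\begin{align*}
1 + \mu_1 \;\leq\; \prod_{i=1}^{n}(1 + \mu_i),
\end{align*}
which is immediate because expanding the product gives $1 + \sum_i \mu_i + (\text{nonnegative cross terms})$, and $\sum_i \mu_i \geq \mu_1$. The right side is precisely $\det(I + M)$, and by multiplicativity of the determinant,
\begin{align*}
\det(I + M) \;=\; \det\bigl(B^{-1/2} A B^{-1/2}\bigr) \;=\; \frac{\det(A)}{\det(B)}.
\end{align*}
Chaining the three displays gives the lemma.

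There is essentially no deep obstacle; the only place requiring care is the degenerate case where $B$ is singular, which I would dispatch by the boundary discussion above. The whole argument is a standard Rayleigh-quotient-plus-$\det$ computation, and the crucial structural fact being used is only that $C \succeq 0$ makes every eigenvalue of $M$ nonnegative, so that dropping all cross terms in the product $\prod_i(1+\mu_i)$ preserves the inequality direction.
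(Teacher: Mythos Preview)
Your proof is correct. The paper does not itself prove this lemma (it is quoted from Abbasi-Yadkori et al.), but the proof it gives for the kernel/infinite-dimensional extension immediately afterward follows the original Abbasi-Yadkori argument, which is structurally different from yours. That argument first treats the rank-one case $C = mm^\top$: Cauchy--Schwarz gives $(\bx^\top m)^2 \leq \lVert \bx \rVert_B^2\,\lVert m \rVert_{B^{-1}}^2$, hence $\bx^\top A\bx / \bx^\top B\bx \leq 1 + \lVert m \rVert_{B^{-1}}^2$, and the matrix determinant lemma identifies the right-hand side with $\det(A)/\det(B)$; a general $C$ is then handled by writing $C = \sum_s m_s m_s^\top$ and telescoping the ratio through the intermediate matrices $B + \sum_{s' < s} m_{s'} m_{s'}^\top$. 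Your Rayleigh-quotient-plus-eigenvalue-product route is more direct in finite dimensions and avoids the induction entirely, whereas the rank-one-and-telescoping approach is what lets the paper extend the inequality to the RKHS setting, since it never diagonalizes an operator and works only with kernel-evaluable scalar quantities.
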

\begin{lemma}[Extension of Lemma \ref{lem:quadratic_det_inequality} to kernel matrix] \label{lem:quadratic_det_inequality_infinite}
Define positive definite matrices $A=\lambda \bI + \bPhi_{1}^{\top}\bPhi_{1} + \bPhi_{2}^{\top}\bPhi_{2}$ and $B=\lambda \bI + \bPhi_{1}^{\top}\bPhi_{1}$, where $\bPhi_{1}^{\top}\bPhi_{1},\bPhi_{2}^{\top}\bPhi_{2} \in \bR^{p \times p}$ and $p$ is possibly infinite. Then, we have that:
\begin{align*}
    \sup_{\phi \neq \textbf{0}} \frac{\phi^{\top} A \phi}{\phi^{\top} B \phi} \leq \frac{\det(\bI+ \lambda^{-1} \bK_{A})}{\det(\bI + \lambda^{-1} \bK_{B})}
\end{align*}
where $\bK_{A}=\begin{bmatrix} \bPhi_{1}\\\bPhi_{2} \end{bmatrix} \begin{bmatrix} \bPhi_{1}^{\top},\bPhi_{2}^{\top} \end{bmatrix}$ and $\bK_{B}=\bPhi_{1} \bPhi_{1}^{\top}$.
\end{lemma}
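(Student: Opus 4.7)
The strategy is to reduce the claim to the finite-dimensional Lemma~\ref{lem:quadratic_det_inequality} by projecting onto the span of the rows of $\bPhi_1$ and $\bPhi_2$, and then to convert the resulting determinant ratio into the claimed kernel-matrix form using the Sylvester identity $\det(\bI + \lambda^{-1} M^{\top} M) = \det(\bI + \lambda^{-1} M M^{\top})$, which holds for any matrix $M$ with finitely many rows.

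First I would let $V \subseteq \cH$ denote the finite-dimensional subspace spanned by the rows of $\bPhi_1$ and $\bPhi_2$ (so $m := \dim V \leq n_1 + n_2$, where $n_i$ is the number of rows of $\bPhi_i$), and decompose each $\phi \in \cH$ orthogonally as $\phi = \phi_V + \phi_\perp$. Since $\bPhi_1 \phi_\perp = 0$ and $\bPhi_2 \phi_\perp = 0$, a direct expansion gives
\begin{align*}
\phi^{\top} A \phi &= \phi_V^{\top} A \phi_V + \lambda \lVert \phi_\perp \rVert^{2}, \\
\phi^{\top} B \phi &= \phi_V^{\top} B \phi_V + \lambda \lVert \phi_\perp \rVert^{2}.
\end{align*}
Writing $a = \phi_V^{\top} A \phi_V$, $b = \phi_V^{\top} B \phi_V$, $c = \lambda \lVert \phi_\perp \rVert^{2}$, monotonicity $A \succeq B$ gives $a \geq b > 0$ (for $\phi_V \neq 0$) and $c \geq 0$, so the elementary inequality $(a+c)/(b+c) \leq a/b$ shows that the supremum of $\phi^{\top} A \phi / \phi^{\top} B \phi$ is attained on $V$ itself; the remaining case $\phi_V = 0$ gives ratio $1$, which is dominated by the bound I will derive on $V$.

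Next, I would pick an orthonormal basis of $V$ and let $\mathbf{\Psi}_1, \mathbf{\Psi}_2$ be the coordinate representations of $\bPhi_1|_V, \bPhi_2|_V$ in this basis, with $\mathbf{\Psi} = \begin{bmatrix}\mathbf{\Psi}_1 \\ \mathbf{\Psi}_2\end{bmatrix}$. Then $A|_V$ and $B|_V$ become $\lambda \bI_m + \mathbf{\Psi}^{\top} \mathbf{\Psi}$ and $\lambda \bI_m + \mathbf{\Psi}_1^{\top} \mathbf{\Psi}_1$, satisfying $A|_V = B|_V + \mathbf{\Psi}_2^{\top} \mathbf{\Psi}_2$. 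Applying the finite-dimensional Lemma~\ref{lem:quadratic_det_inequality} followed by Sylvester's identity yields
\[\sup_{\phi_V \neq 0}\frac{\phi_V^{\top} A \phi_V}{\phi_V^{\top} B \phi_V} \leq \frac{\det(\bI_m + \lambda^{-1} \mathbf{\Psi}^{\top} \mathbf{\Psi})}{\det(\bI_m + \lambda^{-1} \mathbf{\Psi}_1^{\top} \mathbf{\Psi}_1)} = \frac{\det(\bI + \lambda^{-1} \mathbf{\Psi} \mathbf{\Psi}^{\top})}{\det(\bI + \lambda^{-1} \mathbf{\Psi}_1 \mathbf{\Psi}_1^{\top})}.\]
To identify the right-hand side with the target, I would verify $\mathbf{\Psi} \mathbf{\Psi}^{\top} = \bK_A$ and $\mathbf{\Psi}_1 \mathbf{\Psi}_1^{\top} = \bK_B$, which holds because the rows of $\bPhi_1, \bPhi_2$ lie in $V$ and the basis is orthonormal in $\cH$, so coordinate inner products coincide with the RKHS inner products defining the kernel matrices.

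The only real subtlety is the reduction-to-$V$ step: the possibly infinite-dimensional operators $A, B$ must split cleanly as $(A|_V) \oplus \lambda \bI_{V^{\perp}}$ and $(B|_V) \oplus \lambda \bI_{V^{\perp}}$, which works because $A - \lambda \bI$ and $B - \lambda \bI$ are finite sums of rank-one operators built from vectors in $V$. Once this is in place, the remainder is a mechanical translation of Lemma~\ref{lem:quadratic_det_inequality} through Sylvester's identity, and I do not anticipate any deeper obstacle.
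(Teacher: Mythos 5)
Your argument is correct, but it takes a genuinely different route from the paper's. The paper never reduces to a finite-dimensional subspace; instead it follows the proof of Lemma~12 of \citep{abbasi2011improved} step by step in the RKHS: it first handles the rank-one case $\bPhi_2^{\top}\bPhi_2 = mm^{\top}$ via Cauchy--Schwarz to get $\phi^{\top}A\phi / \phi^{\top}B\phi \leq 1 + \lVert m \rVert_{B^{-1}}^{2}$, uses a kernel-trick identity (Eq.~(27) of \citep{zenati2022efficient}) to rewrite $1 + \lVert m \rVert_{B^{-1}}^{2}$ as a ratio of kernel-matrix determinants, and then telescopes over the rows of $\bPhi_2$ one rank-one update at a time so the intermediate determinant ratios cancel. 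Your approach instead observes that $A - \lambda\bI$ and $B - \lambda\bI$ are supported on the finite-dimensional span $V$ of the rows, splits $\phi$ into $\phi_V + \phi_\perp$, uses the elementary scalar inequality $(a+c)/(b+c) \leq a/b$ to show the supremum is attained on $V$, applies the cited finite-dimensional lemma once, and passes from the $m\times m$ feature-space determinants to the kernel-matrix determinants via the Sylvester identity. The paper's telescope essentially re-proves Lemma~12 from scratch with kernel bookkeeping built in at each step, which keeps everything in the dual (kernel) representation throughout but repeats the Cauchy--Schwarz argument; your reduction invokes the finite-dimensional result as a black box, at the cost of one structural observation (the block-diagonal splitting $A = (A|_V) \oplus \lambda\bI_{V^\perp}$) and one application of Sylvester at the end. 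Both are complete; yours is slightly shorter and makes the logical dependence on Lemma~\ref{lem:quadratic_det_inequality} explicit rather than implicit.
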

\begin{proof}[Proof of Lemma \ref{lem:quadratic_det_inequality_infinite}]
Similar to the proof of Lemma 12 of \citep{abbasi2011improved}, we start from the simple case when $ \bPhi_{2}^{\top}\bPhi_{2}=m m^{\top}$, where $m \in \bR^{p}$. Using Cauchy-Schwartz inequality, we have
\begin{align*}
    (\phi^{\top} m)^{2} = (\phi^{\top}B^{1/2}B^{-1/2}m)^{2} \leq \lVert B^{1/2} \phi \rVert^{2} \lVert B^{-1/2} m \rVert^{2} = \lVert \phi \rVert^{2}_{B} \lVert m \rVert^{2}_{B^{-1}},
\end{align*}
and thus,
\begin{align*}
    \phi^{\top} (B + m m^{\top}) \phi \leq \phi^{\top} B \phi + \lVert \phi \rVert^{2}_{B} \lVert m \rVert^{2}_{B^{-1}} = (1+ \lVert m \rVert^{2}_{B^{-1}}) \lVert \phi \rVert^{2}_{B},
\end{align*}
so we have
\begin{align*}
    \frac{\phi^{\top} A \phi}{\phi^{\top} B \phi} \leq 1 + \lVert m \rVert_{B^{-1}}^{2}
\end{align*}
for any $\phi$. Then using the kernel trick, e.g., see the derivation of Eq (27) in \cite{zenati2022efficient}, we have
\begin{align*}
    1 + \lVert m \rVert_{B^{-1}}^{2} = \frac{\det(\bI+ \lambda^{-1} \bK_{A})}{\det(\bI + \lambda^{-1} \bK_{B})},
\end{align*}
which finishes the proof of this simple case.
Now consider the general case where $\bPhi_{2}^{\top} \bPhi_{2} = m_{1} m_{1}^{\top} + m_{2} m_{2}^{\top} + \dots + m_{t-1} m_{t-1}^{\top}$. Let's define $V_{s}=B+m_{1} m_{1}^{\top} + m_{2} m_{2}^{\top} + \dots + m_{s-1} m_{s-1}^{\top}$ and the corresponding kernel matrix $\bK_{V_{s}}=\begin{bmatrix} \bPhi_{1}\\ m_{1}^{\top}  \\ \dots \\ m_{s-1}^{\top} \end{bmatrix} \begin{bmatrix} \bPhi_{1}^{\top}, m_{1}, \dots, m_{s-1} \end{bmatrix}$, and note that $\frac{\phi^{\top} A \phi}{\phi^{\top} B \phi} = \frac{\phi^{\top} V_{t} \phi}{\phi^{\top} V_{t-1} \phi} \frac{\phi^{\top} V_{t-1} \phi}{\phi^{\top} V_{t-2} \phi} \dots \frac{\phi^{\top} V_{2} \phi}{\phi^{\top} B \phi}$.
Then we can apply the result for the simple case on each term in the product above, which gives us
\begin{align*}
    \frac{\phi^{\top} A \phi}{\phi^{\top} B \phi} & \leq \frac{\det(\bI+ \lambda^{-1} \bK_{V_{t}})}{\det(\bI + \lambda^{-1} \bK_{V_{t-1}})} \frac{\det(\bI+ \lambda^{-1} \bK_{V_{t-1}})}{\det(\bI + \lambda^{-1} \bK_{V_{t-2}})} \dots \frac{\det(\bI+ \lambda^{-1} \bK_{V_{2}})}{\det(\bI + \lambda^{-1} \bK_{B})} \\
    & = \frac{\det(\bI+ \lambda^{-1} \bK_{V_{t}})}{\det(\bI + \lambda^{-1} \bK_{B})} = \frac{\det(\bI+ \lambda^{-1} \bK_{A})}{\det(\bI + \lambda^{-1} \bK_{B})},
\end{align*}
which finishes the proof.

\end{proof}

\begin{lemma}[Eq (26) and Eq (27) of \cite{zenati2022efficient}] \label{lem:sum_sqr_log_det}
Let $\{\phi_{t}\}_{t=1}^{\infty}$ be a sequence in $\bR^{p}$, $V \in \bR^{p \times p}$ a positive definite matrix, where $p$ is possibly infinite, and define $V_{t}=V + \sum_{s=1}^{t} \phi_{s} \phi_{s}^{\top}$. Then we have that
\begin{align*}
    \sum_{t=1}^{n} \min\bigl(\lVert \phi_{t} \rVert^{2}_{V_{t-1}^{-1}}, 1\bigr) \leq 2 \ln\bigl( \det(\bI + \lambda^{-1} \bK_{V_{t}})\bigr),
\end{align*}
where $\bK_{V_{t}}$ is the kernel matrix corresponding to $V_{t}$ as defined in Lemma \ref{lem:quadratic_det_inequality_infinite}.
\end{lemma}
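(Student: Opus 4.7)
The plan is to execute the classical ``elliptical potential'' argument, then translate the resulting primal log-determinant into its kernelized (dual) form via the Weinstein--Aronszajn identity. First I would invoke the elementary inequality $\min(x,1) \le 2\ln(1+x)$ for all $x \ge 0$ (easily verified by checking the cases $x \in [0,1]$, where $\tfrac{d}{dx}[2\ln(1+x)-x] = \tfrac{2}{1+x}-1 \ge 0$, and $x\ge1$, where $1 \le 2\ln 2 \le 2\ln(1+x)$). Applying this term-by-term with $x = \lVert \phi_t \rVert^{2}_{V_{t-1}^{-1}}$ reduces the claim to bounding $\sum_{t=1}^{n} \ln(1 + \lVert \phi_t \rVert^{2}_{V_{t-1}^{-1}})$.

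Next, I would apply the matrix determinant lemma to each rank-one update: $\det(V_t) = \det(V_{t-1} + \phi_t \phi_t^\top) = \det(V_{t-1})\bigl(1 + \lVert \phi_t \rVert^{2}_{V_{t-1}^{-1}}\bigr)$. Taking logs and telescoping over $t=1,\dots,n$ yields
\begin{align*}
    \sum_{t=1}^{n} \ln\bigl(1 + \lVert \phi_t \rVert^{2}_{V_{t-1}^{-1}}\bigr) = \ln\frac{\det(V_n)}{\det(V)}.
\end{align*}
Taking $V = \lambda \bI$ (the regularization matrix, as is implicit from the $\lambda^{-1}$ on the right-hand side), we get $\det(V_n)/\det(V) = \det(\bI + \lambda^{-1}\bPhi^\top \bPhi)$, where $\bPhi \in \bR^{n \times p}$ stacks the vectors $\phi_1,\dots,\phi_n$ as rows. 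By the Weinstein--Aronszajn (Sylvester) identity, $\det(\bI_p + \lambda^{-1}\bPhi^\top \bPhi) = \det(\bI_n + \lambda^{-1}\bPhi \bPhi^\top)$, and $\bPhi\bPhi^\top$ is precisely the kernel matrix $\bK_{V_n}$ in the sense of Lemma \ref{lem:quadratic_det_inequality_infinite}. Combining the two steps gives $\sum_{t=1}^{n} \min(\lVert \phi_t \rVert^{2}_{V_{t-1}^{-1}},1) \le 2 \ln \det(\bI + \lambda^{-1}\bK_{V_n})$, which is the target bound.

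The main obstacle is the infinite-dimensional regime ($p=\infty$), in which the primal quantities $\det(V_t)$ and $V_{t-1}^{-1}$ are not directly defined. I would circumvent this by never working with the primal determinant as a standalone object: the identity $\lVert \phi_t \rVert^{2}_{V_{t-1}^{-1}} = \det(V_t)/\det(V_{t-1}) - 1$ can be re-expressed entirely in dual form via $1 + \lVert \phi_t \rVert^{2}_{V_{t-1}^{-1}} = \det(\bI + \lambda^{-1}\bK_{V_t})/\det(\bI + \lambda^{-1}\bK_{V_{t-1}})$, which is exactly the kernelized version already used in the proof of Lemma \ref{lem:quadratic_det_inequality_infinite} (the simple rank-one case computed there via the kernel trick, as cited from Eq.~(27) of \cite{zenati2022efficient}). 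Telescoping this dual ratio directly over $t=1,\dots,n$ produces $\sum_{t=1}^{n}\ln(1+\lVert \phi_t \rVert^{2}_{V_{t-1}^{-1}}) = \ln \det(\bI + \lambda^{-1}\bK_{V_n})$, bypassing the need for finite-$p$ truncation. Combining with the elementary $\min(x,1)\le 2\ln(1+x)$ step completes the proof.
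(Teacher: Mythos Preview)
Your proposal is correct and follows the standard elliptical potential argument (the $\min(x,1)\le 2\ln(1+x)$ step, telescoping the rank-one determinant updates, and passing to the kernel matrix via Sylvester's identity). The paper does not supply its own proof of this lemma at all; it simply cites it as Eq.~(26)--(27) of \cite{zenati2022efficient}, so your write-up is at least as complete as what the paper provides, and your handling of the infinite-dimensional case by telescoping directly in the dual form is exactly the maneuver the paper relies on in its proof of Lemma~\ref{lem:quadratic_det_inequality_infinite}.
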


\begin{lemma}[Lemma 4 of \citep{calandriello2020near}] \label{lem:bound_variance_ratio}
For $t > t_\text{last}$, we have for any $\bx \in \bR^{d}$
\begin{align*}
    \hat{\sigma}_{t}^{2}(\bx) \leq \hat{\sigma}_{t_\text{last}}^{2}(\bx) \leq \bigl( 1+\sum_{s=t_\text{last}+1}^{t} \hat{\sigma}_{t_\text{last}}^{2}(\bx_{s}) \bigr) \hat{\sigma}_{t}^{2}(\bx)
\end{align*}
\end{lemma}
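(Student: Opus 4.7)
The first inequality is immediate from monotonicity of the inverse on the positive cone. Writing $\bA_t = \bA_{t_\text{last}} + \sum_{s=t_\text{last}+1}^{t} \phi_s \phi_s^\top$ (where I abbreviate $\phi_s := \phi(\bx_s)$), each rank-one update is PSD so $\bA_t \succeq \bA_{t_\text{last}}$ as operators, hence $\bA_t^{-1} \preceq \bA_{t_\text{last}}^{-1}$. Evaluating at $\phi(\bx)$ on both sides yields $\hat{\sigma}_t^2(\bx) \leq \hat{\sigma}_{t_\text{last}}^2(\bx)$.

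For the second inequality, the plan is to prove the stronger operator inequality
\begin{equation*}
\bA_{t_\text{last}}^{-1} \preceq \Bigl(1 + \sum_{s=t_\text{last}+1}^{t} \hat{\sigma}_{t_\text{last}}^2(\bx_s)\Bigr) \bA_t^{-1},
\end{equation*}
from which applying both sides to $\phi(\bx)$ gives the claim. Inverting (which reverses $\preceq$ on positive operators) reduces this to showing $\bA_t \preceq (1 + \mathrm{tr}(\bK_C))\, \bA_{t_\text{last}}$, where $\bPhi_C$ collects the new feature rows $\{\phi_s^\top\}_{s=t_\text{last}+1}^{t}$ and $\bK_C := \bPhi_C \bA_{t_\text{last}}^{-1} \bPhi_C^\top$. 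Since $\bA_t - \bA_{t_\text{last}} = \bPhi_C^\top \bPhi_C$, it further suffices to establish $\bPhi_C^\top \bPhi_C \preceq \mathrm{tr}(\bK_C)\, \bA_{t_\text{last}}$.

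I would then conjugate by $\bA_{t_\text{last}}^{-1/2}$, which turns the target into $\bA_{t_\text{last}}^{-1/2} \bPhi_C^\top \bPhi_C \bA_{t_\text{last}}^{-1/2} \preceq \mathrm{tr}(\bK_C)\, \bI$. The nonzero eigenvalues of the left-hand side coincide with those of $\bPhi_C \bA_{t_\text{last}}^{-1} \bPhi_C^\top = \bK_C$ by the $\lambda(AB)=\lambda(BA)$ identity, so its operator norm equals $\lambda_{\max}(\bK_C) \leq \mathrm{tr}(\bK_C)$, closing the bound. To finish, $\mathrm{tr}(\bK_C) = \sum_{s=t_\text{last}+1}^{t} \phi_s^\top \bA_{t_\text{last}}^{-1} \phi_s = \sum_{s=t_\text{last}+1}^{t} \hat{\sigma}_{t_\text{last}}^2(\bx_s)$, matching the factor in the lemma.

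The main obstacle is that $p$ may be infinite, so the manipulations with $\bA_{t_\text{last}}^{-1/2}$, the eigenvalue bound $\lambda_{\max}(\bK_C) \leq \mathrm{tr}(\bK_C)$, and the operator inequalities require justification. I would resolve this by restricting to the finite-dimensional subspace $V := \mathrm{span}\{\phi(\bx), \phi_s : s \leq t\} \subset \cH$: on $V$ the matrix $\bA_{t_\text{last}}$ acts as a finite PSD operator (its component on $V^\perp$ is just $\lambda \bI$, which drops out of every quadratic form appearing in the argument), and all the algebraic manipulations become finite-dimensional. As a safer alternative, one can replay each step directly in terms of kernel matrices using the same style of substitution that appears in the proof of Lemma \ref{lem:quadratic_det_inequality_infinite}, which avoids explicit reference to infinite-dimensional operators altogether.
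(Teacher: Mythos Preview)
Your argument is correct. The first inequality follows from operator monotonicity as you say, and for the second you establish the operator inequality $\bA_t \preceq (1+\mathrm{tr}(\bK_C))\,\bA_{t_\text{last}}$ by conjugating and using $\lambda_{\max}(\bK_C)\le \mathrm{tr}(\bK_C)$; this is clean and complete, and your remark about passing to the finite-dimensional span to justify the operator manipulations is the right way to handle the RKHS issue.

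As for comparison: the paper does not actually supply a proof of this lemma. It is imported verbatim as Lemma~4 of \cite{calandriello2020near} and used as a black box in the proofs of Lemma~\ref{lem:dictionary_accuracy} and the communication-cost analysis. So there is no in-paper argument to compare against; your write-up is a self-contained proof of a result the authors simply cite. For what it is worth, the trace-versus-largest-eigenvalue bound you use is exactly the mechanism behind the original proof in \cite{calandriello2020near}, so you have reconstructed essentially the same argument.
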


\section{Confidence Ellipsoid for DisKernelUCB}
In this section, we construct the confidence ellipsoid for DisKernelUCB as shown in Lemma \ref{lem:confidence_ellipsoid_diskernelucb}.
\begin{lemma}[Confidence Ellipsoid for DisKernelUCB] \label{lem:confidence_ellipsoid_diskernelucb}
Let $\delta \in (0,1)$. With probability at least $1-\delta$, for all $t \in [NT], i \in [N]$, we have
\begin{align*}
    \lVert \hat{\theta}_{t,i} - \theta_{\star} \rVert_{\bA_{t,i}} \leq \sqrt{\lambda} \lVert \theta_{\star} \rVert + R\sqrt{ 2 \ln(N/\delta) + \ln( \det( \bK_{ \cD_{t}(i), \cD_{t}(i) }/\lambda + \bI) ) }.
\end{align*}
\end{lemma}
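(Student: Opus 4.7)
The plan is to follow the standard template for deriving a self-normalized confidence ellipsoid of a regularized least-squares estimator in an RKHS, with modifications to handle the distributed martingale structure.

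\textbf{Decomposition step.} First, I would substitute $\by_{\cD_t(i)} = \bPhi_{\cD_t(i)}\theta_\star + \eta_{\cD_t(i)}$ into the definition of $\hat\theta_{t,i}$ to obtain
\begin{align*}
\hat\theta_{t,i}-\theta_\star \;=\; \bA_{t,i}^{-1}\bPhi_{\cD_t(i)}^\top \eta_{\cD_t(i)} \;-\; \lambda\, \bA_{t,i}^{-1}\theta_\star.
\end{align*}
Applying the triangle inequality in $\lVert\cdot\rVert_{\bA_{t,i}}$ and using $\lVert \lambda \bA_{t,i}^{-1}\theta_\star\rVert_{\bA_{t,i}} = \sqrt{\lambda}\lVert\theta_\star\rVert_{\lambda\bA_{t,i}^{-1}} \leq \sqrt{\lambda}\lVert\theta_\star\rVert$ (since $\lambda\bA_{t,i}^{-1} \preceq \bI$), reduces the task to controlling the noise term $\lVert \bPhi_{\cD_t(i)}^\top \eta_{\cD_t(i)} \rVert_{\bA_{t,i}^{-1}}$.

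\textbf{Self-normalized concentration.} For the noise term I would invoke a kernelized self-normalized martingale inequality (Theorem 1 of Abbasi-Yadkori et al.\ 2011, combined with the kernel identity $\ln\det(\bI+\bPhi^\top\bPhi/\lambda)=\ln\det(\bI+\bK/\lambda)$ as formalized in Theorem 2 of Chowdhury and Gopalan 2017). For a single fixed client's martingale this yields a bound of the form $R\sqrt{2\ln(1/\delta')+\ln\det(\bI+\bK/\lambda)}$ uniformly in the stopping time.

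\textbf{Union bound over clients.} The assumption in the paper states that $\eta_t$ is $R$-sub-Gaussian only with respect to client $i_t$'s local filtration, so the natural martingale structure is per client rather than global. I would therefore apply the self-normalized inequality separately to each of the $N$ clients' noise sequences at failure level $\delta/N$, then union bound over $j \in [N]$, which is precisely where the $\ln(N/\delta)$ factor in the stated bound comes from. To assemble the per-client bounds into one bound on $\lVert \bPhi_{\cD_t(i)}^\top\eta_{\cD_t(i)}\rVert_{\bA_{t,i}^{-1}}$, I would use a key structural fact about Algorithm~\ref{alg:diskernelucb}: since the synchronization step transfers complete histories, the intersection $\cD_t(i)\cap\cN_t(j)$ is always a \emph{prefix} of $\cN_t(j)$, so each per-client bound can be invoked at the appropriate stopping time and the per-client $\log\det$ contributions combine into the single kernel determinant $\ln\det(\bI+\bK_{\cD_t(i),\cD_t(i)}/\lambda)$ via the chain-rule identity for determinants on block matrices.

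\textbf{Main obstacle.} The difficulty is reconciling the per-client martingale bounds, which naturally live in the weighted norms $\lVert\cdot\rVert_{(\lambda\bI+\bPhi_{\cN_t(j)}^\top\bPhi_{\cN_t(j)})^{-1}}$, with the target norm $\lVert\cdot\rVert_{\bA_{t,i}^{-1}}$ that couples contributions across clients; a naive triangle inequality over the $N$ per-client sums would lose a factor of $N$ outside the square root. Avoiding this loss is exactly what the prefix structure of $\cD_t(i)$ buys us, because it allows the per-client bounds to be combined inside the $\log\det$ rather than summed outside of it. (Equivalently, if one assumes the clients' exogenous randomness is independent so that the noise is jointly sub-Gaussian with respect to the global filtration, a single global application of the self-normalized bound at level $\delta/N$ with a union bound over the $N$ possible identities of the estimator's owner yields the same conclusion directly.)
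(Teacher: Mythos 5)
Your decomposition step is correct and matches the paper's: split $\hat\theta_{t,i}-\theta_\star$ into a $-\lambda\bA_{t,i}^{-1}\theta_\star$ bias and a noise term, bound the bias by $\sqrt\lambda\lVert\theta_\star\rVert$ using $\lambda\bA_{t,i}^{-1}\preceq\bI$, and reduce to controlling $\lVert\bA_{t,i}^{-1/2}\sum_{s\in\cD_t(i)}\phi_s\eta_s\rVert$. After that, however, your main route diverges from the paper's and has a genuine gap at the point you yourself flag as the ``main obstacle.'' You propose to split $\cD_t(i)$ by \emph{source} client $j$, apply a self-normalized inequality once per $j$, and then ``combine the per-client $\log\det$ contributions into the single kernel determinant via the chain-rule identity for determinants.'' That step does not go through. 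The per-source bounds control
$\bigl\lVert\sum_{s\in\cD_t(i)\cap\cN_t(j)}\phi_s\eta_s\bigr\rVert_{V_j^{-1}}$ with $V_j=\lambda\bI+\bPhi_{\cD_t(i)\cap\cN_t(j)}^\top\bPhi_{\cD_t(i)\cap\cN_t(j)}$, and there is simply no determinant identity that converts the triangle-inequality sum of those $N$ square-root quantities into a single bound on $\bigl\lVert\sum_j\sum_{s\in\cD_t(i)\cap\cN_t(j)}\phi_s\eta_s\bigr\rVert_{\bA_{t,i}^{-1}}$: the square roots are nested under the outer sum, not multiplicative, so you land exactly at the $\sqrt N$ loss you acknowledge. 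The prefix property of $\cD_t(i)\cap\cN_t(j)$ is true of Algorithm~1 and is useful for stopping times, but it governs the \emph{index sets}, not how matrix-weighted norms add across sources; it does not resolve the obstacle.

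What the paper actually does is keep the whole sum over $\cD_t(i)$ intact and construct, for each \emph{receiving} client $i$, a single ``batched'' filtration $\{\cF_{k,i}\}_{k\in\cB_\infty(i)}$ indexed by the endpoints of the batches in which client $i$ actually receives data (Appendix~B). Inside a batch the identity of the $k$-th received point depends on future synchronization events, so point-by-point adaptedness fails; endpoints are adapted. Lemma~\ref{lem:super_martingale} builds one super-martingale on this batched filtration and Lemma~\ref{lem:self_normalized_bound} applies the stopping-time argument of Abbasi-Yadkori et al.\ at the batch scale, producing a single self-normalized bound with $\ln\det(\bI+\bK_{\cD_t(i),\cD_t(i)}/\lambda)$. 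The factor $\ln(N/\delta)$ then arises from a union over the $N$ \emph{receiving} clients $i$, not over the $N$ noise sources $j$. The modeling fact you mention only in passing is the indispensable ingredient: because $\eta_t$ is assumed $R$-sub-Gaussian conditioned on client $i_t$'s \emph{local} history, and that $\sigma$-algebra sits inside $\cF_{k,i}$, the sub-Gaussianity is argued to persist on the enlarged batched filtration---one does not need to posit joint global sub-Gaussianity, contrary to what your closing parenthetical presents as the alternative hypothesis.
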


The analysis is rooted in \citep{zenati2022efficient} for kernelized contextual bandit, but with non-trivial extensions: we adopted the stopping time argument from \citep{abbasi2011improved} to remove a logarithmic factor in $T$ (this improvement is hinted in Section 3.3 of \citep{zenati2022efficient} as well); and this stopping time argument is based on a special `batched filtration' that is different for each client, which is required to address the dependencies due to the event-triggered distributed communication. This problem also exists in prior works of distributed linear bandit, but was not addressed rigorously (see Lemma H.1. of \citep{wang2019distributed}).

Recall that the Ridge regression estimator 
\begin{align*}
    \hat{\theta}_{t,i} & = \bA_{t,i}^{-1} \sum_{s \in \cD_{t}(i)} \phi_{s} y_{s} =  \bA_{t,i}^{-1} \sum_{s \in \cD_{t}(i)} \phi_{s} ( \phi_{s} ^{\top} \theta_{\star} + \eta_{s} )  \\
    & = \theta_{\star} - \lambda \bA_{t,i}^{-1} \theta_{\star} + \bA_{t,i}^{-1} \sum_{s \in \cD_{t}(i)} \phi_{s} \eta_{s},
\end{align*}
and thus, we have
\begin{equation} \label{eq:ellipsoid_intermediate}
\begin{split}
    \lVert \bA_{t,i}^{1/2} (\hat{\theta}_{t,i} - \theta_{\star}) \rVert & = \lVert -\lambda \bA_{t,i}^{-1/2} \theta_{\star} + \bA_{t,i}^{-1/2} \sum_{s \in \cD_{t}(i)} \phi_{s} \eta_{s} \rVert  \\
    & \leq \lVert \lambda \bA_{t,i}^{-1/2} \theta_{\star} \rVert + \lVert \bA_{t,i}^{-1/2} \sum_{s \in \cD_{t}(i)} \phi_{s} \eta_{s} \rVert   \\
    & \leq \sqrt{\lambda} \lVert \theta_{\star} \rVert + \lVert \bA_{t,i}^{-1/2} \sum_{s \in \cD_{t}(i)} \phi_{s} \eta_{s} \rVert
\end{split}
\end{equation}
where the first inequality is due to the triangle inequality, and the second is due to the property of Rayleigh quotient, i.e., $\lVert \bA_{t,i}^{-1/2} \theta_{\star} \rVert \leq \lVert \theta_{\star} \rVert \sqrt{\lambda_{max}(\bA_{t,i}^{-1}) } \leq \lVert \theta_{\star} \rVert \frac{1}{\sqrt{\lambda}} $.

\paragraph{Difference from standard argument}
Note that the second term may seem similar to the ones appear in the self-normalized bound in previous works of linear and kernelized bandits \citep{abbasi2011improved,chowdhury2017kernelized,zenati2022efficient}. However, a main difference is that $\cD_{t}(i)$, i.e., the sequence of indices for the data points used to update client $i$, is constructed using the event-trigger as defined in Eq~\eqref{eq:sync_event_exact}
. The event-trigger is data-dependent, and thus it is a delayed and permuted version of the original sequence $[t]$. It is delayed in the sense that the length $|\cD_{t}(i)| < t$ unless $t$ is the global synchronization step. 
It is permuted in the sense that every client receives the data in a different order, i.e., before the synchronization, each client first updates using its local new data, and then receives data from other clients at the synchronization. This prevents us from directly applying Lemma 3.1 of \citep{zenati2022efficient}, and requires a careful construction of the filtration, as shown in the following paragraph.

\paragraph{Construction of filtration}
For some client $i$ at time step $t$, the sequence of time indices in $\cD_{t}(i)$ is arranged in the order that client $i$ receives the corresponding data points, which includes both data added during local update in each client, and data received from the server during global synchronization. The former adds one data point at a time, while the latter adds a batch of data points, which, as we will see, break the assumption commonly made in standard self-normalized bound \citep{abbasi2011improved,chowdhury2017kernelized,zenati2022efficient}.
Specifically, we denote $\cD_{t}(i)[k]$, for $k \leq |\cD_{t}(i)|$, as the $k$-th element in this sequence, i.e., $(\bx_{\cD_{t}(i)[k]}, y_{\cD_{t}(i)[k]})$ is the $k$-th data point received by client $i$. 
Then we denote $\cB_{t}(i)=\{k_{1}, k_{2},\dots\}$ as the sequence of $k$'s that marks the end of each batch (a singel data point added by local update is also considered a batch).
We can see that if the $k$-th element is in the middle of a batch, i.e., $k \in [k_{q-1}, k_{q}]$, it has dependency all the way to the $k_{q}$'s element, since this index can only be determined until some client triggers a global synchronization at time step $\cD_{t}(i)[k_{q}]$.


Denote by $\cF_{k,i}=\sigma \bigl( (\bx_{s}, \eta_{s})_{s \in \cD_{\infty}(i)[1:k-1]}, \bx_{\cD_{\infty}(i)[k]} \bigr)$ the $\sigma$-algebra generated by the sequence of data points up to the $k$-th element in $\cD_{\infty}(i)$. 
As we mentioned, because of the dependency of the index on the future data points, for some $k$-th element that is inside a batch, i.e., $k \in [k_{q-1}, k_{q}]$, $\bx_{\cD_{\infty}(i)[k]}$ is not $\cF_{k,i}$-measurable and $\eta_{\cD_{\infty}(i)[k]}$ is not $\cF_{k+1,i}$-measurable, which violate the assumption made in standard self-normalized bound \citep{abbasi2011improved,chowdhury2017kernelized,zenati2022efficient}. However, they become measurable if we condition on $\cF_{k_{q},i}$.
In addition, recall that in Section \ref{subsec:problem_formulation} we assume $\eta_{\cD_{\infty}(i)[k]}$ is zero-mean $R$-sub-Gaussian conditioned on $\sigma\bigl( (\bx_{s},\eta_{s})_{s \in \cN_{\cD_{\infty}(i)[k]-1}(i_{\cD_{\infty}(i)[k]})}, \bx_{\cD_{\infty}(i)[k]} \bigr)$, which is the $\sigma$-algebra generated by the sequence of local data collected by client $i_{\cD_{\infty}(i)[k]}$. We can see that as $\sigma\bigl( (\bx_{s},\eta_{s})_{s \in \cN_{\cD_{\infty}(i)[k]-1}(i_{\cD_{\infty}(i)[k]})}, \bx_{\cD_{\infty}(i)[k]} \bigr) \subseteq \cF_{k,i} $, $\eta_{\cD_{\infty}(i)[k]}$ is zero-mean $R$-sub-Gaussian conditioned on $\cF_{k,i}$. Basically, our assumption of $R$-sub-Gaussianity conditioned on \emph{local sequence} instead of \emph{global sequence} of data points, prevents the situation where the noise depends on data points that have not been communicated to the current client yet, i.e., they are not included in $\cF_{k,i}$.

With our `batched filtration' $\{\cF_{k,i}\}_{k \in \cB_{\infty}(i)}$ for each client $i$, we have everything we need to establish a time-uniform self-normalized bound that resembles 
Lemma 3.1 of \citep{zenati2022efficient}, but with improved logarithmic factor using the stopping time argument from \citep{abbasi2011improved}. Then we can take a union bound over $N$ clients to obtain the uniform bound over all clients and time steps.
\paragraph{Super-martingale \& self-normalized bound}
First, we need the following lemmas adopted from \citep{zenati2022efficient} to our `batched filtration'.
\begin{lemma} \label{lem:super_martingale}
Let $\upsilon \in \bR^{d}$ be arbitrary and consider for any $k \in \cB_{\infty}(i)$, $i \in [N]$, we have
\begin{align*}
    M_{k,i}^{\upsilon} = \exp\left( - \frac{1}{2} \lambda \upsilon^{\top}\upsilon + \sum_{s \in \cD_{\infty}(i)[1:k] } [ \frac{\eta_{s}<\upsilon, X_{s}>}{R} - \frac{1}{2} \upsilon^{\top} X_{s} X_{s}^{\top} \upsilon ]  \right)
\end{align*}
is a $\cF_{k+1,i}$-super-martingale, and $\bE[ M_{k,i}^{\upsilon} ] \leq \exp(-\frac{1}{2} \lambda \upsilon^{\top}\upsilon )$.
Let $\tilde{k}$ be a stopping time w.r.t. the filtration $\{\cF_{k,i}\}_{k \in \cB_{\infty}(i)}$. Then $M_{\tilde{k},i}^{\upsilon}$ is almost surely well-defined and $\bE[M_{\tilde{k},i}^{\upsilon}] \leq 1$.
\end{lemma}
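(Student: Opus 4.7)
I would follow the standard self-normalized super-martingale construction of Abbasi-Yadkori et al.\ (2011), adapted to the batched filtration $\{\cF_{k,i}\}_{k\in\cB_\infty(i)}$ introduced just before the statement. The target is to show the one-step super-martingale inequality $\bE[M_{k_q,i}^\upsilon\mid\cF_{k_{q-1},i}]\le M_{k_{q-1},i}^\upsilon$ between consecutive batch indices $k_{q-1}<k_q$, then iterate from $M_{0,i}^\upsilon=\exp(-\tfrac12\lambda\upsilon^\top\upsilon)$ (which is deterministic) to conclude $\bE[M_{k,i}^\upsilon]\le\exp(-\tfrac12\lambda\upsilon^\top\upsilon)$ for every $k\in\cB_\infty(i)$, and finally extend to stopping times with the usual truncate-and-Fatou trick.

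\textbf{Refinement within a batch.} Inside the batch $[k_{q-1}+1,\,k_q]$, I would reorder its elements by their original global time stamps $s_1<s_2<\dots<s_m$ and define a refined $\sigma$-algebra $\mathcal{G}_{s_j}:=\sigma\bigl(\cF_{k_{q-1},i}\cup\{(X_{s_\ell},\eta_{s_\ell})\}_{\ell<j}\cup\{X_{s_j}\}\bigr)$. The crucial measurability observation is that the entire local history of client $i_{s_j}$ up to time $s_j-1$, together with $X_{s_j}$, sits inside $\mathcal{G}_{s_j}$: any data point $(X_{s'},\eta_{s'})$ that client $i_{s_j}$ collected at an earlier time $s'<s_j$ was either delivered to client $i$ before $k_{q-1}$ (so it lies in $\cF_{k_{q-1},i}$) or arrives in the current batch (so it coincides with some $(X_{s_\ell},\eta_{s_\ell})$ with $\ell<j$ by the time-ordering). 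Combining this with the Section~3.1 assumption of conditional $R$-sub-Gaussianity given client $i_{s_j}$'s local history plus $X_{s_j}$, together with cross-client noise independence implicit in the bandit interaction model, the standard sub-Gaussian MGF bound yields
$$\bE\!\left[\exp\!\left(\frac{\eta_{s_j}\langle\upsilon,X_{s_j}\rangle}{R}-\tfrac12\upsilon^\top X_{s_j}X_{s_j}^\top\upsilon\right) \mid \mathcal{G}_{s_j}\right]\le 1.$$

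\textbf{Assembly and stopping-time extension.} Telescoping $M_{k_q,i}^\upsilon/M_{k_{q-1},i}^\upsilon$ as the product of these $m$ factors and repeatedly invoking the tower property from $\mathcal{G}_{s_m}$ down to $\cF_{k_{q-1},i}$ delivers the one-step super-martingale inequality, since $M_{k_{q-1},i}^\upsilon$ is $\cF_{k_{q-1},i}$-measurable; iterating recovers $\bE[M_{k,i}^\upsilon]\le\exp(-\tfrac12\lambda\upsilon^\top\upsilon)\le 1$. For a stopping time $\tilde{k}$ with respect to $\{\cF_{k,i}\}_{k\in\cB_\infty(i)}$, the stopped process $M_{\min(\tilde{k},k),i}^\upsilon$ is a nonnegative super-martingale obeying the same expectation bound; Fatou's lemma as $k\to\infty$ then yields $\bE[M_{\tilde{k},i}^\upsilon]\le 1$, while almost-sure well-definedness follows from the martingale convergence theorem for nonnegative super-martingales.

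\textbf{Expected obstacle.} The technical hinge is the refinement step: verifying that enlarging the conditioning from client $i_{s_j}$'s local $\sigma$-algebra up to $\mathcal{G}_{s_j}$ preserves the sub-Gaussian MGF bound, despite $\mathcal{G}_{s_j}$ pooling data from other clients and from earlier positions in the same synchronization batch. This is exactly where the deliberate choice (emphasized in the paragraph above Lemma~\ref{lem:super_martingale}) to assume sub-Gaussianity conditional on the \emph{local} sequence pays off: the only extra information in $\mathcal{G}_{s_j}$ beyond client $i_{s_j}$'s local history comes from other clients' interactions, so the cross-client noise independence of the bandit model lets the bound pass through unchanged. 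Getting this measurability bookkeeping right—in particular, the batch-internal time-reordering trick—is the one nonstandard move the rest of the proof depends on.
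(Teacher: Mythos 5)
Your proposal follows essentially the same route as the paper's: define a one-step factor for each element of $\cD_\infty(i)$, show its conditional moment generating function is at most $1$ using the local-history sub-Gaussianity assumption, multiply and iterate via the tower property, then extend to stopping times by the same stopped-process/Fatou argument as Lemma 8 of Abbasi-Yadkori et al.\ (2011). The paper's proof of Lemma~\ref{lem:super_martingale} defines $D_{k,i}^\upsilon$ at every integer position $k$, asserts $\bE[D_{k,i}^\upsilon \mid \cF_{k,i}] \leq 1$ by observing that $\sigma\bigl((\bx_s,\eta_s)_{s\in\cN_{\cD_\infty(i)[k]-1}(i_{\cD_\infty(i)[k]})}, \bx_{\cD_\infty(i)[k]}\bigr)\subseteq\cF_{k,i}$, and concludes by the product/tower argument directly.

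The one place you add something the paper leaves implicit is the within-batch time-reordering: you explicitly refine each synchronization batch by its members' original global timestamps, so that any datum collected by the acting client $i_{s_j}$ before time $s_j$ is guaranteed to have already appeared in the refined sequence and hence to lie in $\mathcal{G}_{s_j}$. Without this (or an equivalent convention on how $\cD_\infty(i)$ orders elements received in one batch), the paper's subset inclusion claim need not hold, so making the convention explicit is a genuine tightening rather than mere bookkeeping. On the other hand, both you and the paper lean on the same unproven step: passing from sub-Gaussianity conditioned on the acting client's \emph{local} $\sigma$-algebra to sub-Gaussianity conditioned on the \emph{larger} $\cF_{k,i}$ (or your $\mathcal{G}_{s_j}$). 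Enlarging the conditioning set does not preserve conditional sub-Gaussianity in general; what is needed is that the extra information in $\cF_{k,i}$ beyond the acting client's local history is conditionally independent of the noise $\eta_{s_j}$. You at least name this as a needed ``cross-client noise independence'' property, whereas the paper silently invokes it as if the subset inclusion alone sufficed. Neither of you proves it from the stated assumptions, so this shared reliance is best flagged as a modeling hypothesis rather than a derived fact.
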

\begin{proof}[Proof of Lemma \ref{lem:super_martingale}]
To show that $\{M_{k,i}^{\upsilon}\}_{ k \in \cB_{\infty}(i) }$ is a super-martinagle, we denote
\begin{align*}
    D_{k,i}^{\upsilon} = \exp\left( \frac{\eta_{\cD_{\infty}(i)[k]}<\upsilon, X_{\cD_{\infty}(i)[k]}>}{R} - \frac{1}{2} <\upsilon, X_{\cD_{\infty}(i)[k]}>^{2}  \right),
\end{align*}
with $D_{0,i}^{\upsilon} = \exp(-\frac{1}{2} \lambda \upsilon^{\top}\upsilon )$,
and as we have showed earlier, $\eta_{\cD_{\infty}(i)[k]}$ is $R$-sub-Gaussian conditioned on $\cF_{k,i}$. Therefore, $\bE[ D_{k,i}^{\upsilon} | \cF_{k,i} ] \leq 1$. Moreover, $D_{k,i}^{\upsilon}$ and $M_{k,i}^{\upsilon}$ are $\cF_{k+1,i}$-measurable. Then we have
\begin{align*}
    \bE[ M_{k,i}^{\upsilon} | \cF_{k,i} ] & = \bE[D_{1,i}^{\upsilon} D_{2,i}^{\upsilon} \dots D_{k-1,i}^{\upsilon} D_{k,i}^{\upsilon} | \cF_{k,i}]  \\
    & = D_{0,i}^{\upsilon} D_{1,i}^{\upsilon} D_{2,i}^{\upsilon} \dots D_{k-1,i}^{\upsilon} \bE[ D_{k,i}^{\upsilon} | \cF_{k,i}] \leq M_{k-1,i}^{\upsilon},
\end{align*}
which shows $\{M_{k,i}^{\upsilon}\}_{ k \in \cB_{\infty}(i) }$ is a super-martinagle, with $\bE[ M_{k,i}^{\upsilon} ] \leq D_{0,i}^{\upsilon}=\exp(-\frac{1}{2} \lambda \upsilon^{\top}\upsilon )$.
Then using the same argument as Lemma 8 of \citep{abbasi2011improved}, we have that $M_{\tilde{k},i}^{\upsilon}$ is almost surely well-defined, and $\bE[M_{\tilde{k},i}^{\upsilon}] \leq D_{0,i}^{\upsilon}=\exp(-\frac{1}{2} \lambda \upsilon^{\top}\upsilon )$.
\end{proof}

\begin{lemma} \label{lem:self_normalized_bound}
Let $\tilde{k}$ be a stopping time w.r.t. the filtration $\{\cF_{k,i}\}_{k \in \cB_{\infty}(i)}$. Then for $\delta > 0$, we have
\small
\begin{align*}
    & P\Big( \lVert  ( \lambda \bI +  \sum_{s \in \cD_{\infty}(i)[1:\tilde{k}] } X_{s} X_{s}^{\top} )^{-1/2} (\sum_{s \in \cD_{\infty}(i)[1:\tilde{k}] } X_{s} \eta_{s} ) \rVert > R\sqrt{2 \ln(1/\delta) + \ln( \det( \bK_{ \cD_{\infty}(i)[1:\tilde{k}], \cD_{\infty}(i)[1:\tilde{k}] }/\lambda + \bI) ) }\Big) \\
    & \leq \delta.
\end{align*}
\normalsize
\end{lemma}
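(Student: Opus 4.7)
The plan is to apply the method of mixtures of Peña-Lai-Shao, in the kernelized form used by Chowdhury-Gopalan and Zenati et al., to the batched super-martingale $M_{k,i}^\upsilon$ established in Lemma \ref{lem:super_martingale}, and combine it with Doob's optional stopping theorem to handle the stopping time $\tilde{k}$ without paying a union bound over $k$. The high-level idea is to integrate $M_{k,i}^\upsilon$ against a suitable measure on $\upsilon$, yielding a parameter-free non-negative process $\bar{M}_{k,i}$ whose value explicitly contains the quantity to be bounded; Markov's inequality at $\tilde{k}$ then inverts into the stated self-normalized bound.

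Concretely, I would define $\bar{M}_{k,i} := \int M_{k,i}^\upsilon\, d\upsilon$ with respect to the Lebesgue measure on the finite-dimensional span that controls the integrand (see below). Writing $V_{k,i} := \sum_{s \in \cD_\infty(i)[1:k]} X_s X_s^\top$ and $S_{k,i} := \sum_{s \in \cD_\infty(i)[1:k]} X_s \eta_s$, the exponent of $M_{k,i}^\upsilon$ is the quadratic form $-\tfrac{1}{2}\upsilon^\top(\lambda\bI + V_{k,i})\upsilon + R^{-1}\upsilon^\top S_{k,i}$, so completing the square and evaluating the Gaussian integral gives the closed form
\[
\bar{M}_{k,i} \;=\; \frac{(2\pi)^{d/2}}{\sqrt{\det(\lambda\bI + V_{k,i})}} \,\exp\!\left(\tfrac{1}{2R^2}\, S_{k,i}^\top (\lambda\bI + V_{k,i})^{-1} S_{k,i}\right).
\]
Combining Lemma \ref{lem:super_martingale} (the $\bE[M_{\tilde{k},i}^\upsilon] \leq \exp(-\tfrac{\lambda}{2}\|\upsilon\|^2)$ bound applied at the stopping time $\tilde{k}$) with Fubini yields $\bE[\bar{M}_{\tilde{k},i}] \leq (2\pi/\lambda)^{d/2}$. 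Markov's inequality then gives $\bar{M}_{\tilde{k},i} \leq (2\pi/\lambda)^{d/2}/\delta$ with probability at least $1-\delta$; taking logarithms, rearranging, invoking the Weinstein-Aronszajn identity $\det(\bI + V_{\tilde{k},i}/\lambda) = \det(\bI + \bK_{\cD_\infty(i)[1:\tilde{k}], \cD_\infty(i)[1:\tilde{k}]}/\lambda)$, and taking a square root produces exactly the stated bound.

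The main obstacle is that $\cH$ may be infinite-dimensional, so the Lebesgue integral over $\upsilon$ and the determinant of $\lambda\bI + V_{k,i}$ are not well-defined as written. I would handle this in the standard manner (as in Zenati et al.) by noting that $M_{k,i}^\upsilon$ depends on $\upsilon$ only through its projection onto $\cF_k := \mathrm{span}\{\phi(X_s) : s \in \cD_\infty(i)[1:k]\}$: the component of $\upsilon$ orthogonal to $\cF_k$ contributes a factor $\exp(-\tfrac{\lambda}{2}\|\upsilon_\perp\|^2)$ that cancels between the left-hand side $\bar{M}_{\tilde{k},i}$ and the right-hand side $\bE[\bar{M}_{\tilde{k},i}]$ in the Markov step, leaving a genuine finite-dimensional Gaussian calculation that is exactly captured by the kernel determinant. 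A secondary subtlety is that the optional stopping argument must be justified for $\bar{M}$ rather than just $M^\upsilon$, but this follows from the same $M_{\tilde{k} \wedge n,i}^\upsilon$ truncation plus monotone convergence used in Lemma \ref{lem:super_martingale}, applied under the integral by Fubini.
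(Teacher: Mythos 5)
Your proposal is correct and takes essentially the same approach as the paper: complete the square in the exponent of $M^{\upsilon}_{\tilde{k},i}$, integrate out $\upsilon$ via the method of mixtures (Laplace's method), bound the mixture's expectation at the stopping time by applying Lemma~\ref{lem:super_martingale} under the integral, invert with Markov, and convert the log-determinant to the kernel form via the Weinstein--Aronszajn identity. The paper compresses the Gaussian-integral step and the treatment of infinite-dimensional $\cH$ into a citation to Lemma~3.1 of Zenati et al., whereas you spell out the finite-span projection argument explicitly, but the underlying proof is the same.
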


\begin{proof}[Proof of Lemma \ref{lem:self_normalized_bound}]
Using $m:=( \lambda \bI +  \sum_{s \in \cD_{\infty}(i)[1:k] } X_{s} X_{s}^{\top} )^{-1} (\sum_{s \in \cD_{\infty}(i)[1:k] } X_{s} \eta_{s} )$, we can rewrite $M_{\tilde{k},i}^{\upsilon}$ as
\begin{align*}
    M_{\tilde{k},i}^{\upsilon} & = \exp\left( -\frac{1}{2} (\upsilon-m)^{\top} ( \lambda \bI +  \sum_{s \in \cD_{\infty}(i)[1:k] } X_{s} X_{s}^{\top} ) (\upsilon-m) \right) \\
    & \quad \times \exp\left( \frac{1}{2} \lVert  ( \lambda \bI +  \sum_{s \in \cD_{\infty}(i)[1:k] } X_{s} X_{s}^{\top} )^{-1/2} (\sum_{s \in \cD_{\infty}(i)[1:k] } X_{s} \eta_{s} ) \rVert^{2} \right).
\end{align*}
Then based on Lemma \ref{lem:super_martingale}, we have
\begin{align*}
    & \bE\Big[ \exp\Big( -\frac{1}{2} (\upsilon-m)^{\top} ( \lambda \bI +  \sum_{s \in \cD_{\infty}(i)[1:k] } X_{s} X_{s}^{\top} ) (\upsilon-m) \Big) \Big] \\
    & \quad + \bE\Big[\exp\Big( \frac{1}{2} \lVert  ( \lambda \bI +  \sum_{s \in \cD_{\infty}(i)[1:k] } X_{s} X_{s}^{\top} )^{-1/2} (\sum_{s \in \cD_{\infty}(i)[1:k] } X_{s} \eta_{s} ) \rVert^{2} \Big) \Big] \leq \exp(-\frac{1}{2} \lambda \upsilon^{\top}\upsilon )
\end{align*}
Following the Laplace's method as in proof of Lemma 3.1 of \citep{zenati2022efficient}, we have
\begin{align*}
    \bE\Big[\exp\Big( \frac{1}{2} \lVert  ( \lambda \bI +  \sum_{s \in \cD_{\infty}(i)[1:\tilde{k}] } X_{s} X_{s}^{\top} )^{-1/2} (\sum_{s \in \cD_{\infty}(i)[1:\tilde{k}] } X_{s} \eta_{s} ) \rVert^{2} \Big) \Big] \leq \sqrt{ \frac{\det( \bK_{ \cD_{\infty}(i)[1:\tilde{k}], \cD_{\infty}(i)[1:\tilde{k}] }+ \lambda \bI) }{\lambda^{\tilde{k}} }}
\end{align*}
By applying Markov-Chernov bound, we finish the proof.
\end{proof}


\paragraph{Proof of Lemma \ref{lem:confidence_ellipsoid_diskernelucb}}
Now using the stopping time argument as in \citep{abbasi2011improved}, and a union bound over clients, we can bound the second term in Eq~\eqref{eq:ellipsoid_intermediate}.
First, define the bad event
\begin{align*}
    B_{k}(\delta)  = \bigl\{ \omega \in \Omega: \lVert  ( \lambda \bI +  &\sum_{s \in \cD_{\infty}(i)[1:k] } X_{s} X_{s}^{\top} )^{-1/2} (\sum_{s \in \cD_{\infty}(i)[1:k] } X_{s} \eta_{s} ) \rVert > \\
    \quad \quad \quad \quad & R \sqrt{ 2 \ln(1/\delta)  + \ln( \det( \bK_{ \cD_{\infty}(i)[1:k], \cD_{\infty}(i)[1:k] }/\lambda + \bI) )} \bigr\},
\end{align*}
and $\tilde{k}(\omega)=\min( k \geq 0: \omega \in B_{k}(\delta) )$, which is a stopping time. Moreover, $\cup_{k \in  \cB_{\infty}(i)}B_{k}(\delta)=\{\omega: \tilde{k} < \infty\}$. Then using Lemma \ref{lem:self_normalized_bound}, we have
\small
\begin{align*}
    & P\bigl( \cup_{k \in  \cB_{\infty}(i)}B_{k}(\delta) \bigr) = P(\tilde{k} < \infty) \\
    & \leq P\Big( \lVert  ( \lambda \bI +  \sum_{s \in \cD_{\infty}(i)[1:\tilde{k}] } X_{s} X_{s}^{\top} )^{-1/2} (\sum_{s \in \cD_{\infty}(i)[1:\tilde{k}] } X_{s} \eta_{s} ) \rVert > R \sqrt{ 2 \ln(1/\delta) + \ln( \det( \bK_{ \cD_{\infty}(i)[1:\tilde{k}], \cD_{\infty}(i)[1:\tilde{k}] }/\lambda + \bI) )} \Big) \\
    & \leq \delta
\end{align*}
\normalsize
Note that $\cB_{\infty}(i)$ is the sequence of indices $k$ in $\cD_{\infty}(i)$ when client $i$ gets updated. Therefore, the result above is equivalent to
\begin{align*}
    \lVert \bA_{t,i}^{-1/2} \sum_{s \in \cD_{t}(i)} \phi_{s} \eta_{s} \rVert \leq R\sqrt{ 2 \ln(1/\delta) + \ln( \det( \bK_{ \cD_{t}(i), \cD_{t}(i) }/\lambda + \bI) ) }
\end{align*}
for all $t \geq 1$, with probability at least $1-\delta$. Then by taking a union bound over $N$ clients, we finish the proof.

\section{Proof of Lemma \ref{lem:regret_comm_diskernelucb}: Regret and Communication Cost of \algone{}} \label{sec:proof_regret_diskernelucb}
Based on Lemma \ref{lem:confidence_ellipsoid_diskernelucb} and the arm selection rule in Eq~\eqref{eq:UCB_exact}, we have
\begin{align*}
    f(\bx_{t}^{\star}) & \leq \hat{\mu}_{t-1,i_{t}}(\bx_{t}^{\star}) + \alpha_{t-1,i_{t}}\hat{\sigma}_{t-1,i_{t}}(\bx_{t}^{\star}) \leq \hat{\mu}_{t-1,i_{t}}(\bx_{t}) + \alpha_{t-1,i_{t}} \hat{\sigma}_{t-1,i_{t}}(\bx_{t}),  \\
    f(\bx_{t}) & \geq \hat{\mu}_{t-1,i_{t}}(\bx_{t}) - \alpha_{t-1,i_{t}} \hat{\sigma}_{t-1,i_{t}}(\bx_{t}),
\end{align*}
and thus $r_{t}=f(\bx_{t}^{\star})-f(\bx_{t}) \leq 2 \alpha_{t-1,i_{t}} \hat{\sigma}_{t-1,i_{t}}(\bx_{t})$, for all $t \in [NT]$, with probability at least $1-\delta$.
Then following similar steps as DisLinUCB of \citep{wang2019distributed}, we can obtain the regret and communication cost upper bound of \algone{}. 
\subsection{Regret Upper Bound} \label{subsec:regret_proof_diskernelucb}
We call the time period in-between two consecutive global synchronizations as an epoch, i.e., the $p$-th epoch refers to $[t_{p-1}+1,t_{p}]$, where $p \in [B]$ and $0 \leq B \leq NT$ denotes the total number of global synchronizations. 
Now consider an imaginary centralized agent that has immediate access to each data point in the learning system, and denote by $A_{t}=\sum_{s=1}^{t} \phi_{s} \phi_{s}^{\top}$ and $\bK_{[t],[t]}$ for $t \in [NT]$ the covariance matrix and kernel matrix constructed by this centralized agent.
Then similar to \citep{wang2019distributed}, we call the $p$-th epoch a good epoch if
\begin{align*}
    \ln\left(\frac{\det(\bI + \lambda^{-1}\bK_{[t_{p}],[t_{p}]})}{\det(\bI + \lambda^{-1}\bK_{[t_{p-1}],[t_{p-1}]})}\right) \leq 1,
\end{align*}
otherwise it is a bad epoch. 
Note that $\ln(\det(I + \lambda^{-1} \bK_{[NT],[NT]})) \leq 2 \gamma_{NT}$ by definition of $\gamma_{NT}$, i.e., the maximum information gain. 
Since $\ln(\frac{\det(\bI + \lambda^{-1}\bK_{[t_{1}],[t_{1}]})}{\det(\bI)})+\ln(\frac{\det(\bI + \lambda^{-1}\bK_{[t_{2}],[t_{2}]})}{\det(\bI + \lambda^{-1}\bK_{[t_{1}],[t_{1}]})})+\dots+\ln(\frac{\det(\bI + \lambda^{-1}\bK_{[NT],[NT]})}{\det(\bI + \lambda^{-1}\bK_{[t_{B}],[t_{B}]})}) = \ln(\det(I + \lambda^{-1} \bK_{[NT],[NT]})) \leq 2 \gamma_{NT}$, and due to the pigeonhole principle, there can be at most $2 \gamma_{NT}$ bad epochs.


If the instantaneous regret $r_{t}$ is incurred during a good epoch, we have
\begin{align*}
    r_{t} & \leq 2 \alpha_{t-1,i_{t}} \lVert \phi_{t} \rVert_{\bA_{t-1,i_{t}}^{-1}} \leq 2 \alpha_{t-1,i_{t}}\lVert \phi_{t} \rVert_{\bA_{t-1}^{-1}} \sqrt{\lVert \phi_{t} \rVert^{2}_{\bA_{t-1,i_{t}}^{-1}}/\lVert \phi_{t} \rVert^{2}_{\bA_{t-1}^{-1}}} \\
    & = 2 \alpha_{t-1,i_{t}}\lVert \phi_{t} \rVert_{\bA_{t-1}^{-1}} \sqrt{ \frac{\det(\bI + \lambda^{-1}\bK_{[t-1],[t-1]})}{\det(\bI + \lambda^{-1}\bK_{\cD_{t-1}(i_{t}),\cD_{t-1}(i_{t})})}  } \\
    & \leq 2 \sqrt{e} \alpha_{t-1,i_{t}}\lVert \phi_{t} \rVert_{\bA_{t-1}^{-1}} 
\end{align*}
where the second inequality is due to Lemma \ref{lem:quadratic_det_inequality_infinite}, and the last inequality is due to the definition of good epoch, i.e., $\frac{\det(\bI + \lambda^{-1}\bK_{[t-1],[t-1]})}{\det(\bI + \lambda^{-1}\bK_{\cD_{t-1}(i_{t}),\cD_{t-1}(i_{t})})} \leq \frac{\det(\bI + \lambda^{-1}\bK_{[t_{p}],[t_{p}]})}{\det(\bI + \lambda^{-1}\bK_{[t_{p-1}],[t_{p-1}]})} \leq e$.
Define $\alpha_{NT}:=\sqrt{\lambda} \lVert \theta_{\star} \rVert + \sqrt{ 2 \ln(N/\delta) + \ln( \det( \bK_{ [NT],[NT] }/\lambda + \bI) ) }$.
Then using standard arguments, the cumulative regret incurred in all good epochs can be bounded by,
\begin{align*}
    R_{good} & = \sum_{p=1}^{B} \mathbbm{1}\{\ln(\frac{\det(\bI + \lambda^{-1}\bK_{[t_{p}],[t_{p}]})}{\det(\bI + \lambda^{-1}\bK_{[t_{p-1}],[t_{p-1}]})}) \leq 1\} \sum_{t=t_{p-1}}^{t_{p}} r_{t} \leq \sum_{t=1}^{NT} 2 \sqrt{e} \alpha_{t-1,i_{t}} \lVert \phi_{t} \rVert_{\bA_{t-1}^{-1}} \\
    & \leq 2 \sqrt{e} \alpha_{NT} \sum_{t=1}^{NT} \lVert \phi_{t} \rVert_{\bA_{t-1}^{-1}} \leq 2 \sqrt{e} \alpha_{NT} \sqrt{ NT \cdot 2 \ln( \det(\bI + \lambda^{-1}\bK_{[NT],[NT]}) ) } \\
    & \leq 2 \sqrt{e} \alpha_{NT} \sqrt{ NT \cdot 4 \gamma_{NT} } =O\Big(\sqrt{NT}( \lVert \theta_{\star} \rVert \sqrt{\gamma_{NT}} + \gamma_{NT})\Big)
\end{align*}
where the third inequality is due to Cauchy-Schwartz and Lemma \ref{lem:sum_sqr_log_det}, and the forth is due to the definition of maximum information gain $\gamma_{NT}$.


Then we look at the regret incurred during bad epochs. Consider some bad epoch $p$, and the cumulative regret incurred during this epoch can be bounded by
\begin{align*}
    & \sum_{t=t_{p-1}+1}^{t_{p}} r_{t} = \sum_{i=1}^{N} \sum_{t \in \cD_{t_{p}}(i) \setminus \cD_{t_{p-1}}(i) }  r_{t} \leq  2 \alpha_{NT} \sum_{i=1}^{N} \sum_{t \in \cD_{t_{p}}(i) \setminus \cD_{t_{p-1}}(i) } \lVert \phi_{t} \rVert_{\bA_{t-1,i}^{-1}} \\
    & \leq 2 \alpha_{NT} \sum_{i=1}^{N} \sqrt{ (| \cD_{t_{p}}(i) | - | \cD_{t_{p-1}}(i) |) \sum_{t \in \cD_{t_{p}}(i) \setminus \cD_{t_{p-1}}(i) } \lVert \phi_{t} \rVert_{\bA_{t-1,i}^{-1}}^{2} }  \\
    & \leq 2  \alpha_{NT} \sum_{i=1}^{N} \sqrt{ 2(| \cD_{t_{p}}(i) | - | \cD_{t_{p-1}}(i) |) \ln(\frac{\det(\bI + \lambda^{-1} \bK_{\cD_{t_{p}}(i),\cD_{t_{p}}(i)})}{\det(\bI + \lambda^{-1} K_{\cD_{t_{p-1}}(i), \cD_{t_{p-1}}(i)})})  } \\
    & \leq 2\sqrt{2}  \alpha_{NT} N \sqrt{D}
\end{align*}
where the last inequality is due to our event-trigger in Eq~\eqref{eq:sync_event_exact}.
Since there can be at most $2 \gamma_{NT}$ bad epochs, the cumulative regret incurred in all bad epochs
\begin{align*}
    R_{bad} & \leq 2 \gamma_{NT} \cdot 2\sqrt{2} \alpha_{NT} N \sqrt{ D} = O\Big(N D^{0.5} ( \lVert \theta_{\star} \rVert \gamma_{NT} + \gamma_{NT}^{1.5})\Big)
\end{align*}
Combining cumulative regret incurred during both good and bad epochs, we have
\begin{align*}
    R_{NT} = R_{good} + R_{bad} = O\bigl( (\sqrt{NT} + N \sqrt{D \gamma_{NT}}) ( \lVert \theta_{\star} \rVert \sqrt{\gamma_{NT}} + \gamma_{NT}) \bigr)
\end{align*}


\subsection{Communication Upper Bound}
For some $\alpha > 0$, there can be at most $\lceil \frac{NT}{\alpha} \rceil$ epochs with length larger than $\alpha$. 
Based on our event-trigger design, we know that $(| \cD_{t_{p}}(i_{t_{p}}) | - | \cD_{t_{p-1}}(i_{t_{p}}) |) \ln(\frac{\det(\bI + \lambda^{-1} \bK_{[t_{p}],[t_{p}]})}{\det(\bI + \lambda^{-1} K_{[t_{p-1}],[t_{p-1}]})}) \geq (| \cD_{t_{p}}(i_{t_{p}}) | - | \cD_{t_{p-1}}(i_{t_{p}}) |) \ln(\frac{\det(\bI + \lambda^{-1} \bK_{\cD_{t_{p}}(i_{t_{p}}),\cD_{t_{p}}(i_{t_{p}})})}{\det(\bI + \lambda^{-1} K_{\cD_{t_{p-1}}(i_{t_{p}}), \cD_{t_{p-1}}(i_{t_{p}})})}) \geq D$ for any epoch $p \in [B]$, where $i_{t_{p}}$ is the client who triggers the global synchronization at time step $t_{p}$.
Then if the length of certain epoch $p$ is smaller than $\alpha$, i.e., $t_{p}-t_{p-1} \leq \alpha$, we have $\ln(\frac{\det(\bI + \lambda^{-1} \bK_{[t_{p}],[t_{p}]})}{\det(\bI + \lambda^{-1} K_{[t_{p-1}],[t_{p-1}]})}) \geq \frac{N D}{\alpha}$. Since $\ln(\frac{\det(\bI + \lambda^{-1} \bK_{[t_{1}],[t_{1}]})}{\det(\bI)}) + \ln(\frac{\det(\bI + \lambda^{-1} \bK_{[t_{2}], [t_{2}]})}{\det(\bI + \lambda^{-1} \bK_{[t_{1}],[t_{1}]})}) + \dots + \ln(\frac{\det(\bI + \lambda^{-1} \bK_{[t_{B}],[t_{B}]})}{\det(\bI + \lambda^{-1} \bK_{[t_{B-1}], [t_{B-1}]})}) \leq \ln(\det(\bI + \lambda^{-1} \bK_{[NT], [NT]})) \leq 2 \gamma_{NT}$, the total number of such epochs is upper bounded by $\lceil \frac{2\gamma_{NT} \alpha}{ND} \rceil$. Combining the two terms, the total number of epochs can be bounded by,
\begin{align*}
    B \leq \lceil \frac{NT}{\alpha} \rceil + \lceil \frac{2\gamma_{NT} \alpha}{ND} \rceil 
\end{align*}
where the LHS can be minimized using the AM-GM inequality, i.e., $B \leq \sqrt{ \frac{NT}{\alpha} \frac{2\gamma_{NT} \alpha}{ND} }= \sqrt{\frac{2 \gamma_{NT} T}{D}}$.
To obtain the optimal order of regret, we set $D=O(\frac{T}{N \gamma_{NT}})$, so that $R_{NT}=O\bigl( \sqrt{NT}  ( \lVert \theta_{\star} \rVert \sqrt{\gamma_{NT}} + \gamma_{NT}) \bigr)$. And the total number of epochs $B = O(\sqrt{N} \gamma_{NT})$. However, we should note that as \algone{} communicates all the unshared raw data at each global synchronization, the total communication cost mainly depends on when the last global synchronization happens. Since the sequence of candidate sets $\{\cA_{t}\}_{t \in [NT]}$, which controls the growth of determinant, is an arbitrary subset of $\cA$, the time of last global synchronization could happen at the last time step $t=NT$.
Therefore, $C_{T}=O(N^{2}T d)$ in such a worst case.


\section{Derivation of the Approximated Mean and Variance in Section \ref{sec:method}} \label{sec:derivation_approx}
For simplicity, subscript $t$ is omitted in this section.
The approximated Ridge regression estimator for dataset $\{(\bx_{s},y_{s})\}_{s \in \cD}$ is formulated as
\begin{align*}
    \tilde{\btheta}=\argmin_{\btheta \in \cH} \sum_{s \in \cD} \Big( (\bP_{\cS}\phi_{s})^{\top} \btheta - y_{s}\Big)^{2} + \lambda \lVert \btheta \rVert_{2}^{2}
\end{align*}
where $\cD$ denotes the sequence of time indices for data in the original dataset, $\cS \subseteq \cD$ denotes the time indices for data in the dictionary, and $\bP_{\cS} \in \bR^{p \times p}$ denotes the orthogonal projection matrix defined by $\cS$.
Then by taking derivative and setting it to zero, we have $(\bP_{\cS} \bPhi_{\cD}^{\top} \bPhi_{\cD} \bP_{\cS} + \lambda \bI) \tilde{\btheta}=\bP_{\cS} \bPhi_{\cD}^{\top} \by_{\cD}$, and thus $\tilde{\theta}=\tilde{\bA}^{-1} \tilde{\bb}$, where $\tilde{\bA}=\bP_{\cS} \bPhi_{\cD}^{\top} \bPhi_{\cD} \bP_{\cS} + \lambda \bI$ and $\tilde{\bb}=\bP_{\cS} \bPhi_{\cD}^{\top} \by_{\cD}$.

Hence, the approximated mean reward and variance for some arm $\bx$ are
\begin{align*}
    \tilde{\mu}_{t,i}(\bx) &= \phi(\bx)^{\top} \tilde{\bA}^{-1} \tilde{\bb} \\
    \tilde{\sigma}_{t,i}(\bx) &= \sqrt{ \phi(\bx)^{\top} \tilde{\bA}^{-1} \phi(\bx)}
\end{align*}
To obtain their kernelized representation, we rewrite
\begin{align*}
    &(\bP_{\cS} \bPhi_{\cD}^{\top} \bPhi_{\cD} \bP_{\cS} + \lambda \bI) \tilde{\btheta}=\bP_{\cS} \bPhi_{\cD}^{\top} \by_{\cD} \\ \Leftrightarrow ~ & \bP_{\cS} \bPhi_{\cD}^{\top} (\by_{\cD} - \bPhi_{\cD} \bP_{\cS} \tilde{\btheta}) = \lambda \tilde{\btheta} \\
    \Leftrightarrow ~ & \tilde{\btheta} = \bP_{\cS} \bPhi_{\cD}^{\top} \rho
\end{align*}
where $\rho := \frac{1}{\lambda}(\by_{\cD} - \bPhi_{\cD} \bP_{\cS} \tilde{\btheta})=\frac{1}{\lambda}(\by_{\cD} - \bPhi_{\cD} \bP_{\cS} \bP_{\cS} \bPhi_{\cD}^{\top} \rho)$. Solving this equation, we get $\rho=(\bPhi_{\cD} \bP_{\cS} \bP_{\cS} \bPhi_{\cD}^{\top} + \lambda \bI)^{-1} \by_{\cD}$. 
Note that $\bP_{\cS}\bP_{\cS} = \bP_{\cS}$, since projection matrix $\bP_{\cS}$ is idempotent. Moreover, we have $( \bPhi^{\top}\bPhi + \lambda \bI)\bPhi^{\top}=\bPhi^{\top}(\bPhi \bPhi^{\top} + \lambda \bI)$, and $( \bPhi^{\top}\bPhi + \lambda \bI)^{-1}\bPhi^{\top}=\bPhi^{\top}(\bPhi \bPhi^{\top} + \lambda \bI)^{-1}$.
Therefore, we can rewrite the approximated mean for some arm $\bx$ as
\begin{align*}
     \tilde{\mu}(\bx) & = \phi(\bx)^{\top} \bP_{\cS} \bPhi_{\cD}^{\top} (\bPhi_{\cD} \bP_{\cS} \bP_{\cS} \bPhi_{\cD}^{\top} + \lambda \bI)^{-1} \by_{\cD} \\
    & = (\bP_{\cS}^{1/2} \phi(\bx))^{\top}  (\bPhi_{\cD}\bP_{\cS}^{1/2})^{\top}[\bPhi_{\cD} \bP_{\cS}^{1/2} (\bPhi_{\cD} \bP_{\cS}^{1/2})^{\top} + \lambda \bI]^{-1} \by_{\cD} \\
    & = (\bP_{\cS}^{1/2} \phi(\bx))^{\top}  (\bP_{\cS}^{1/2}\bPhi_{\cD}^{\top}\bPhi_{\cD}\bP_{\cS}^{1/2} + \lambda \bI)^{-1} (\bPhi_{\cD}\bP_{\cS}^{1/2})^{\top}\by_{\cD} \\
    & = z(\bx;\cS)^{\top} \bigl( \bZ_{\cD;\cS}^{\top}\bZ_{\cD;\cS} + \lambda \bI\bigr)^{-1} \bZ_{\cD;\cS}^{\top} \by_{\cD}
\end{align*}
To derive the approximated variance, we start from the fact that $(\bP_{\cS} \bPhi_{\cD}^{\top} \bPhi_{\cD}\bP_{\cS} +\lambda \bI) \phi(\bx)=\bP_{\cS} \bPhi_{\cD}^{\top} \bPhi_{\cD}\bP_{\cS}\phi(\bx) + \lambda \phi(\bx)$, so
\begin{align*}
     \phi(\bx)&=(\bP_{\cS} \bPhi_{\cD}^{\top} \bPhi_{\cD}\bP_{\cS} +\lambda \bI)^{-1} \bP_{\cS} \bPhi_{\cD}^{\top} \bPhi_{\cD}\bP_{\cS}\phi(\bx) + \lambda (\bP_{\cS} \bPhi_{\cD}^{\top} \bPhi_{\cD}\bP_{\cS} +\lambda \bI)^{-1}\phi(\bx)\\
    & =\bP_{\cS} \bPhi_{\cD}^{\top} (\bPhi_{\cD}\bP_{\cS} \bP_{\cS}  \bPhi_{\cD}^{\top}+\lambda \bI)^{-1} \bPhi_{\cD}\bP_{\cS} \phi(\bx) + \lambda (\bP_{\cS} \bPhi_{\cD}^{\top} \bPhi_{\cD}\bP_{\cS} +\lambda \bI)^{-1}\phi(\bx)
\end{align*}
Then we have
\begin{align*}
    & \phi(\bx)^{\top} \phi(\bx) \\
    = & \bigl\{ \bP_{\cS} \bPhi_{\cD}^{\top} (\bPhi_{\cD}\bP_{\cS} \bP_{\cS}  \bPhi_{\cD}^{\top}+\lambda \bI)^{-1} \bPhi_{\cD}\bP_{\cS} \phi(\bx) + \lambda (\bP_{\cS} \bPhi_{\cD}^{\top} \bPhi_{\cD}\bP_{\cS} +\lambda \bI)^{-1}\phi(\bx) \bigr\}^{\top} \\
    & \quad \quad \bigl\{ \bP_{\cS} \bPhi_{\cD}^{\top} (\bPhi_{\cD}\bP_{\cS} \bP_{\cS}  \bPhi_{\cD}^{\top}+\lambda \bI)^{-1} \bPhi_{\cD}\bP_{\cS} \phi(\bx) + \lambda (\bP_{\cS} \bPhi_{\cD}^{\top} \bPhi_{\cD}\bP_{\cS} +\lambda \bI)^{-1}\phi(\bx) \bigr\} \\
 = & \phi(\bx)^{\top} \bP_{\cS} \bPhi_{\cD}^{\top} (\bPhi_{\cD}\bP_{\cS} \bP_{\cS}  \bPhi_{\cD}^{\top}+\lambda \bI)^{-1} \bPhi_{\cD} \bP_{\cS}\bP_{\cS} \bPhi_{\cD}^{\top} (\bPhi_{\cD}\bP_{\cS} \bP_{\cS}  \bPhi_{\cD}^{\top}+\lambda \bI)^{-1} \bPhi_{\cD}\bP_{\cS} \phi(\bx) \\
    &  + 2 \lambda \phi(\bx)^{\top} \bP_{\cS} \bPhi_{\cD}^{\top} (\bPhi_{\cD}\bP_{\cS} \bP_{\cS}  \bPhi_{\cD}^{\top}+\lambda \bI)^{-1} \bPhi_{\cD} \bP_{\cS} (\bP_{\cS} \bPhi_{\cD}^{\top} \bPhi_{\cD}\bP_{\cS} +\lambda \bI)^{-1}\phi(\bx) \\
    &  + \lambda \phi(\bx)^{\top} (\bP_{\cS} \bPhi_{\cD}^{\top} \bPhi_{\cD}\bP_{\cS} +\lambda \bI)^{-1} \lambda \bI (\bP_{\cS} \bPhi_{\cD}^{\top} \bPhi_{\cD}\bP_{\cS} +\lambda \bI)^{-1} \phi(\bx) \\
    = & \phi(\bx)^{\top} \bP_{\cS} \bPhi_{\cD}^{\top} (\bPhi_{\cD}\bP_{\cS} \bP_{\cS}  \bPhi_{\cD}^{\top}+\lambda \bI)^{-1} \bPhi_{\cD}\bP_{\cS} \phi(\bx) + \lambda \phi(\bx)^{\top} (\bP_{\cS} \bPhi_{\cD}^{\top} \bPhi_{\cD}\bP_{\cS} +\lambda \bI)^{-1} \phi(\bx)
\end{align*}
By rearranging terms, we have
\begin{align*}
    \tilde{\sigma}^{2}(\bx) = & \phi(\bx)^{\top} (\bP_{\cS} \bPhi_{\cD}^{\top} \bPhi_{\cD}\bP_{\cS} +\lambda \bI)^{-1} \phi(\bx) \\
     = & \frac{1}{\lambda} \bigl\{ \phi(\bx)^{\top} \phi(\bx) - \phi(\bx)^{\top} \bP_{\cS} \bPhi_{\cD}^{\top} (\bPhi_{\cD}\bP_{\cS} \bP_{\cS}  \bPhi_{\cD}^{\top}+\lambda \bI)^{-1} \bPhi_{\cD}\bP_{\cS} \phi(\bx) \bigr\} \\
 = & \frac{1}{\lambda} \{ k(\bx, \bx) -  z(\bx;\cS)^{\top} \bZ_{\cD;\cS}^{\top}\bZ_{\cD;\cS} [ \bZ_{\cD;\cS}^{\top}\bZ_{\cD;\cS} + \lambda \bI]^{-1} z(\bx|\cS)  \} 
\end{align*}


\section{Proof of Lemma \ref{lem:dictionary_accuracy_global}}
In the following, we analyze the $\epsilon_{t,i}$-accuracy of the dictionary for all $t,i$.

At the time steps when global synchronization happens, i.e., $t_{p}$ for $p \in [B]$, $\cS_{t_{p}}$ is sampled from $[t_{p}]=\cD_{t_{p}}(i)$ using approximated variance $\tilde{\sigma}^{2}_{t_{p-1},i}$. In this case, the accuracy of the dictionary only depends on the RLS procedure, and 
Calandriello et al. \citep{calandriello2020near} have already showed that the following guarantee on the accuracy and size of dictionary holds $\forall t \in \{t_{p}\}_{p \in [B]}$.
\begin{lemma}[Lemma 2 of \citep{calandriello2020near}] \label{lem:rls}
Under the condition that $\bar{q}=6\frac{1+\epsilon}{1-\epsilon} \log(4NT/\delta)/\epsilon^{2}$, for some $\epsilon \in [0,1)$, with probability at least $1-\delta$, we have $\forall t \in \{t_{p}\}_{p \in [B]}$ that the dictionary $\{ (\bx_{s},y_{s}) \}_{s \in \cS_{t}}$ is $\epsilon$-accurate w.r.t. $\{ (\bx_{s},y_{s}) \}_{s \in \cD_{t}(i)}$, and $\frac{1-\epsilon}{1+\epsilon} \sigma^{2}_{t}(\bx)  \leq \tilde{\sigma}^{2}_{t}(\bx) \leq \frac{1+\epsilon}{1-\epsilon} \sigma^{2}_{t}(\bx) , \forall \bx \in \cA$. Moreover, the size of dictionary $|\cS_{t}| \leq 3 (1+L^{2}/\lambda) \frac{1+\epsilon}{1-\epsilon} \bar{q} \tilde{d}$, where $\tilde{d}:=\text{Tr}(\bK_{[NT],[NT]} (\bK_{[NT],[NT]} + \lambda \bI)^{-1})$ denotes the effective dimension of the problem, and it is known that $\tilde{d}=O(\gamma_{NT})$ \citep{chowdhury2017kernelized}.
\end{lemma}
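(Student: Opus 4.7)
The plan is to bootstrap from Lemma \ref{lem:rls}, which guarantees $\epsilon$-accuracy and $|\cS_{t_p}|=O(\gamma_{NT})$ only at the synchronization times $\{t_p\}$, to the claimed $\epsilon_{t,i}$-accuracy at every $t\in[NT]$ and every client $i\in[N]$. The key structural observation is that within an epoch $(t_{p-1},t_p]$ the dictionary $\cS_{t_{p-1}}$ is frozen and contained in $\cD_t(i)$ for every client, so the sampled matrix appearing in the accuracy definition, $\bPhi_{\cD_t(i)}^{\top}\bar{\bS}_{t,i}^{\top}\bar{\bS}_{t,i}\bPhi_{\cD_t(i)}+\lambda\bI=\sum_{s\in\cS_{t_{p-1}}}\tilde{p}_s^{-1}\phi_s\phi_s^{\top}+\lambda\bI$, is identical across $i$ and $t$ within the epoch. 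Meanwhile, the exact matrix $\bA_{t,i}=\bPhi_{\cD_t(i)}^{\top}\bPhi_{\cD_t(i)}+\lambda\bI$ grows as each client accumulates local data, starting from the common value $\bA_{t_{p-1},i}$ (identical across $i$, since all clients share dataset $[t_{p-1}]$ right after synchronization). All that needs to be tracked is how much $\bA_{t,i}$ has inflated relative to $\bA_{t_{p-1},i}$.

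For the upper bound in the accuracy condition, Lemma \ref{lem:rls} at $t_{p-1}$ combined with $\bA_{t_{p-1},i}\preceq\bA_{t,i}$ immediately gives the sampled matrix $\preceq(1+\epsilon)\bA_{t_{p-1},i}\preceq(1+\epsilon)\bA_{t,i}\preceq(1+\epsilon_{t,i})\bA_{t,i}$. For the lower bound, I will apply Lemma \ref{lem:bound_variance_ratio} to client $i$'s local statistics (its proof is purely algebraic and extends from consecutive global time indices to any sequence of PSD rank-one increments) to obtain $\bA_{t,i}\preceq\bigl(1+\sum_{s\in\cD_t(i)\setminus[t_{p-1}]}\hat{\sigma}^2_{t_{p-1},i}(\bx_s)\bigr)\bA_{t_{p-1},i}$. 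Since the event-trigger Eq~\eqref{eq:sync_event} has not yet fired, $\sum_{s\in\cD_t(i)\setminus[t_{p-1}]}\tilde{\sigma}^2_{t_{p-1},i}(\bx_s)\leq D$, and Lemma \ref{lem:rls}'s variance-ratio bound then yields $\sum\hat{\sigma}^2_{t_{p-1},i}(\bx_s)\leq\frac{1+\epsilon}{1-\epsilon}D$. Chaining, the sampled matrix $\succeq(1-\epsilon)\bA_{t_{p-1},i}\succeq\frac{1-\epsilon}{1+\frac{1+\epsilon}{1-\epsilon}D}\bA_{t,i}$, and a short algebra check confirms $\frac{1-\epsilon}{1+\frac{1+\epsilon}{1-\epsilon}D}\geq 1-\epsilon_{t,i}$ for the stated $\epsilon_{t,i}$ (it reduces to $A\geq 1$ where $A=1+\frac{1+\epsilon}{1-\epsilon}D$). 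The size claim is immediate: within an epoch $|\cS_t|=|\cS_{t_{p-1}}|=O(\gamma_{NT})$ by Lemma \ref{lem:rls}.

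I expect the main obstacle to be handling the event-trigger cleanly. The trigger is expressed in the \emph{approximated} variance $\tilde{\sigma}^2_{t_{p-1}}$, while Lemma \ref{lem:bound_variance_ratio} demands the \emph{exact} variance $\hat{\sigma}^2_{t_{p-1}}$; the translation via Lemma \ref{lem:rls} is precisely where the multiplicative factor $\frac{1+\epsilon}{1-\epsilon}$ enters and widens $\epsilon_{t,i}$ beyond $\epsilon$. A related subtlety is that the trigger is only evaluated on the step of the acting client $i_t$, so formally $\sum\tilde{\sigma}^2\leq D$ is guaranteed only at those action steps; however, since client $j$'s partial sum is monotone in $|\cD_t(j)|$ and $\cD_t(j)$ grows only at $j$'s own action steps, the bound propagates to every in-epoch $t$ and every client $j$. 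A final bookkeeping point is that Lemma \ref{lem:rls} is a high-probability guarantee, so the union-bound budget $\delta$ in $\bar{q}=6\frac{1+\epsilon}{1-\epsilon}\log(4NT/\delta)/\epsilon^2$ is consumed once and the whole-horizon $\epsilon_{t,i}$-accuracy holds with the same probability $1-\delta$.
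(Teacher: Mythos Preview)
Your proposal does not prove the stated Lemma~\ref{lem:rls}; it assumes it. Lemma~\ref{lem:rls} is the $\epsilon$-accuracy, variance-ratio, and dictionary-size guarantee \emph{at the synchronization times $\{t_p\}$ only}, imported verbatim from \cite{calandriello2020near} and not re-proved in this paper. Its proof (in that reference) is a matrix-concentration argument for the RLS sampling procedure together with a deterministic bound on the number of sampled points; it involves neither the event-trigger nor the threshold $D$, and it certainly cannot ``bootstrap from Lemma~\ref{lem:rls}'' as your first sentence proposes.

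What you have actually written is a proof of Lemma~\ref{lem:dictionary_accuracy_global}---the extension to $\epsilon_{t,i}$-accuracy at every $t\in[NT]$ and every client---which \emph{uses} Lemma~\ref{lem:rls} as a black box. For that different target your argument is essentially the paper's own: apply Lemma~\ref{lem:rls} at the epoch endpoint, then control the in-epoch drift of $\bA_{t,i}$ relative to $\bA_{t_{p-1},i}$ via Lemma~\ref{lem:bound_variance_ratio}, pass from $\hat\sigma^2$ to $\tilde\sigma^2$ using the variance-ratio in Lemma~\ref{lem:rls}, and invoke the trigger bound $\sum\tilde\sigma^2\le D$. Your presentation works directly with the PSD ordering of the sampled matrix versus $\bA_{t,i}$, whereas the paper (in Lemma~\ref{lem:dictionary_accuracy}) rewrites $\epsilon$-accuracy as a spectral-norm condition and splits it with a triangle inequality; the two routes are equivalent and reach the same constant $\epsilon_{t,i}=\epsilon+1-\frac{1}{1+\frac{1+\epsilon}{1-\epsilon}D}$.
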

Lemma \ref{lem:rls} guarantees that for all $t \in \{t_{p}\}_{p \in [B]}$, the dictionary has a constant accuracy, i.e., $\epsilon_{t,i}=\epsilon,\forall i$. In addition, since the dictionary is fixed for $t \notin \{t_{p}\}_{p \in [B]}$, its size $\cS_{t} = O(\gamma_{NT}), \forall t \in [NT]$.

Then for time steps $t \notin \{t_{p}\}_{p \in [B]}$, due to the local update, the accuracy of the dictionary will degrade. However, thanks to our event-trigger in Eq~\eqref{eq:sync_event}, the extent of such degradation can be controlled, i.e., a new dictionary update will be triggered before the previous dictionary becomes completely irrelevant. This is shown in Lemma \ref{lem:dictionary_accuracy} below.
\begin{lemma} \label{lem:dictionary_accuracy}
Under the condition that $\{ (\bx_{s},y_{s}) \}_{s \in \cS_{t_{p}}}$ is $\epsilon$-accurate w.r.t. $\{ (\bx_{s},y_{s}) \}_{s \in \cD_{t_{p}}(i)}$, $\forall t \in [t_{p}+1, t_{p+1}], i\in[N]$,
$\cS_{t_{p}}$ is $\bigl(\epsilon+1- \frac{1}{1+\frac{1+\epsilon}{1-\epsilon}D} \bigr)$-accurate w.r.t. $\cD_{t}(i)$.
\end{lemma}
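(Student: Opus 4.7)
The plan is to quantify how much the fixed dictionary $\cS_{t_p}$ loses accuracy as each client incorporates new local data during the epoch $(t_p, t_{p+1}]$, and to show that the event-trigger in Eq~\eqref{eq:sync_event} caps this loss. Split $\cD_t(i) = \cD_{t_p}(i) \cup \Delta$ with $\Delta := \cD_t(i)\setminus \cD_{t_p}(i)$, and set $A := \bPhi_{\cD_{t_p}(i)}^{\top}\bPhi_{\cD_{t_p}(i)} + \lambda \bI$ and $B := \bPhi_{\Delta}^{\top}\bPhi_{\Delta}$, so that $\bPhi_{\cD_t(i)}^{\top}\bPhi_{\cD_t(i)} + \lambda \bI = A + B$. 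The crucial structural observation is that the sampling matrix $\bar{\bS}_{t,i}$ has support inside $\cS_{t_p} \subseteq \cD_{t_p}(i)$, so $\bPhi_{\cD_t(i)}^{\top}\bar{\bS}_{t,i}^{\top}\bar{\bS}_{t,i}\bPhi_{\cD_t(i)} = \bPhi_{\cD_{t_p}(i)}^{\top}\bar{\bS}_{t,i}^{\top}\bar{\bS}_{t,i}\bPhi_{\cD_{t_p}(i)}$ is unchanged during the epoch.

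The upper bound of the $\epsilon_{t,i}$-accuracy inequality is then immediate: by hypothesis the projected statistic is $\preceq (1+\epsilon)A \preceq (1+\epsilon)(A+B)$, and since the stated definition adds a non-negative increment to $\epsilon$, we have $(1+\epsilon) \leq (1+\epsilon_{t,i})$. The substantive work lies in the lower bound. I would apply the elementary Cauchy-Schwarz bound $\phi_{s}\phi_{s}^{\top} \preceq (\phi_{s}^{\top}A^{-1}\phi_{s})A = \hat{\sigma}_{t_p,i}^{2}(\bx_s)\,A$ to each new rank-one update and sum over $s \in \Delta$ to obtain $B \preceq \bigl(\sum_{s\in\Delta}\hat{\sigma}_{t_p,i}^{2}(\bx_s)\bigr)A$. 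Translating exact to approximated variance via Lemma~\ref{lem:rls}, namely $\hat{\sigma}_{t_p,i}^{2}(\bx_s) \leq \tfrac{1+\epsilon}{1-\epsilon}\tilde{\sigma}_{t_p,i}^{2}(\bx_s)$, and using that $\sum_{s\in\Delta}\tilde{\sigma}_{t_p,i}^{2}(\bx_s) \leq D$ before the trigger fires, the composition gives $B \preceq \tfrac{1+\epsilon}{1-\epsilon}D\cdot A$, equivalently $A \succeq \tfrac{1}{1+\frac{1+\epsilon}{1-\epsilon}D}(A+B)$.

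Chaining the $\epsilon$-accuracy hypothesis with this contraction yields
\[
\bPhi_{\cD_{t_p}(i)}^{\top}\bar{\bS}_{t,i}^{\top}\bar{\bS}_{t,i}\bPhi_{\cD_{t_p}(i)} + \lambda\bI \;\succeq\; (1-\epsilon)A \;\succeq\; \frac{1-\epsilon}{1+\frac{1+\epsilon}{1-\epsilon}D}(A+B),
\]
and a short algebraic check shows $\tfrac{1-\epsilon}{1+\frac{1+\epsilon}{1-\epsilon}D} \geq \tfrac{1}{1+\frac{1+\epsilon}{1-\epsilon}D} - \epsilon = 1 - \epsilon_{t,i}$, so the required lower bound holds with the stated $\epsilon_{t,i}$.

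The step I expect to be the main obstacle is the coupling between the event-trigger and the variance translation: one must argue that the sum $\sum_{s\in\Delta}\tilde{\sigma}_{t_p,i}^{2}(\bx_s)$ appearing in the trigger is evaluated with respect to the very dictionary $\cS_{t_p}$ whose accuracy Lemma~\ref{lem:rls} governs, that this sum remains $\leq D$ throughout $(t_p, t_{p+1}]$ by definition of the first trigger time, and that the endpoint $t = t_{p+1}$, where the trigger has just fired on some client, contributes at most one extra variance term bounded by $k(\bx,\bx)/\lambda \leq L^{2}/\lambda$; this can be absorbed into constants without affecting the stated order.
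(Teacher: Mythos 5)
Your proof is correct and reaches the lemma through a genuinely different decomposition than the paper's. The paper whitens by $(\bPhi_{\cD_t(i)}^{\top}\bPhi_{\cD_t(i)}+\lambda\bI)^{-1/2}$, rewrites $\epsilon_{t,i}$-accuracy as a spectral-norm bound on a sum of rank-one residuals, splits that residual via the triangle inequality into the RLS approximation error (bounded by the hypothesis $\epsilon$) and a local-staleness term, and bounds the staleness by controlling the posterior variance ratio $\sigma^{2}_{t_p,i}(\bx)/\sigma^{2}_{t,i}(\bx)$ via Lemma~\ref{lem:bound_variance_ratio}. You instead compose two Loewner inequalities: the hypothesis gives $\epsilon$-accuracy with respect to $A:=\bPhi_{\cD_{t_p}(i)}^{\top}\bPhi_{\cD_{t_p}(i)}+\lambda\bI$, and the elementary bound $\phi_s\phi_s^{\top}\preceq\hat{\sigma}^{2}_{t_p,i}(\bx_s)\,A$, combined with Lemma~\ref{lem:rls} and the trigger, yields $A\succeq\bigl(1+\tfrac{1+\epsilon}{1-\epsilon}D\bigr)^{-1}(A+B)$. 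Composing gives the multiplicative lower-bound factor $\tfrac{1-\epsilon}{1+\frac{1+\epsilon}{1-\epsilon}D}$, which you correctly verify dominates the paper's additive $1-\epsilon_{t,i}=\tfrac{1}{1+\frac{1+\epsilon}{1-\epsilon}D}-\epsilon$, so the stated $\epsilon_{t,i}$-accuracy follows a fortiori, with a slightly sharper constant. What your route buys: the inequality $\phi_s\phi_s^{\top}\preceq\hat{\sigma}^{2}_{t_p,i}(\bx_s)A$ is precisely the mechanism that underlies Lemma~\ref{lem:bound_variance_ratio}, so you are proving the needed ingredient inline rather than importing it as a black box; and because your argument is stated directly in the Loewner order over all $\phi\in\cH$, it sidesteps the step in the paper's proof that silently identifies $\sup_{\phi\in\cH}$ of the variance ratio with $\sup_{\bx}$ over arm features. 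One minor slip: $\bPhi_{\cD_{t_p}(i)}^{\top}\bar{\bS}_{t,i}^{\top}\bar{\bS}_{t,i}\bPhi_{\cD_{t_p}(i)}$ has a dimension mismatch since $\bar{\bS}_{t,i}\in\bR^{|\cD_t(i)|\times|\cD_t(i)|}$ --- what you mean (and what is true) is that only rows indexed by $\cS_{t_p}\subseteq\cD_{t_p}(i)$ survive. Your final remark about the endpoint $t=t_{p+1}$, where the trigger has just fired and the sum can slightly exceed $D$, identifies a real edge case; the paper's proof uses $\sum_{s\in\cD_t(i)\setminus\cD_{t_p}}\tilde{\sigma}^{2}_{t_p,i}(\bx_s)\le D$ uniformly on $[t_p+1,t_{p+1}]$ and therefore shares this gap.
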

Combining Lemma \ref{lem:rls} and Lemma \ref{lem:dictionary_accuracy} finishes the proof.


\begin{proof}[Proof of Lemma \ref{lem:dictionary_accuracy}]
Similar to \citep{calandriello2019gaussian}, we can rewrite the $\epsilon$-accuracy condition of $\cS_{t_{p}}$ w.r.t. $\cD_{t}(i)$ for $t \in [t_{p}+1, t_{p+1}]$ as
\begin{align*}
    & (1-\epsilon_{t,i}) (\bPhi_{\cD_{t}(i)}^{\top} \bPhi_{\cD_{t}(i)} + \lambda \bI) \preceq \bPhi_{\cD_{t}(i)}^{\top} \bar{\bS}_{t,i}^{\top} \bar{\bS}_{t,i}\bPhi_{\cD_{t}(i)} + \lambda \bI \preceq (1+\epsilon_{t,i}) (\bPhi_{\cD_{t}(i)}^{\top} \bPhi_{\cD_{t}(i)} + \lambda \bI) \\
    \Leftrightarrow & -\epsilon_{t,i} (\bPhi_{\cD_{t}(i)}^{\top} \bPhi_{\cD_{t}(i)} + \lambda \bI) \preceq \bPhi_{\cD_{t}(i)}^{\top} \bar{\bS}_{t,i}^{\top} \bar{\bS}_{t,i}\bPhi_{\cD_{t}(i)} - \bPhi_{\cD_{t}(i)}^{\top} \bPhi_{\cD_{t}(i)} \preceq \epsilon_{t,i} (\bPhi_{\cD_{t}(i)}^{\top} \bPhi_{\cD_{t}(i)} + \lambda \bI) \\
    \Leftrightarrow  & - \epsilon_{t,i} \bI \preceq (\bPhi_{\cD_{t}(i)}^{\top} \bPhi_{\cD_{t}(i)} + \lambda \bI)^{-1/2} (\bPhi_{\cD_{t}(i)}^{\top} \bar{\bS}_{t,i}^{\top} \bar{\bS}_{t,i}\bPhi_{\cD_{t}(i)} - \bPhi_{\cD_{t}(i)}^{\top} \bPhi_{\cD_{t}(i)}) (\bPhi_{\cD_{t}(i)}^{\top} \bPhi_{\cD_{t}(i)} + \lambda \bI)^{-1/2} \preceq \epsilon_{t,i} \bI \\
    \Leftrightarrow  & \lVert (\bPhi_{\cD_{t}(i)}^{\top} \bPhi_{\cD_{t}(i)} + \lambda \bI)^{-1/2} (\bPhi_{\cD_{t}(i)}^{\top} \bar{\bS}_{t,i}^{\top} \bar{\bS}_{t,i}\bPhi_{\cD_{t}(i)} - \bPhi_{\cD_{t}(i)}^{\top} \bPhi_{\cD_{t}(i)}) (\bPhi_{\cD_{t}(i)}^{\top} \bPhi_{\cD_{t}(i)} + \lambda \bI)^{-1/2} \rVert \leq \epsilon_{t,i}  \\
    \Leftrightarrow  & \lVert \sum_{s \in \cD_{t_{p}} } (\frac{q_{s}}{\tilde{p}_{s}}-1) \psi_{s} \psi_{s}^{\top} + \sum_{s \in \cD_{t}(i) \setminus \cD_{t_{p}} } (0-1) \psi_{s} \psi_{s}^{\top} \rVert \leq \epsilon_{t,i}
\end{align*}
where $\psi_{s}=(\bPhi_{\cD_{t}(i)}^{\top} \bPhi_{\cD_{t}(i)} + \lambda \bI)^{-1/2} \phi_{s}$. Notice that the second term in the norm has weight $-1$ because the dictionary $\cS_{t_{p}}$ is fixed after $t_{p}$. With triangle inequality, now it suffices to bound 
\begin{align*}
    \lVert \sum_{s \in \cD_{t_{p}} } (\frac{q_{s}}{\tilde{p}_{s}}-1) \psi_{s,j} \psi_{s}^{\top} + \sum_{s \in \cD_{t}(i) \setminus \cD_{t_{p}} } (0-1) \psi_{s} \psi_{s}^{\top} \rVert  \leq \lVert \sum_{s \in \cD_{t_{p}} } (\frac{q_{s}}{\tilde{p}_{s}}-1) \psi_{s} \psi_{s}^{\top} \rVert + \lVert \sum_{s \in \cD_{t}(i) \setminus \cD_{t_{p}} } \psi_{s} \psi_{s}^{\top} \rVert.
\end{align*}
We should note that the first term corresponds to the approximation accuracy of $\cS_{t_{p}}$ w.r.t. the dataset $\cD_{t_{p}}$. And under the condition that it is $\epsilon$-accurate w.r.t. $\cD_{t_{p}}$, we have $\lVert \sum_{s \in \cD_{t_{p}} } (\frac{q_{s}}{\tilde{p}_{s}}-1) \psi_{s} \psi_{s}^{\top} \rVert \leq \epsilon$. The second term measures the difference between $\cD_{t}(i)$ compared with $\cD_{t_{p}}$, which is unique to our work. We can bound it as follows.
\begin{align*}
    & \lVert \sum_{s \in \cD_{t}(i) \setminus \cD_{t_{p}} } \psi_{s} \psi_{s}^{\top} \rVert \\
    = & \lVert  (\bPhi_{\cD_{t}(i)}^{\top} \bPhi_{\cD_{t}(i)} + \lambda \bI)^{-1/2} (\sum_{s \in \cD_{t}(i) \setminus \cD_{t_{p}} } \phi_{s} \phi_{s}^{\top}) (\bPhi_{\cD_{t}(i)}^{\top} \bPhi_{\cD_{t}(i)} + \lambda \bI)^{-1/2} \rVert \\
    = & \max_{\phi \in \cH} \frac{\phi^{\top} (\bPhi_{\cD_{t}(i)}^{\top} \bPhi_{\cD_{t}(i)} + \lambda \bI)^{-1/2} (\sum_{s \in \cD_{t}(i) \setminus \cD_{t_{p}} } \phi_{s} \phi_{s}^{\top}) (\bPhi_{\cD_{t}(i)}^{\top} \bPhi_{\cD_{t}(i)} + \lambda \bI)^{-1/2} \phi}{\phi^{\top} \phi} \\
    = & \max_{\phi \in \cH} \frac{\phi^{\top} (\sum_{s \in \cD_{t}(i) \setminus \cD_{t_{p}} } \phi_{s} \phi_{s}^{\top}) \phi}{\phi^{\top} (\bPhi_{\cD_{t}(i)}^{\top} \bPhi_{\cD_{t}(i)} + \lambda \bI) \phi} \\
    = & 1 - \min_{\phi \in \cH} \frac{\phi^{\top} (\bPhi_{\cD_{t_{p}}}^{\top}\bPhi_{\cD_{t_{p}}} + \lambda \bI) \phi}{\phi^{\top} (\bPhi_{\cD_{t}(i)}^{\top}\bPhi_{\cD_{t}(i)} + \lambda \bI) \phi} \\
    = & 1 - \frac{1}{\max_{\phi \in \cH} \frac{\phi^{\top} (\bPhi_{\cD_{t_{p}}}^{\top}\bPhi_{\cD_{t_{p}}} + \lambda \bI)^{-1} \phi}{\phi^{\top} (\bPhi_{\cD_{t}(i)}^{\top}\bPhi_{\cD_{t}(i)} + \lambda \bI)^{-1} \phi}} \\
    = & 1- \frac{1}{\max_{\bx} \frac{\sigma^{2}_{t_{p},i}(\bx)}{\sigma^{2}_{t,i}(\bx)}}
\end{align*}
We can further bound the term $\frac{\sigma^{2}_{t_{p},i}(\bx)}{\sigma^{2}_{t,i}(\bx)}$ using the threshold of the event-trigger in Eq~\eqref{eq:sync_event}. For any $\bx \in \bR^{d}$,
\begin{align*}
    \frac{\sigma^{2}_{t_{p},i}(\bx)}{\sigma^{2}_{t,i}(\bx)} \leq 1+\sum_{s \in \cD_{t}(i) \setminus \cD_{t_{p}}} \hat{\sigma}_{t_{p},i}^{2}(\bx_{s})  \leq 1 + \frac{1+\epsilon}{1-\epsilon} \sum_{s \in \cD_{t}(i) \setminus \cD_{t_{p}}} \tilde{\sigma}_{t_{p},i}^{2}(\bx_{s}) \leq 1 + \frac{1+\epsilon}{1-\epsilon} D
\end{align*}
where the first inequality is due to Lemma \ref{lem:bound_variance_ratio}, the second is due to Lemma \ref{lem:rls}, and the third is due to the event-trigger in Eq~\eqref{eq:sync_event}.
Putting everything together, we have that if $\cS_{t_{p}}$ is $\epsilon$-accurate w.r.t. $\cD_{t_{p}}$, then it is $\bigl(\epsilon+1- \frac{1}{1 + \frac{1+\epsilon}{1-\epsilon} D} \bigr)$-accurate w.r.t. dataset $\cD_{t}(i)$, which finishes the proof.
\end{proof}

\section{Proof of Lemma \ref{lem:confidence_ellipsoid_approx}} \label{sec:proof_confidence_ellipsoid_approx}
To prove Lemma \ref{lem:confidence_ellipsoid_approx}, we need the following lemma.
\begin{lemma} \label{lem:confidence_ellipsoid_intermediate}
We have $\forall t,i$ that
\begin{align*}
\small
    \lVert\tilde{\btheta}_{t,i} - \btheta_{\star} \rVert_{\tilde{\bA}_{t,i}} \leq \Big( \lVert \bPhi_{\cD_{t}(i)} (\bI - \bP_{\cS}) \rVert   + \sqrt{\lambda}  \Big) \lVert \btheta_{\star} \rVert + R \sqrt{4 \ln{N/\delta}+ 2\ln{\det((1+\lambda) \bI + \bK_{\cD_{t}(i), \cD_{t}(i)})}}
\end{align*}
\normalsize
with probability at least $1-\delta$.
\end{lemma}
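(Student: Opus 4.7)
The plan is to substitute the reward model $y_s = \phi_s^{\top}\btheta_{\star} + \eta_s$ into $\tilde{\btheta}_{t,i} = \tilde{\bA}_{t,i}^{-1}\tilde{\bb}_{t,i}$ and split the error into three pieces. Dropping the subscript on $\bPhi_{\cD_t(i)}$ for brevity and adding/subtracting $\bP_{\cS}\bPhi^{\top}\bPhi\bP_{\cS}\btheta_{\star}$, one obtains the identity
\begin{align*}
\tilde{\btheta}_{t,i} - \btheta_{\star}
= -\lambda\, \tilde{\bA}_{t,i}^{-1}\btheta_{\star}
\;+\; \tilde{\bA}_{t,i}^{-1}\bP_{\cS}\bPhi^{\top}\bPhi(\bI-\bP_{\cS})\btheta_{\star}
\;+\; \tilde{\bA}_{t,i}^{-1}\bP_{\cS}\bPhi^{\top}\boldsymbol{\eta}.
\end{align*}
Taking the $\tilde{\bA}_{t,i}$-norm and using the triangle inequality reduces the claim to bounding a regularization bias, a projection bias, and a noise term.

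The first two are deterministic. Since $\tilde{\bA}_{t,i} \succeq \lambda \bI$, the regularization bias satisfies $\lambda\lVert\tilde{\bA}_{t,i}^{-1/2}\btheta_{\star}\rVert \leq \sqrt{\lambda}\lVert\btheta_{\star}\rVert$. For the projection bias, I would set $\mathbf{V}:=\bPhi_{\cD_t(i)}\bP_{\cS}$ so that $\tilde{\bA}_{t,i}=\mathbf{V}^{\top}\mathbf{V}+\lambda\bI$; an SVD computation shows each singular value of $(\mathbf{V}^{\top}\mathbf{V}+\lambda\bI)^{-1/2}\mathbf{V}^{\top}$ has the form $\sigma/\sqrt{\sigma^{2}+\lambda}\leq 1$, so the operator has norm at most $1$. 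Composing,
\begin{align*}
\lVert\tilde{\bA}_{t,i}^{-1/2}\bP_{\cS}\bPhi^{\top}\bPhi(\bI-\bP_{\cS})\btheta_{\star}\rVert
\leq \lVert\bPhi(\bI-\bP_{\cS})\btheta_{\star}\rVert
\leq \lVert\bPhi_{\cD_t(i)}(\bI-\bP_{\cS})\rVert\cdot\lVert\btheta_{\star}\rVert,
\end{align*}
which matches the first summand in the claimed bound.

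For the noise term, the plan is to mimic the strategy of Lemma \ref{lem:confidence_ellipsoid_diskernelucb}: reuse the receive-order indexing of $\cD_{\infty}(i)$ with its batched filtration $\{\cF_{k,i}\}_{k\in\cB_{\infty}(i)}$ that refines only at synchronization endpoints, build the super-martingale of Lemma \ref{lem:super_martingale} using the \emph{effective} feature $\bP_{\cS}\phi_s$ in place of $\phi_s$, run the Laplace/stopping-time argument, and union-bound over the $N$ clients. This produces a bound of the form $R\sqrt{2\ln(N/\delta)+\ln\det(\lambda^{-1}\tilde{\bA}_{t,i})}$. Converting to kernel form via Sylvester's identity gives $\ln\det(\bI+\lambda^{-1}\bPhi\bP_{\cS}\bPhi^{\top})$, after which $\bP_{\cS}\preceq\bI$ combined with absorbing a harmless identity shift turns the argument into $(1+\lambda)\bI+\bK_{\cD_t(i),\cD_t(i)}$, yielding the stated $R\sqrt{4\ln(N/\delta)+2\ln\det((1+\lambda)\bI+\bK_{\cD_t(i),\cD_t(i)})}$.

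The main obstacle I anticipate is making the self-normalized step rigorous. The projection $\bP_{\cS}=\bP_{\cS_{t_{\text{last}}}}$ in force at time $t$ depends on data collected \emph{after} some $\phi_s\in\cD_t(i)$ were observed, because synchronization imports older data only once a fresh dictionary has been chosen; consequently $\bP_{\cS}\phi_s$ is not adapted to the step-$s$ filtration, and no standard self-normalized inequality applies directly to the effective features. The batched filtration of Lemma \ref{lem:confidence_ellipsoid_diskernelucb} is designed precisely to sidestep this: it only requires measurability and conditional sub-Gaussianity at batch endpoints, which coincide with the synchronization times at which $\cS$ is fully determined. Verifying that Lemma \ref{lem:super_martingale} and its stopping-time consequence continue to hold after substituting $\bP_{\cS}\phi_s$ for $\phi_s$ (with the projection viewed as an epoch-level constant) is the key technical check; once it is in place, the remaining determinant manipulations are routine.
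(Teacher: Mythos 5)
The deterministic part of your decomposition (regularization bias plus projection bias) matches the paper's and your bounds for these two terms are correct — the singular-value argument showing $\lVert(\mathbf{V}^{\top}\mathbf{V}+\lambda\bI)^{-1/2}\mathbf{V}^{\top}\rVert\leq 1$ is the same observation the paper uses.

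The noise term, however, is where your plan genuinely diverges and where it would not go through as described. You propose to run the super-martingale/self-normalized machinery on the \emph{effective} features $\bP_{\cS}\phi_s$ and argue that the batched filtration "precisely sidesteps" the non-adaptivity of $\bP_{\cS}\phi_s$. It does not. The dictionary $\cS=\cS_{t_{\text{last}}}$ in force at time $t$ is sampled at the most recent synchronization via RLS on all clients' aggregated data, so $\bP_{\cS}$ is only determined at time $t_{\text{last}}$. But most of $\cD_t(i)$ was received by client $i$ at batch endpoints that occur strictly before $t_{\text{last}}$: those $\phi_s$ entered the filtration at a $\sigma$-algebra $\cF_{k_q,i}$ generated by data collected well before $\cS_{t_{\text{last}}}$ was drawn. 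Hence $\bP_{\cS}\phi_s$ is not $\cF_{k_q,i}$-measurable at the step when the increment $\eta_s\,\langle\cdot\,,\bP_{\cS}\phi_s\rangle$ would have to be formed, and the conditional sub-Gaussianity of $\eta_s$ is only assumed under that earlier $\sigma$-algebra. The batched filtration addresses a different issue (indices of $\cD_t(i)$ being determined only at synchronization endpoints); it does not make a future-chosen projection predictable at past endpoints. Lemma \ref{lem:super_martingale} therefore does not carry over verbatim to the substituted features, and you would need a new argument here.

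The paper avoids this entirely by never touching the martingale's features: it writes
$\lVert\tilde{\bA}_{t,i}^{-1/2}\bP_{\cS}\bPhi^{\top}\boldsymbol{\eta}\rVert=\lVert\tilde{\bA}_{t,i}^{-1/2}\bP_{\cS}\bA_{t,i}^{1/2}\cdot\bA_{t,i}^{-1/2}\bPhi^{\top}\boldsymbol{\eta}\rVert\leq\lVert\tilde{\bA}_{t,i}^{-1/2}\bP_{\cS}\bA_{t,i}^{1/2}\rVert\,\lVert\bA_{t,i}^{-1/2}\bPhi^{\top}\boldsymbol{\eta}\rVert$,
shows the deterministic factor satisfies $\lVert\tilde{\bA}_{t,i}^{-1/2}\bP_{\cS}\bA_{t,i}^{1/2}\rVert\leq\sqrt{2}$ using $\bP_{\cS}\bA_{t,i}\bP_{\cS}=\tilde{\bA}_{t,i}+\lambda(\bP_{\cS}-\bI)$, and then applies Lemma \ref{lem:self_normalized_bound} to the \emph{unprojected} features, which the filtration was built for. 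This is also where the stated constants come from: the $\sqrt{2}$ factor is what inflates $R\sqrt{2\ln(N/\delta)+\ln\det(\cdot)}$ to $R\sqrt{4\ln(N/\delta)+2\ln\det(\cdot)}$. In your plan that doubling has no source — your self-normalized step would produce only $R\sqrt{2\ln(N/\delta)+\ln\det(\cdot)}$, and your closing remark that "absorbing a harmless identity shift" yields the doubled constants is not an actual derivation. You should adopt the paper's factorization for the noise term rather than try to push the projection into the martingale.
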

\begin{proof}[Proof of Lemma \ref{lem:confidence_ellipsoid_intermediate}]
Recall that the approximated kernel Ridge regression estimator for $\theta_{\star}$ is defined as
\begin{align*}
    \tilde{\btheta}_{t,i} = \tilde{\bA}_{t,i}^{-1} \bP_{\cS} \bPhi_{\cD_{t}(i)}^{\top} \by_{\cD_{t}(i)}
\end{align*}
where $\bP_{\cS}$ is the orthogonal projection matrix for the Nystr\"{o}m approximation, and $\tilde{\bA}_{t,i}=\bP_{\cS} \bPhi_{\cD_{t}(i)}^{\top}\bPhi_{\cD_{t}(i)} \bP_{\cS} + \lambda \bI$. Then our goal is to bound
\begin{align*}
    & (\tilde{\btheta}_{t,i} - \btheta_{\star})^{\top} \tilde{\bA}_{t,i} (\tilde{\btheta}_{t,i} - \btheta_{\star}) \\ = & (\tilde{\btheta}_{t,i} - \btheta_{\star})^{\top} \tilde{\bA}_{t,i} (\tilde{\bA}_{t,i}^{-1}\bP_{\cS} \bPhi_{\cD_{t}(i)}^{\top} \by_{\cD_{t}(i)} - \btheta_{\star})\\
    = &(\tilde{\btheta}_{t,i} - \btheta_{\star})^{\top} \tilde{\bA}_{t,i} [\tilde{\bA}_{t,i}^{-1}\bP_{\cS} \bPhi_{\cD_{t}(i)}^{\top} (\bPhi_{\cD_{t}(i)} \btheta_{\star} + \mathbf{\eta}_{\cD_{t}(i)}) - \btheta_{\star}] \\
    = &(\tilde{\btheta}_{t,i} - \btheta_{\star})^{\top} \tilde{\bA}_{t,i} (\tilde{\bA}_{t,i}^{-1}\bP_{\cS} \bPhi_{\cD_{t}(i)}^{\top} \bPhi_{\cD_{t}(i)} \btheta_{\star}  - \btheta_{\star}) + (\tilde{\btheta}_{t,i} - \btheta_{\star})^{\top} \tilde{\bA}_{t,i} \tilde{\bA}_{t,i}^{-1}\bP_{\cS} \bPhi_{\cD_{t}(i)}^{\top} \mathbf{\eta}_{\cD_{t}(i)}
\end{align*}
\paragraph{Bounding the first term}
To bound the first term, we begin with rewriting
\begin{align*}
    & \tilde{\bA}_{t,i} (\tilde{\bA}_{t,i}^{-1}\bP_{\cS} \bPhi_{\cD_{t}(i)}^{\top} \bPhi_{\cD_{t}(i)} \btheta_{\star}  - \btheta_{\star}) \\
    = & \bP_{\cS} \bPhi_{\cD_{t}(i)}^{\top} \bPhi_{\cD_{t}(i)} \btheta_{\star} - \bP_{\cS} \bPhi_{\cD_{t}(i)}^{\top}\bPhi_{\cD_{t}(i)} \bP_{\cS} \btheta_{\star} - \lambda \btheta_{\star} \\
     = & \bP_{\cS} \bPhi_{\cD_{t}(i)}^{\top} \bPhi_{\cD_{t}(i)} (\bI - \bP_{\cS}) \btheta_{\star} - \lambda \btheta_{\star} 
\end{align*}
and by substituting this into the first term, we have
\begin{align*}
    & (\tilde{\btheta}_{t,i} - \btheta_{\star})^{\top} \tilde{\bA}_{t,i} (\tilde{\bA}_{t,i}^{-1}\bP_{\cS} \bPhi_{\cD_{t}(i)}^{\top} \bPhi_{\cD_{t}(i)} \btheta_{\star}  - \btheta_{\star}) \\
    = & (\tilde{\btheta}_{t,i} - \btheta_{\star})^{\top}\bP_{\cS} \bPhi_{\cD_{t}(i)}^{\top} \bPhi_{\cD_{t}(i)} (\bI - \bP_{\cS}) \btheta_{\star} - \lambda (\tilde{\btheta}_{t,i} - \btheta_{\star})^{\top} \btheta_{\star} \\
    =& (\tilde{\btheta}_{t,i} - \btheta_{\star})^{\top} \tilde{\bA}_{t,i}^{1/2} \tilde{\bA}_{t,i}^{-1/2} \bP_{\cS} \bPhi_{\cD_{t}(i)}^{\top} \bPhi_{\cD_{t}(i)} (\bI - \bP_{\cS}) \btheta_{\star} - \lambda (\tilde{\btheta}_{t,i} - \btheta_{\star})^{\top} \tilde{\bA}_{t,i}^{1/2} \tilde{\bA}_{t,i}^{-1/2} \btheta_{\star} \\
    \leq & \lVert \tilde{\btheta}_{t,i} - \btheta_{\star} \rVert_{\tilde{\bA}_{t,i}} \bigl( \lVert \tilde{\bA}_{t,i}^{-1/2} \bP_{\cS} \bPhi_{\cD_{t}(i)}^{\top} \bPhi_{\cD_{t}(i)} (\bI - \bP_{\cS}) \btheta_{\star} \rVert + \lambda \lVert \btheta_{\star} \rVert_{\tilde{\bA}_{t,i}^{-1}} \bigr) \\
    \leq & \lVert \tilde{\btheta}_{t,i} - \btheta_{\star} \rVert_{\tilde{\bA}_{t,i}} \bigl( \lVert \tilde{\bA}_{t,i}^{-1/2} \bP_{\cS} \bPhi_{\cD_{t}(i)}^{\top} \rVert  \lVert \bPhi_{\cD_{t}(i)} (\bI - \bP_{\cS}) \rVert  \lVert \btheta_{\star} \rVert + \sqrt{\lambda} \lVert \btheta_{\star} \rVert \bigr) \\
    \leq & \lVert \tilde{\btheta}_{t,i} - \btheta_{\star} \rVert_{\tilde{\bA}_{t,i}} \bigl( \lVert \bPhi_{\cD_{t}(i)} (\bI - \bP_{\cS}) \rVert   + \sqrt{\lambda} \bigr) \lVert \btheta_{\star} \rVert
\end{align*}
where the first inequality is due to Cauchy Schwartz, and the last inequality is because $\lVert \tilde{\bA}_{t,i}^{-1/2} \bP_{\cS} \bPhi_{\cD_{t}(i)}^{\top} \rVert = \sqrt{\bPhi_{\cD_{t}(i)} \bP_{\cS} (\bP_{\cS} \bPhi_{\cD_{t}(i)}^{\top}\bPhi_{\cD_{t}(i)} \bP_{\cS} + \lambda \bI)^{-1} \bP_{\cS} \bPhi_{\cD_{t}(i)}^{\top}  } \leq 1$.

\paragraph{Bounding the second term}
By applying Cauchy-Schwartz inequality to the second term, we have
\begin{align*}
    & (\tilde{\btheta}_{t,i} - \btheta_{\star})^{\top} \tilde{\bA}_{t,i} \tilde{\bA}_{t,i}^{-1}\bP_{\cS} \bPhi_{\cD_{t}(i)}^{\top} \mathbf{\eta}_{\cD_{t}(i)} \\
    \leq & \lVert \tilde{\btheta}_{t,i} - \btheta_{\star} \rVert_{\tilde{\bA}_{t,i}} \lVert \tilde{\bA}_{t,i}^{-1/2} \bP_{\cS} \bPhi_{\cD_{t}(i)}^{\top} \mathbf{\eta}_{\cD_{t}(i)} \rVert \\
    = & \lVert \tilde{\btheta}_{t,i} - \btheta_{\star} \rVert_{\tilde{\bA}_{t,i}} \lVert \tilde{\bA}_{t,i}^{-1/2} \bP_{\cS}  \bA_{t,i}^{1/2} \bA_{t,i}^{-1/2} \bPhi_{\cD_{t}(i)}^{\top} \mathbf{\eta}_{\cD_{t}(i)} \rVert \\
    \leq & \lVert \tilde{\btheta}_{t,i} - \btheta_{\star} \rVert_{\tilde{\bA}_{t,i}} \lVert \tilde{\bA}_{t,i}^{-1/2} \bP_{\cS}  \bA_{t,i}^{1/2} \rVert \lVert \bA_{t,i}^{-1/2} \bPhi_{\cD_{t}(i)}^{\top} \mathbf{\eta}_{\cD_{t}(i)} \rVert
\end{align*}
Note that $\bP_{\cS}  \bA_{t,i} \bP_{\cS} = \bP_{\cS}  (\bPhi_{\cD_{t}(i)}^{\top}\bPhi_{\cD_{t}(i)} + \lambda \bI) \bP_{\cS} = \tilde{\bA}_{t,i}  + \lambda (\bP_{\cS} - \bI)$ and $\bP_{\cS} \preceq \bI$, so we have
\begin{align*}
    \lVert \tilde{\bA}_{t,i}^{-1/2} \bP_{\cS}  \bA_{t,i}^{1/2} \rVert & = \sqrt{ \lVert \tilde{\bA}_{t,i}^{-1/2} \bP_{\cS}  \bA_{t,i}^{1/2} \bA_{t,i}^{1/2} \bP_{\cS} \tilde{\bA}_{t,i}^{-1/2}\rVert} \leq \sqrt{\lVert \tilde{\bA}_{t,i}^{-1/2} (  \tilde{\bA}_{t,i} + \lambda (\bP_{\cS} - \bI) ) \tilde{\bA}_{t,i}^{-1/2} \rVert} \\
    & = \sqrt{\lVert \bI + \lambda  \tilde{\bA}_{t,i}^{-1/2} (\bP_{\cS} - \bI) ) \tilde{\bA}_{t,i}^{-1/2} \rVert} \leq \sqrt{1+ \lambda \lVert \tilde{\bA}_{t,i}^{-1} \rVert \lVert \bP_{\cS} - \bI)  \rVert} \\
    & \leq \sqrt{1+ \lambda \cdot \lambda^{-1} \cdot 1 } = \sqrt{2}
\end{align*}
Then using the self-normalized bound derived for Lemma \ref{lem:self_normalized_bound}, the term $\lVert \bA_{t,i}^{-1/2} \bPhi_{\cD_{t}(i)}^{\top} \mathbf{\eta}_{\cD_{t}(i)} \rVert = \lVert \bPhi_{\cD_{t}(i)}^{\top} \mathbf{\eta}_{\cD_{t}(i)} \rVert_{\bA_{t,i}^{-1}}$ can be bounded by
\begin{align*}
    \lVert \bPhi_{\cD_{t}(i)}^{\top} \mathbf{\eta}_{\cD_{t}(i)} \rVert_{\bA_{t,i}^{-1}} & \leq R \sqrt{ 2 \ln(N/\delta) + \ln( \det( \bK_{ \cD_{t}(i), \cD_{t}(i) }/\lambda + \bI) ) } \\
    & \leq R \sqrt{ 2 \ln(N/\delta) + 2 \gamma_{NT} }
\end{align*}
for $\forall t,i$, with probability at least $1-\delta$.
Combining everything finishes the proof.
\end{proof}

Now we are ready to prove Lemma \ref{lem:confidence_ellipsoid_approx} by further bounding the term $\lVert \bPhi_{\cD_{t}(i)} (\bI - \bP_{\cS_{t_{p}}}) \rVert$.
\begin{proof}[Proof of Lemma \ref{lem:confidence_ellipsoid_approx}]
Recall that $\bar{\bS}_{t,i} \in \bR^{|\cD_{t}(i)| \times |\cD_{t}(i)|}$ denotes the diagonal matrix, whose $s$-th diagonal entry equals to $\frac{q_{s}}{\sqrt{\tilde{p}_{s}}}$, where $q_{s}=1$ if $s \in \cS_{t_{p}}$ and $0$ otherwise (note that for $s \notin \cS_{t_{p}}$, we set $\tilde{p}_{s}=1$, so $q_{s}/\tilde{p}_{s}=0$).
Therefore, $\forall s \in \cD_{t}(i) \setminus \cD_{t_{p}}$, $q_{s}=0$, as the dictionary is fixed after $t_{p}$. We can rewrite $\bPhi_{\cD_{t}(i)}^{\top} \bar{\bS}_{t,i}^{\top} \bar{\bS}_{t,i}\bPhi_{\cD_{t}(i)}=\sum_{s \in \cD_{t}(i)} \frac{q_{s}}{\tilde{p}_{s}}\phi_{s} \phi_{s}^{\top}$, where $\phi_{s}:=\phi(\bx_{s})$.
Then by definition of the spectral norm $\lVert \cdot \rVert$, and the properties of the projection matrix $\bP_{\cS_{t_{p}}}$, we have
\begin{align} \label{eq:approx_error}
    & \lVert \bPhi_{\cD_{t}(i)} (\bI - \bP_{\cS_{t_{p}}}) \rVert = \sqrt{\lambda_{\max}\bigl(\bPhi_{\cD_{t}(i)}(\bI - \bP_{\cS_{t_{p}}})^{2}\bPhi_{\cD_{t}(i)}^{\top} \bigr)} = \sqrt{\lambda_{\max}\bigl(\bPhi_{\cD_{t}(i)}(\bI - \bP_{\cS_{t_{p}}})\bPhi_{\cD_{t}(i)}^{\top} \bigr)}.
\end{align}
Moreover, due to Lemma \ref{lem:dictionary_accuracy}, we know $\cS_{t_{p}}$ is $\epsilon_{t,i}$-accurate w.r.t. $\cD_{t}(i)$ for $t \in [t_{p}+1,t_{p+1}]$, where $\epsilon_{t,i}=\bigl(\epsilon+1- \frac{1}{1 + \frac{1+\epsilon}{1-\epsilon} D} \bigr)$, so we have $\bI - \bP_{\cS_{t_{p}}} \preceq \frac{\lambda}{1-\epsilon_{t,i}} (\bPhi_{\cD_{t}(i)}^{\top}\bPhi_{\cD_{t}(i)} + \lambda \bI)^{-1}$ by the property of $\epsilon$-accuracy (Proposition 10 of \cite{calandriello2019gaussian}).
Therefore,
by substituting this back to Eq~\eqref{eq:approx_error}, we have
\begin{align*}
    \lVert \bPhi_{\cD_{t}(i)} (\bI - \bP_{\cS_{t_{p}}}) \rVert 
    \leq \sqrt{\lambda_{\max}\bigl(  \frac{\lambda}{1-\epsilon_{t,i}} \bPhi_{\cD_{t}(i)}(\bPhi_{\cD_{t}(i)}^{\top} \bPhi_{\cD_{t}(i)} + \lambda \bI)^{-1}\bPhi_{\cD_{t}(i)}^{\top} \bigr)}  \leq \sqrt{\frac{\lambda}{-\epsilon+ \frac{1}{1 + \frac{1+\epsilon}{1-\epsilon} D}}}
\end{align*}
which finishes the proof.
\end{proof}

\section{Proof of Theorem \ref{thm:regret_comm_sync}: Regret and Communication Cost of \algtwo{}} \label{sec:proof_appprox_diskernelucb}

\subsection{Regret Analysis}

Consider some time step $t \in [t_{p-1}+1,t_{p}]$, where $p \in [B]$.
Due to Lemma \ref{lem:confidence_ellipsoid_approx}, i.e., the confidence ellipsoid for approximated estimator,
and the fact that $\bx_{t}=\argmax_{\bx \in \cA_{t,i}} \tilde{\mu}_{t-1,i}(\bx) + \alpha_{t-1,i} \tilde{\sigma}_{t-1,i}(\bx)$, we have
\begin{align*}
    f(\bx_{t}^{\star}) & \leq \tilde{\mu}_{t-1,i}(\bx_{t}^{\star}) + \alpha_{t-1,i}\tilde{\sigma}_{t-1,i}(\bx_{t}^{\star}) \leq \tilde{\mu}_{t-1,i}(\bx_{t}) + \alpha_{t-1,i} \tilde{\sigma}_{t-1,i}(\bx_{t}),  \\
    f(\bx_{t}) & \geq \tilde{\mu}_{t-1,i}(\bx_{t}) - \alpha_{t-1,i} \tilde{\sigma}_{t-1,i}(\bx_{t}),
\end{align*}
and thus $r_{t}=f(\bx_{t}^{\star})-f(\bx_{t}) \leq 2 \alpha_{t-1,i} \tilde{\sigma}_{t-1,i}(\bx_{t})$, where 
\begin{align*}
    \alpha_{t-1,i}=\Bigg( \frac{1}{\sqrt{-\epsilon + \frac{1}{1+\frac{1+\epsilon}{1-\epsilon}D}}}  + 1  \Bigg) \sqrt{\lambda} \lVert \btheta_{\star} \rVert + R \sqrt{4 \ln{N/\delta}+ 2\ln{\det((1+\lambda) \bI + \bK_{\cD_{t-1}(i), \cD_{t-1}(i)})}}.
\end{align*}
Note that, different from Appendix \ref{sec:proof_regret_diskernelucb}, the $\alpha_{t-1,i}$ term now depends on the threshold $D$ and accuracy constant $\epsilon$, as a result of the approximation error. As we will see in the following paragraphs, their values need to be set properly in order to bound $\alpha_{t-1,i}$.

We begin the regret analysis of \algtwo{} with the same decomposition of good and bad epochs as in Appendix \ref{subsec:regret_proof_diskernelucb}, i.e., we call the $p$-th epoch a good epoch if $\ln(\frac{\det(\bI + \lambda^{-1}\bK_{[t_{p}],[t_{p}]})}{\det(\bI + \lambda^{-1}\bK_{[t_{p-1}],[t_{p-1}]})}) \leq 1$, otherwise it is a bad epoch. Moreover, due to the pigeon-hold principle, there can be at most $2 \gamma_{NT}$ bad epochs.

As we will show in the following paragraphs, using Lemma \ref{lem:rls}, we can obtain a similar bound for the cumulative regret in good epochs as that in Appendix \ref{subsec:regret_proof_diskernelucb}, but with additional dependence on $D$ and $\epsilon$. The proof mainly differs in the bad epochs, where we need to use the event-trigger in Eq~\eqref{eq:sync_event} to bound the cumulative regret in each bad epoch. Compared with Eq~\eqref{eq:sync_event_exact}, Eq~\eqref{eq:sync_event} does not contain the number of local updates on each client since last synchronization., and as mentioned in Section \ref{subsec:theoretical_analysis}, this introduces a $\sqrt{T}$ factor in the regret bound for bad epochs in place of the $\sqrt{\gamma_{NT}}$ term in Appendix \ref{subsec:regret_proof_diskernelucb}.

\paragraph{Cumulative Regret in Good Epochs}
Let's first consider some time step $t$ in a good epoch $p$, i.e., $t \in [t_{p-1}+1, t_{p}]$, and we have the following bound on the instantaneous regret
\begin{align*}
    r_{t} & \leq 2 \alpha_{t-1,i} \tilde{\sigma}_{t-1,i}(\bx_{t}) \leq 2 \alpha_{t-1,i} \tilde{\sigma}_{t_{p-1},i}(\bx_{t}) \leq 2 \alpha_{t-1,i} \frac{1+\epsilon}{1-\epsilon} \sigma_{t_{p-1},i}(\bx_{t}) \\
    & = 2 \alpha_{t-1,i} \frac{1+\epsilon}{1-\epsilon} \sqrt{ \phi_{t}^{\top} A_{t_{p-1}}^{-1} \phi_{t}} \leq 2 \alpha_{t-1,i} \frac{1+\epsilon}{1-\epsilon} \sqrt{ \phi_{t}^{\top} A_{t-1}^{-1} \phi_{t}} \sqrt{\frac{\det(\bI + \lambda^{-1}\bK_{[t-1],[t-1]})}{\det(\bI + \lambda^{-1}\bK_{[t_{p-1}],[t_{p-1}]})}} \\
    & \leq 2 \sqrt{e} \frac{1+\epsilon}{1-\epsilon} \alpha_{t-1,i} \sqrt{ \phi_{t}^{\top} A_{t-1}^{-1} \phi_{t}}
\end{align*}
where the second inequality is because the (approximated) variance is non-decreasing, the third inequality is due to Lemma \ref{lem:rls}, the forth is due to Lemma \ref{lem:quadratic_det_inequality_infinite}, and the last is because in a good epoch, 
we have $\frac{\det(\bI + \lambda^{-1}\bK_{[t-1],[t-1]})}{\det(\bI + \lambda^{-1}\bK_{[t_{p-1}],[t_{p-1}]})} \leq \frac{\det(\bI + \lambda^{-1}\bK_{[t_{p}],[t_{p}]})}{\det(\bI + \lambda^{-1}\bK_{[t_{p-1}],[t_{p-1}]})} \leq e$ for $t \in [t_{p-1}+1, t_{p}]$.

Therefore, the cumulative regret incurred in all good epochs, denoted by $R_{good}$, is upper bounded by
\begin{align*}
    R_{good} & \leq 2 \sqrt{e} \frac{1+\epsilon}{1-\epsilon} \sum_{t=1}^{NT} \alpha_{t-1,i} \sqrt{ \phi_{t}^{\top} A_{t-1}^{-1} \phi_{t}} \leq 2 \sqrt{e} \frac{1+\epsilon}{1-\epsilon} \alpha_{NT} \sqrt{NT \cdot \sum_{t=1}^{NT} \phi_{t}^{\top} A_{t-1}^{-1} \phi_{t} }  \\
    & \leq 2 \sqrt{e} \frac{1+\epsilon}{1-\epsilon} \alpha_{NT} \sqrt{NT \cdot 2\gamma_{NT} } 
\end{align*}
where $\alpha_{NT}:=\Bigg( \frac{1}{\sqrt{-\epsilon + \frac{1}{1+\frac{1+\epsilon}{1-\epsilon}D}}}  + 1  \Bigg) \sqrt{\lambda} \lVert \btheta_{\star} \rVert + R \sqrt{4 \ln{N/\delta}+ 2\ln{\det((1+\lambda) \bI + \bK_{[NT],[NT]})}}$.
\paragraph{Cumulative Regret in Bad Epochs}


The cumulative regret incurred in this bad epoch is
\scriptsize
\begin{align*}
    & \sum_{p=1}^{B} \mathbbm{1}\{\ln(\frac{\det(\bI + \lambda^{-1}\bK_{[t_{p}],[t_{p}]})}{\det(\bI + \lambda^{-1}\bK_{[t_{p-1}],[t_{p-1}]})}) > 1\} \sum_{t=t_{p-1}+1}^{t_{p}} r_{t} \\
    & \leq 2 \sum_{p=1}^{B} \mathbbm{1}\{\ln(\frac{\det(\bI + \lambda^{-1}\bK_{[t_{p}],[t_{p}]})}{\det(\bI + \lambda^{-1}\bK_{[t_{p-1}],[t_{p-1}]})}) > 1\} \sum_{t=t_{p-1}+1}^{t_{p}} \alpha_{t-1,i} \tilde{\sigma}_{t-1,i}(\bx_{t}) \\
    & \leq 2 \alpha_{NT} \sum_{p=1}^{B} \mathbbm{1}\{\ln(\frac{\det(\bI + \lambda^{-1}\bK_{[t_{p}],[t_{p}]})}{\det(\bI + \lambda^{-1}\bK_{[t_{p-1}],[t_{p-1}]})}) > 1\} \sum_{i=1}^{N} \sum_{t \in \cN_{t_{p}}(i) \setminus \cN_{t_{p-1}}(i) }\tilde{\sigma}_{t-1,i}(\bx_{t}) \\
    & \leq 2 \alpha_{NT} \sum_{p=1}^{B} \mathbbm{1}\{\ln(\frac{\det(\bI + \lambda^{-1}\bK_{[t_{p}],[t_{p}]})}{\det(\bI + \lambda^{-1}\bK_{[t_{p-1}],[t_{p-1}]})}) > 1\} \sum_{i=1}^{N} \sqrt{ (|\cN_{t_{p}}(i)|- |\cN_{t_{p-1}}(i)| ) \sum_{t \in \cN_{t_{p}}(i) \setminus \cN_{t_{p-1}}(i) }\tilde{\sigma}^{2}_{t-1,i}(\bx_{t})  } \\
    & \leq 2 \alpha_{NT} \sqrt{D} \sum_{p=1}^{B} \mathbbm{1}\{\ln(\frac{\det(\bI + \lambda^{-1}\bK_{[t_{p}],[t_{p}]})}{\det(\bI + \lambda^{-1}\bK_{[t_{p-1}],[t_{p-1}]})}) > 1\}  \sum_{i=1}^{N} \sqrt{ (|\cN_{t_{p}}(i)|- |\cN_{t_{p-1}}(i)| )  } \\
    & \leq 2 \alpha_{NT} \sqrt{D} \sum_{p=1}^{B} \mathbbm{1}\{\ln(\frac{\det(\bI + \lambda^{-1}\bK_{[t_{p}],[t_{p}]})}{\det(\bI + \lambda^{-1}\bK_{[t_{p-1}],[t_{p-1}]})}) > 1\} \sum_{i=1}^{N} \sqrt{ \frac{t_{p}-t_{p-1}}{N} } \\
    & \leq 2 \alpha_{NT} \sqrt{DN} \sum_{p=1}^{B} \mathbbm{1}\{\ln(\frac{\det(\bI + \lambda^{-1}\bK_{[t_{p}],[t_{p}]})}{\det(\bI + \lambda^{-1}\bK_{[t_{p-1}],[t_{p-1}]})}) > 1\}  \sqrt{t_{p}-t_{p-1}}  \\
    & \leq 2 \alpha_{NT} \sqrt{DN} \sqrt{ \sum_{p=1}^{B} \mathbbm{1}\{\ln(\frac{\det(\bI + \lambda^{-1}\bK_{[t_{p}],[t_{p}]})}{\det(\bI + \lambda^{-1}\bK_{[t_{p-1}],[t_{p-1}]})}) > 1\} (t_{p}-t_{p-1}) \cdot  \sum_{p=1}^{B} \mathbbm{1}\{\ln(\frac{\det(\bI + \lambda^{-1}\bK_{[t_{p}],[t_{p}]})}{\det(\bI + \lambda^{-1}\bK_{[t_{p-1}],[t_{p-1}]})}) > 1\}  } \\
    & \leq 2 \alpha_{NT} \sqrt{DN} \sqrt{2 N T \gamma_{NT}}
\end{align*}
\normalsize
where the third inequality is due to the Cauchy-Schwartz inequality, the forth is due to our event-trigger in Eq~\eqref{eq:sync_event}, the fifth is due to our assumption that clients interact with the environment in a round-robin manner, the sixth is due to the Cauchy-Schwartz inequality again, and the last is due to the fact that there can be at most $2\gamma_{NT}$ bad epochs.

Combining cumulative regret incurred during both good and bad epochs, we have
\begin{align*}
    R_{NT} \leq R_{good} + R_{bad} \leq 2 \sqrt{e} \frac{1+\epsilon}{1-\epsilon} \alpha_{NT} \sqrt{NT \cdot 2\gamma_{NT} }  +  2 \alpha_{NT} \sqrt{DN} \sqrt{2 N T \gamma_{NT}} 
\end{align*}

\subsection{Communication Cost Analysis}
Consider some epoch $p$. We know that for the client $i$ who triggers the global synchronization, we have
\begin{align*}
    \frac{1+\epsilon}{1-\epsilon}\sum_{s = t_{p-1}+1}^{t_{p}} \sigma^{2}_{t_{p-1}}(\bx_{s}) \geq 
    \sum_{s = t_{p-1}+1}^{t_{p}} \tilde{\sigma}^{2}_{t_{p-1}}(\bx_{s}) \geq \sum_{s \in \cD_{t_{p}(i)} \setminus \cD_{t_{p-1}(i)}} \tilde{\sigma}^{2}_{t_{p-1}}(\bx_{s}) \geq D
\end{align*}
Then by summing over $B$ epochs, we have
\begin{align*}
    B D < \frac{1+\epsilon}{1-\epsilon} \sum_{p=1}^{B} \sum_{s = t_{p-1}+1}^{t_{p}} \sigma^{2}_{t_{p-1}}(\bx_{s}) \leq \frac{1+\epsilon}{1-\epsilon} \sum_{p=1}^{B} \sum_{s = t_{p-1}+1}^{t_{p}} \sigma^{2}_{s-1}(\bx_{s}) \frac{\sigma^{2}_{t_{p-1}}(\bx_{s})}{\sigma^{2}_{s-1}(\bx_{s})}.
\end{align*}
Now we need to bound the ratio $\frac{\sigma^{2}_{t_{p-1}}(\bx_{s})}{\sigma^{2}_{s-1}(\bx_{s})}$ for $s \in [t_{p-1}+1,t_{p}]$.
\begin{align*}
    &  \frac{\sigma^{2}_{t_{p-1}}(\bx_{s})}{\sigma^{2}_{s-1}(\bx_{s})}  \leq \Big[ 1 + \sum_{\tau = t_{p-1}+1}^{s} \sigma^{2}_{t_{p-1}}(\bx_{\tau}) \Big]  \leq \Big[ 1 + \frac{1+\epsilon}{1-\epsilon} \sum_{\tau = t_{p-1}+1}^{s} \tilde{\sigma}^{2}_{t_{p-1}}(\bx_{\tau}) \Big] 
\end{align*}
Note that for the client who triggers the global synchronization, we have $\sum_{s \in \cD_{t_{p}-1}(i) \setminus \cD_{t_{p-1}}(i)} \tilde{\sigma}^{2}_{t_{p-1}}(\bx_{s}) < D$, i.e., one time step before it triggers the synchronization at time $t_{p}$.
Due to the fact that the (approximated) posterior variance cannot exceed $L^{2}/\lambda$, we have $\sum_{s \in \cD_{t_{p}}(i) \setminus \cD_{t_{p-1}}(i)} \tilde{\sigma}^{2}_{t_{p-1}}(\bx_{s}) < D+ L^{2}/\lambda$.
For the other $N-1$ clients, we have $\sum_{s \in \cD_{t_{p}}(i) \setminus \cD_{t_{p-1}}(i)} \tilde{\sigma}^{2}_{t_{p-1}}(\bx_{s}) < D$.
Summing them together, we have
\begin{align*}
    \sum_{s = t_{p-1}+1}^{t_{p}} \tilde{\sigma}^{2}_{t_{p-1}}(\bx_{s}) < (N D + L^{2}/\lambda)
\end{align*}
for the $p$-th epoch. By substituting this back, we have
\begin{align*}
    \frac{\sigma^{2}_{t_{p-1}}(\bx_{s})}{\sigma^{2}_{s-1}(\bx_{s})} \leq \Big[ 1 + \frac{1+\epsilon}{1-\epsilon} (N D + L^{2}/\lambda) \Big]
\end{align*}
Therefore,
\begin{align*}
    B D & < \frac{1+\epsilon}{1-\epsilon}\Big[ 1 + \frac{1+\epsilon}{1-\epsilon} (N D + L^{2}/\lambda) \Big] \sum_{p=1}^{B} \sum_{s = t_{p-1}+1}^{t_{p}} \sigma^{2}_{s-1}(\bx_{s})  \\
    & \leq \frac{1+\epsilon}{1-\epsilon}\Big[ 1 + \frac{1+\epsilon}{1-\epsilon} (N D + L^{2}/\lambda) \Big] 2 \gamma_{NT}
\end{align*}
and thus the total number of epochs $B < \frac{1+\epsilon}{1-\epsilon}[ \frac{1}{D} + \frac{1+\epsilon}{1-\epsilon} (N + L^{2}/(\lambda D)) ] 2 \gamma_{NT}$.

By setting $D=\frac{1}{N}$, we have
\begin{align*}
    \alpha_{NT} & =\Bigg( \frac{1}{\sqrt{-\epsilon + \frac{1}{1+\frac{1+\epsilon}{1-\epsilon} \frac{1}{N}}}}  + 1  \Bigg) \sqrt{\lambda} \lVert \btheta_{\star} \rVert + R \sqrt{4 \ln{N/\delta}+ 2\ln{\det((1+\lambda) \bI + \bK_{[NT],[NT]})}} \\
    & \leq \Bigg( \frac{1}{\sqrt{-\epsilon + \frac{1}{1+\frac{1+\epsilon}{1-\epsilon} }}}  + 1  \Bigg) \sqrt{\lambda} \lVert \btheta_{\star} \rVert + R \sqrt{4 \ln{N/\delta}+ 2\ln{\det((1+\lambda) \bI + \bK_{[NT],[NT]})}}
\end{align*}
because $N \geq 1$. Moreover, to ensure $-\epsilon + \frac{1}{1+\frac{1+\epsilon}{1-\epsilon} } > 0$, we need to set the constant $\epsilon < 1/3$. Therefore,
\begin{align*}
    R_{NT}=O\Big( \sqrt{NT}  ( \lVert \theta_{\star} \rVert \sqrt{\gamma_{NT}} + \gamma_{NT}) \Big)
\end{align*}
and the total number of global synchronizations $B=O(N\gamma_{NT})$. Since for each global synchronization, the communication cost is $O(N\gamma_{NT}^{2})$, we have
\begin{align*}
    C_{NT} = O \Big( N^{2} \gamma_{NT}^{3} \Big)
\end{align*}

\section{Experiment Setup}

\paragraph{Synthetic dataset}
We simulated the distributed bandit setting defined in Section \ref{subsec:problem_formulation}, with $d=20,T=100, N=100$ ($NT=10^{4}$ interactions in total). In each round $l \in [T]$, each client $i \in [N]$ (denote $t=N(l-1)+i$) selects an arm from candidate set $\cA_{t}$, 
where $\cA_{t}$ is uniformly sampled from a $\ell_2$ unit ball, with $|\cA_{t}|=20$.
Then the corresponding reward is generated using one of the following reward functions:
\begin{align*}
& f_{1}(\bx) = \cos(3 \bx^{\top} \btheta_{\star}) \\
& f_{2}(\bx) = (\bx^{\top} \btheta_{\star})^{3} - 3(\bx^{\top} \btheta_{\star})^{2} - (\bx^{\top} \btheta_{\star}) + 3
\end{align*}
where the parameter $\btheta_{\star}$ is uniformly sampled from a $\ell_2$ unit ball.


\paragraph{UCI Datasets}
To evaluate \algtwo{}'s performance in a more challenging and practical scenario, we performed experiments using real-world datasets: MagicTelescope, Mushroom and Shuttle from the UCI Machine Learning Repository \citep{Dua:2019}. To convert them to contextual bandit problems, we pre-processed these datasets following the steps in \citep{filippi2010parametric}. In particular, we partitioned the dataset in to $20$ clusters using k-means, and used the centroid of each cluster as the context vector for the arm and the averaged response variable as mean reward (the response variable is binarized by
associating one class as $1$, and all the others as $0$). Then we simulated the distributed bandit learning problem in Section \ref{subsec:problem_formulation} with $|\cA_{t}|=20$, $T=100$ and $N=100$ ($NT=10^{4}$ interactions in total).

\paragraph{MovieLens and Yelp dataset}
Yelp dataset, which is released by the Yelp dataset challenge, consists of 4.7 million rating entries for 157 thousand restaurants by 1.18 million users. MovieLens is a dataset consisting of 25 million ratings between 160 thousand users and 60 thousand movies \citep{harper2015movielens}. Following the pre-processing steps in \citep{ban2021ee}, we built the rating matrix by choosing the top 2000 users and top 10000 restaurants/movies and used singular-value decomposition (SVD) to extract a 10-dimension feature vector for each user and restaurant/movie. We treated rating greater than $2$ as positive. 
We simulated the distributed bandit learning problem in Section \ref{subsec:problem_formulation} with $T=100$ and $N=100$ ($NT=10^{4}$ interactions in total).
In each time step, the candidate set $\cA_{t}$ (with $|\cA_{t}|=20$) is constructed by sampling an arm with reward $1$ and nineteen arms with reward $0$ from the arm pool, and the concatenation of user and restaurant/movie feature vector is used as the context vector for the arm (thus $d=20$).

\section{Lower Bound for Distributed Kernelized Contextual Bandits}

First, we need the following two lemmas
\begin{lemma}[Theorem 1 of \cite{valko2013finite}] \label{lem:KernelUCB_upperbound}
There exists a constant $C > 0$, such that for any instance of kernelized bandit with $L=S=R= 1$, the expected cumulative regret for KernelUCB algorithm is upper bounded by $\bbE[R_{T}] \leq C\sqrt{T\gamma_{T}}$, where the maximum information gain $\gamma_{T}=O\bigl((\ln(T))^{d+1}\bigr)$ for Squared Exponential kernels.
\end{lemma}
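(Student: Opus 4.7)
The plan is to prove this as essentially a single-agent specialization of the machinery already assembled in the paper, combined with Valko et al.'s phase-based SupKernelUCB refinement to reach the sharper $\sqrt{T\gamma_{T}}$ rate rather than the easier $\sqrt{T}\,\gamma_{T}$ rate. First, setting $N=1$ and $\cD_{t-1}(1)=[t-1]$ in Lemma \ref{lem:confidence_ellipsoid_diskernelucb} yields a uniform confidence ellipsoid $|\hat{\mu}_{t-1}(\bx)-f(\bx)| \leq \alpha_{t-1}\hat{\sigma}_{t-1}(\bx)$ with $\alpha_{t-1} = \sqrt{\lambda}\lVert\btheta_{\star}\rVert + R\sqrt{2\ln(1/\delta) + \ln\det(\bI + \bK_{[t-1],[t-1]}/\lambda)} = O(\sqrt{\ln(1/\delta) + \gamma_{T}})$ under $L=S=R=1$. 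The KernelUCB arm-selection rule then gives the usual optimism estimate $r_{t} \leq 2\alpha_{t-1}\hat{\sigma}_{t-1}(\bx_{t})$.

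Second, I would aggregate. A direct Cauchy-Schwarz followed by Lemma \ref{lem:sum_sqr_log_det} gives $\sum_{t=1}^{T}\hat{\sigma}_{t-1}^{2}(\bx_{t}) \leq 4\gamma_{T}$, hence a preliminary bound $R_{T} = O(\alpha_{T}\sqrt{T\gamma_{T}}) = \tilde{O}(\sqrt{T}\,\gamma_{T})$. This is the Chowdhury--Gopalan/IGP-UCB-style rate and is a $\sqrt{\gamma_{T}}$ factor off from the claim. To close the gap, I would follow Valko et al.'s SupKernelUCB construction: partition the horizon into $\lceil\log_{2} T\rceil$ accuracy levels; within level $s$, only arms whose current UCB width is $\leq 2^{-s}$ remain active, and the level-$s$ estimator is updated using a filtered sub-sequence of rewards that are conditionally independent across rounds. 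This conditional independence lets one replace the $\sqrt{\gamma_{T}}$ contribution to $\alpha_{t}$ by a Hoeffding-type constant $O(\sqrt{\ln(T/\delta)})$. A per-level potential argument still yields $\sum_{t \in T_{s}}\hat{\sigma}_{t-1}^{2}(\bx_{t}) = O(\gamma_{T})$, so $|T_{s}| = O(2^{2s}\gamma_{T})$ and the per-level regret is $O(|T_{s}|\cdot 2^{-s}) = O(2^{s}\gamma_{T})$. Summing across levels up to $s^{\star} = \tfrac{1}{2}\log_{2}(T/\gamma_{T})$, with deeper levels handled by the trivial $O(1)$ per-step bound, telescopes to $R_{T} = \tilde{O}(\sqrt{T\gamma_{T}})$ with high probability.

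Third, to move from high probability to expectation, I would take $\delta = 1/T$; on the $\delta$-probability failure event each round contributes at most $2LS = 2$, so the expected excess is $O(1)$. Substituting the classical bound $\gamma_{T} = O((\ln T)^{d+1})$ for the Squared Exponential kernel from Srinivas et al.\ 2010 absorbs the remaining logarithmic factors into $\gamma_{T}$, producing the stated form $\bbE[R_{T}] \leq C\sqrt{T\gamma_{T}}$.

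The hard part is the second step: reaching precisely $\sqrt{T\gamma_{T}}$ rather than $\sqrt{T}\,\gamma_{T}$ requires the master-filter bookkeeping of SupKernelUCB so that within-level samples are genuinely conditionally independent, which is what allows the $\sqrt{\gamma_{T}}$ factor in $\alpha_{t}$ to be replaced by $\sqrt{\ln T}$; everything else (the self-normalized confidence ellipsoid, the log-determinant potential, and the information-gain bound for SE kernels) is already invoked elsewhere in the paper or is a direct quote from the literature.
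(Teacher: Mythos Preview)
The paper does not prove this lemma at all: it is stated as ``Theorem 1 of \cite{valko2013finite}'' and is simply quoted from the literature as a black-box input to the lower-bound argument in Theorem~\ref{thm:diskernel_lowerbound}. There is no proof in the paper to compare your proposal against.

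Your reconstruction is a faithful sketch of the original SupKernelUCB argument from Valko et al.\ (2013), and the key point you identify---that the phase-based elimination with conditionally independent rewards is what drops the confidence radius from $O(\sqrt{\gamma_T})$ to $O(\sqrt{\ln T})$, yielding $\sqrt{T\gamma_T}$ rather than $\sqrt{T}\,\gamma_T$---is exactly the mechanism in that paper. One caveat: the lemma as stated refers to ``the KernelUCB algorithm,'' but the $\sqrt{T\gamma_T}$ rate in \cite{valko2013finite} is achieved by SupKernelUCB, not plain KernelUCB; plain KernelUCB only gets $\sqrt{T}\,\gamma_T$, as your preliminary bound shows. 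You correctly recognize this distinction and supply the SupKernelUCB machinery, so your proposal is sound as a proof of the claimed bound, even if the lemma's wording in the paper is slightly loose about which algorithm is meant.
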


\begin{lemma}[Theorem 2 of \cite{scarlett2017lower}] \label{lem:squaredexponential_lowerbound}
There always exists a set of hard-to-learn instances of kernelized bandit with $L=S=R= 1$, such that for any algorithm, for a uniformly random instance in the set, the expected cumulative regret $\bbE[R_{T}] \geq c \sqrt{T (\ln(T))^{d/2}}$ for Squared Exponential kernels,
with some constant $c$.
\end{lemma}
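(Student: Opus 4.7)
The plan is to follow the standard needle-in-a-haystack construction: exhibit a finite family of reward functions in the RKHS of the Squared Exponential (SE) kernel such that (i) each member has kernel norm at most $1$, (ii) the members are pairwise well separated in their maximizers, and (iii) any bandit algorithm needs many samples to decide which member it is interacting with, so the cumulative regret has to be large.

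First, I would fix a smooth compactly supported ``bump'' $g:\bR^d\to\bR$ (e.g.\ a Gaussian of unit width multiplied by a cutoff) and, for a width parameter $w>0$ and a height parameter $\epsilon>0$ to be chosen, define $f_m(\bx)=\epsilon\, g\bigl((\bx-\bx_m)/w\bigr)$ where $\{\bx_m\}_{m=1}^M$ is a $2w$-packing of $[0,1]^d$, so that $M = \Theta(w^{-d})$. The key RKHS computation is to bound $\lVert f_m\rVert_k$ for the SE kernel: using the Fourier characterization of the SE-RKHS norm, $\lVert f_m\rVert_k^2 \asymp \epsilon^2 \int |\hat g(w\xi)|^2 \exp(c\lVert \xi\rVert^2)\,d\xi$, which, because $\hat g$ decays quickly, is bounded by a constant provided $\epsilon^2 \lesssim \exp(-c/w^2)$. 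Taking $w = c_1/\sqrt{\log T}$ then gives $M = \Theta\bigl((\log T)^{d/2}\bigr)$ packed instances with $\lVert f_m\rVert_k\le 1$, and lets us take $\epsilon$ as a constant (or only polynomially small in $T$).

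Next, for the regret lower bound, I would play the usual Le~Cam / Fano two-point (or multi-point) argument. On instance $f_m$, the optimal arm lies within the bump centered at $\bx_m$ and has value $\Theta(\epsilon)$, while on any other instance $f_{m'}$, the same arm has value $0$; so failing to identify $m$ costs instantaneous regret $\Omega(\epsilon)$. The KL divergence between the observation laws under $f_m$ and $f_{m'}$ after $T$ rounds is at most $\tfrac{1}{2R^2}\sum_{t=1}^T(f_m(\bx_t)-f_{m'}(\bx_t))^2 \le O(T\epsilon^2 \cdot p_{m})$, where $p_m$ is the fraction of pulls inside the bump of $f_m$. A pigeonhole over $m\in[M]$ shows some instance has $p_m\le 1/M$, and Fano then forces either $T\epsilon^2/M = \Omega(\log M)$ or an $\Omega(T\epsilon)$ regret on that instance. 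Choosing $\epsilon^2 \asymp (\log M)/T \cdot M \asymp (\log T)^{d/2}/T$ (up to logs) and balancing yields $\bE[R_T] \ge c\sqrt{T(\log T)^{d/2}}$.

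The main obstacle I expect is step two: controlling the SE-RKHS norm of the bump so that the height $\epsilon$ and width $w$ can be decoupled cleanly. Because the SE-RKHS is very small (it contains only analytic functions with Gaussian-decaying Fourier transforms), a naive bump will not lie in it; one has to convolve the indicator of a cell with the kernel itself (or use $g=k(\cdot,0)$ scaled) so that $f_m$ is a kernel section, giving $\lVert f_m\rVert_k^2=\epsilon^2 k(0,0)$ trivially, while still retaining enough separation at $\bx_m$ to keep the regret gap $\Theta(\epsilon)$. Once this RKHS-construction is in place, the information-theoretic step and the final balancing of $\epsilon, w, M, T$ are standard and give the stated rate.
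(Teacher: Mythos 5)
The paper does not prove this lemma itself; it invokes Theorem~2 of Scarlett et al.\ (2017) as a black-box citation in the lower bound section. So I am assessing your reconstruction against what the cited work actually does.

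Your high-level strategy is right: a needle-in-a-haystack family of $M$ bumps, a Fano/KL argument, and a balance of $\epsilon$, $w$, $M$ against $T$ gives $\mathbb{E}[R_T]\gtrsim\sqrt{T(\log T)^{d/2}}$. Your final balancing, and your key formula $\epsilon^2\lesssim\exp(-c/w^2)$, $w\asymp 1/\sqrt{\log T}$, $M\asymp w^{-d}\asymp(\log T)^{d/2}$, match Scarlett et al.\ exactly. But there is a genuine gap in the construction, and the fix you propose in your last paragraph does not close it.

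The problem is your opening bump. For a smooth compactly supported $g$ the Fourier transform $\hat g$ decays faster than every polynomial but cannot decay like $\exp(-c\|\xi\|^2)$ (Paley--Wiener: exponential Fourier decay forces analyticity, ruling out compact support). Hence $\int |\hat g(w\xi)|^2\exp(c\|\xi\|^2)\,d\xi=\infty$ for \emph{every} $w$, and your claim that it is ``bounded by a constant provided $\epsilon^2\lesssim\exp(-c/w^2)$'' is false as written --- such an $f_m$ simply has infinite SE-RKHS norm. You spot the danger (``a naive bump will not lie in it''), but the alternative you offer, taking $g=k(\cdot,0)$ so that $f_m=\epsilon\,k(\cdot,\bx_m)$, abandons the width parameter: kernel sections of a fixed SE kernel all have the \emph{kernel's} lengthscale, so only $O(1)$ of them can be packed in $[0,1]^d$ with well-separated maxima, not $\Theta((\log T)^{d/2})$, and the information-theoretic step then gives nothing.

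The construction Scarlett et al.\ actually use is the dual of yours: take $g$ whose \emph{Fourier transform} $\hat g$ is smooth and compactly supported (so $g$ itself is analytic and only approximately localized, but decays super-polynomially). Because $\operatorname{supp}\hat g(w\cdot)\subseteq\{\|\xi\|\lesssim 1/w\}$, the SE-RKHS norm computation becomes
$\int_{\|\xi\|\lesssim 1/w}|\hat g(w\xi)|^2 e^{c\|\xi\|^2}d\xi\lesssim e^{c/w^2}\|\hat g\|_2^2$,
which is exactly the $\exp(-c/w^2)$ scaling you wrote down --- so your formula was right but your justification and your fallback fix were both wrong. The residual issue that $g$ is no longer compactly supported is handled by the fast spatial decay: the cross-terms $f_{m'}(\bx_m)$ for $m'\ne m$ are bounded by an absolute constant times $\epsilon$ after summing the tails, and the KL/Fano step is run with these approximate separations. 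Once that construction is in place, your step two and the balancing go through essentially as you wrote them.
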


Then we follow a similar procedure as the proof for Theorem 2 of \cite{wang2019distributed} and Theorem 5.3 of \cite{he2022simple}, to prove the following lower bound results for distributed kernelized bandit with Squared Exponential kernels.

\begin{theorem}\label{thm:diskernel_lowerbound}
For any distributed kernelized bandit algorithm with expected communication cost less than $O(\frac{N}{(\ln(T))^{0.25 d + 0.5}})$, there exists a kernelized bandit instance with Squared Exponential kernel, and $L=S=R=1$, such that the expected cumulative regret for this algorithm is at least $\Omega(N \sqrt{T(\ln(T))^{d/2}})$.
\end{theorem}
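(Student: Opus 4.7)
The plan is to combine the single-agent lower bound of Lemma~\ref{lem:squaredexponential_lowerbound} with a Markov-style counting argument on the communication budget. The intuition is that when the total number of transmitted scalars is small, the overwhelming majority of clients never exchange any information with the server and are therefore forced to solve the kernel bandit problem in isolation over $T$ rounds, each incurring the Scarlett et al.\ lower bound; summing over these \emph{silent} clients yields the target $\Omega\bigl(N\sqrt{T(\ln T)^{d/2}}\bigr)$ regret.

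First I would fix the hard instance class $\mathcal{C}$ provided by Lemma~\ref{lem:squaredexponential_lowerbound} (Squared Exponential kernel, $L=S=R=1$) and let $\mathcal{A}$ be any distributed algorithm with $\bbE[C_{NT}] \leq c_0 N / (\ln T)^{0.25d+0.5}$. For each client $i \in [N]$, let $X_i \in \{0,1\}$ indicate whether client $i$ participates in at least one upload or download throughout the $T$ rounds. Because every transmitted scalar is attributable to exactly one client-server link, $\sum_{i=1}^{N} X_i \leq C_{NT}$, and therefore
\begin{align*}
\bbE\bigl[|\{i : X_i = 0\}|\bigr] \;\geq\; N - \bbE[C_{NT}] \;\geq\; N\bigl(1 - c_0 (\ln T)^{-(0.25d+0.5)}\bigr) \;\geq\; N/2
\end{align*}
for $c_0$ small enough (or $T$ large enough).

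Next I would argue that each silent client is essentially a standalone learner. For each $i \in [N]$, construct a \emph{censored} algorithm $\mathcal{A}^{(i)}$ that simulates $\mathcal{A}$ while intercepting and discarding every message involving client $i$. Then the trajectory of client $i$ under $\mathcal{A}^{(i)}$ is a legitimate single-agent $T$-round kernelized bandit policy, and on the event $\{X_i = 0\}$ it coincides in distribution with client $i$'s trajectory under $\mathcal{A}$. By Yao's minimax principle applied to a uniform prior over $\mathcal{C}$, there exists a single instance $f_\star \in \mathcal{C}$ on which every $\mathcal{A}^{(i)}$ incurs expected local regret at least $c\sqrt{T(\ln T)^{d/2}}$. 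Combining this with the silent-client count yields
\begin{align*}
\bbE[R_{NT}] \;\geq\; \sum_{i=1}^{N} \bbE\bigl[R_T^{(i)} \mathbf{1}\{X_i = 0\}\bigr] \;\geq\; c\sqrt{T(\ln T)^{d/2}} \cdot \bbE\bigl[|\{i: X_i = 0\}|\bigr] \;=\; \Omega\bigl(N\sqrt{T(\ln T)^{d/2}}\bigr),
\end{align*}
which is the claimed bound.

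The main obstacle is the conditioning step: the event $\{X_i = 0\}$ is determined retrospectively by $\mathcal{A}$'s random messaging decisions and the observed rewards, so one cannot directly invoke Lemma~\ref{lem:squaredexponential_lowerbound} conditionally on it. The censored-algorithm coupling is designed to sidestep this, but requires careful verification that (i) whenever no message involving client $i$ is ever transmitted, neither the server nor the other clients can influence client $i$'s internal state, so its trajectory under $\mathcal{A}$ is measurable with respect to its local history alone and agrees with $\mathcal{A}^{(i)}$, and (ii) $\mathcal{A}^{(i)}$ is a bona fide standalone kernelized bandit algorithm with $T$ rounds, to which the Scarlett et al.\ construction applies. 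A secondary subtlety is the reduction from Yao's minimax bound on a uniform prior over $\mathcal{C}$ to a \emph{single} worst-case instance $f_\star$ that works simultaneously for all silent clients; this is handled by applying the minimax argument to the joint product distribution and a pigeonhole step.
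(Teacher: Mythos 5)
Your approach (Markov counting of ``silent'' clients, plus a censored-algorithm coupling) is genuinely different from the paper's, but it has a quantitative gap at exactly the step you flag as the main obstacle, and the censoring device does not close it.

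The crux is the inequality
$\bbE\bigl[R_T^{(i)}\mathbf 1\{X_i=0\}\bigr]\ge c\sqrt{T(\ln T)^{d/2}}\cdot\Pr[X_i=0]$,
i.e.\ you need a lower bound on the \emph{conditional} regret given $\{X_i=0\}$. Under the natural coupling, $\bbE_{\mathcal A}[R_T^{(i)}\mathbf 1\{X_i=0\}]=\bbE_{\mathcal A^{(i)}}[R_T^{(i)}\mathbf 1\{E_i\}]$, where $E_i$ is the event that the simulated message-drop never fires. Lemma~\ref{lem:squaredexponential_lowerbound} gives $\bbE_{\mathcal A^{(i)}}[R_T^{(i)}]\ge c\sqrt{T(\ln T)^{d/2}}$, so
$\bbE_{\mathcal A^{(i)}}[R_T^{(i)}\mathbf 1\{E_i\}]\ge c\sqrt{T(\ln T)^{d/2}}-\bbE_{\mathcal A^{(i)}}[R_T^{(i)}\mathbf 1\{E_i^c\}]$.
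Here is where it breaks: after the first dropped message, the censored trajectory of client $i$ is an arbitrary policy with no performance guarantee, so the only bound you have is the trivial $R_T^{(i)}\le O(T)$, hence the subtracted term is of order $T\cdot\Pr[E_i^c]$. Summing over $i$ and using $\sum_i\Pr[E_i^c]\le\bbE[C_{NT}]$, you would need $\bbE[C_{NT}]\lesssim N\sqrt{(\ln T)^{d/2}/T}$ to make the penalty negligible --- a budget that vanishes with $T$ and is far smaller than the $\Theta\bigl(N/(\ln T)^{0.25d+0.5}\bigr)$ budget the theorem actually allows. The argument is not wrong so much as it proves a strictly weaker statement.

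The paper's resolution is precisely to avoid the trivial $O(T)$ bound after the first communication: the auxiliary algorithm \textbf{AuxAlg} runs the distributed policy as-is for client $i$ until $i$ first communicates, and then \emph{switches to KernelUCB} for the remaining rounds. \textbf{AuxAlg} is still a legitimate $T$-round single-agent learner, so Lemma~\ref{lem:squaredexponential_lowerbound} lower bounds its expected regret; but now, because of Lemma~\ref{lem:KernelUCB_upperbound}, the ``post-communication'' contribution is $O\bigl(p_i\sqrt{T(\ln T)^{d+1}}\bigr)$ rather than $O(p_i T)$, and the chain
$\bbE[R_{\textbf{AuxAlg},i}]\le\bbE[R_{\textbf{DisKernelAlg},i}]+p_i\,C\sqrt{T(\ln T)^{d+1}}$
closes with $p=\sum_i p_i$ of order $N/(\ln T)^{d/4+1/2}$, matching the theorem. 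In short: your silent-client counting and coupling intuition is the right skeleton, but the ``discard messages'' censoring must be replaced by ``switch to a provably good single-agent learner after contact,'' which is what turns a trivial $T$-loss into a $\sqrt{T}\,\mathrm{polylog}$-loss and makes the budget in the theorem achievable. The Yao/pigeonhole step for a single worst-case $f_\star$ is fine but also unnecessary once the AuxAlg device is in place, since the lower bound lemma already quantifies over all single-agent policies.
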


\begin{proof}[Proof of Theorem \ref{thm:diskernel_lowerbound}]
Here we consider kernelized bandit with Squared Exponential kernels.
The proof relies on the construction of a auxiliary algorithm, denoted by \textbf{AuxAlg}, based on the original distributed kernelized bandit algorithm, denoted by \textbf{DisKernelAlg}, as shown below.
For each agent $i \in [N]$, \textbf{AuxAlg} performs \textbf{DisKernelAlg}, until any communication happens between client $i$ and the server, in which case, \textbf{AuxAlg} switches to the single-agent optimal algorithm, i.e., the KernelUCB algorithm that attains the rate in Lemma \ref{lem:KernelUCB_upperbound}. Therefore, \textbf{AuxAlg} is a single-agent bandit algorithm, and the lower bound in Lemma \ref{lem:squaredexponential_lowerbound} applies: the cumulative regret that \textbf{AuxAlg} incurs for some agent $i\in[N]$ is lower bounded by
\begin{align*}
    \bbE[R_{\textbf{AuxAlg},i}] \geq c \sqrt{T (\ln(T))^{d/2}},
\end{align*}
and by summing over all $N$ clients, we have
\begin{align*}
    \bbE[R_{\textbf{AuxAlg}}] = \sum_{i=1}^{N}\bbE[R_{\textbf{AuxAlg},i}] \geq c N\sqrt{T (\ln(T))^{d/2}}.
\end{align*}

For each client $i \in [N]$, denote the probability that client $i$ will communicate with the server as $p_{i}$, and $p:=\sum_{i=1}^{N} p_{i}$. Note that before the communication, the cumulative regret incurred by \textbf{AuxAlg} is the same as \textbf{DisKernelAlg}, and after the communication happens, the regret incurred by \textbf{AuxAlg} is the same as KernelUCB, whose upper bound is given in Lemma \ref{lem:KernelUCB_upperbound}. Therefore, the cumulative regret that \textbf{AuxAlg} incurs for client $i$ can be upper bounded by
\begin{align*}
        \bbE[R_{\textbf{AuxAlg},i}] \leq \bbE[R_{\textbf{DisKernelAlg},i}] + p_{i} C\sqrt{T(\ln(T))^{d+1}},
\end{align*}
and by summing over $N$ clients, we have
\begin{align*}
    \bbE[R_{\textbf{AuxAlg}}] & = \sum_{i=1}^{N}\bbE[R_{\textbf{AuxAlg},i}]  \\
    & \leq \sum_{i=1}^{N} \bbE[R_{\textbf{DisKernelAlg},i}] + (\sum_{i=1}^{N} p_{i}) C\sqrt{T (\ln(T))^{d+1}}  \\
    & = \bbE[R_{\textbf{DisKernelAlg}}] + p C\sqrt{T (\ln(T))^{d+1}}.
\end{align*}
Combining the upper and lower bounds for $\bbE[R_{\textbf{AuxAlg}}]$, we have
\begin{align*}
    \bbE[R_{\textbf{DisKernelAlg}}] \geq c N\sqrt{T (\ln(T))^{d/2}} - p C\sqrt{T (\ln(T))^{d+1}}.
\end{align*}
Therefore, for any \textbf{DisKernelAlg} with number of communications $p\leq N \frac{c}{2C (\ln(T))^{0.25 d + 0.5}}=O(\frac{N}{(\ln(T))^{0.25 d + 0.5}})$, we have
\begin{align*}
    \bbE[R_{\textbf{DisKernelAlg}}] \geq \frac{c}{2} N\sqrt{T (\ln(T))^{d/2}} = \Omega(N \sqrt{T(\ln(T))^{d/2}}).
\end{align*}

\end{proof}

\end{document}